\documentclass{LMCS}

\overfullrule=2 pt
\usepackage{amssymb,amsmath,stmaryrd,xspace,url,epic,eepic,epsfig}
\usepackage{boxedminipage}
\usepackage{enumerate,hyperref}

\newcommand{\role}{\mathit{role}}
\newcommand{\At}{\mathit{At}}
\newcommand{\LV}{\mathit{LVar}}
\newcommand{\RV}{\mathit{RVar}}
\newcommand{\LA}{\mathit{LAto}}
\newcommand{\RA}{\mathit{RAto}}
\newcommand{\AC}{\mathit{AC}}
\newcommand{\eqAC}{=_{\mathit{AC}}}
\newcommand{\con}{\mathit{con}}

\newcommand{\INST}{\leq\!\!\stackrel{\scriptscriptstyle\bullet}{}}

\newcommand{\SLO}{\mathit{SLmO}}
\newcommand{\gtris}{\sqsubset}
\newcommand{\CDh}{^{[C/\widehat{D}]}}

\def\doi{6 (3:17) 2010}
\lmcsheading%
{\doi}
{1--31}
{}
{}
{Oct.~21, 2009}
{Sep.~\phantom04, 2010}
{}

\begin{document}

\title{Unification in the Description Logic $\mathcal{EL}$} 

\author[F.~Baader]{Franz Baader}
\address{Theoretical Computer Science, TU Dresden, Germany}
\email{\{baader,morawska\}@tcs.inf.tu-dresden.de} 
\author[B.~Morawska]{Barbara Morawska}
\address{\vskip-6 pt}
\thanks{Supported by DFG under grant BA 1122/14--1}

\keywords{knowledge representation, unification, Description Logics, complexity}
\subjclass{F.4.1, I.2.3, I.2.4}

\begin{abstract}
The Description Logic $\mathcal{EL}$ has recently drawn considerable
attention since, on the one hand, important inference problems such as
the subsumption problem are polynomial. On the other hand, $\mathcal{EL}$
is used to define large biomedical ontologies. Unification in
Description Logics has been proposed as a novel inference
service that can, for example, be used to detect redundancies
in ontologies. 
The main result of this paper is that unification in $\mathcal{EL}$ 
is decidable. More precisely, $\mathcal{EL}$-unification is
NP-complete, and thus has the same complexity as $\mathcal{EL}$-matching. 
We also show that, w.r.t. the unification
type, $\mathcal{EL}$ is less well-behaved:
it is of type zero, which in particular
implies that there are unification problems that have no finite
complete set of unifiers.
\end{abstract}

\maketitle

\section{Introduction}

Description logics (DLs) \cite{BCNMP03} are a family of logic-based
knowledge representation formalisms, which can be used to represent
the conceptual knowledge of an application domain in a structured
and formally well-understood way. They are employed in various application
domains, such as natural language processing, configuration of technical systems,
databases, and biomedical ontologies, but their most notable success so far is
the adoption of the DL-based language OWL \cite{HoPH03}
as standard ontology language for the
semantic web.

In DLs, concepts are formally described by
{\em concept terms\/}, i.e., expressions that
are built from concept names (unary predicates) and role names
(binary predicates) using concept constructors. The expressivity
of a particular DL is determined by which concept constructors
are available in it.
From a semantic point of view, concept names and concept terms represent sets
of individuals, whereas roles represent binary relations between individuals.
For example, using the concept name $\mathsf{Woman}$,
and the role name $\mathsf{child}$, the concept of {\em women having
a daughter\/} can be represented by the concept term
\[
\mathsf{Woman}\sqcap\exists\, \mathsf{child}.\mathsf{Woman},
\]
and the concept of {\em women having only daughters\/} by
\[
\mathsf{Woman}\sqcap\forall\, \mathsf{child}.\mathsf{Woman}.
\]
Knowledge representation systems based on DLs
provide their users with various inference services that allow them
to deduce implicit knowledge from the explicitly represented knowledge.
An important inference problem solved by DL systems is the subsumption problem:
the subsumption algorithm allows one to determine subconcept-superconcept relationships.  
For example, the concept
term $\mathsf{Woman}$ subsumes the concept term 
$\mathsf{Woman}\sqcap\exists\, \mathsf{child}.\mathsf{Woman}$ since all
instances of the second term are also instances of the first
term, i.e., the second term is always interpreted as a
subset of the first term. With the help of the subsumption
algorithm, a newly introduced concept term can automatically be
placed at the correct position in the hierarchy of the already
existing concept terms.

Two concept terms $C, D$ are {\em equivalent\/} ($C\equiv D$) if they subsume each other,
i.e., if they always represent the same set of individuals.
For example, the terms
$\forall\,\mathsf{child}.\mathsf{Rich}\sqcap\forall\, \mathsf{child}.\mathsf{Woman}$ and
$\forall\, \mathsf{child}.(\mathsf{Rich}\sqcap\mathsf{Woman})$ are
equivalent since the value restriction operator ($\forall\, r.C$)
distributes over the conjunction operator ($\sqcap$). If we replace the
value restriction operator by the existential restriction operator ($\exists\, r.C$),
then this equivalence no longer holds. However, for this operator, we still have
the equivalence 
$$
\exists\,\mathsf{child}.\mathsf{Rich}\sqcap\exists\,\mathsf{child}.(\mathsf{Woman}\sqcap\mathsf{Rich}) \equiv \exists\,\mathsf{child}.(\mathsf{Woman}\sqcap\mathsf{Rich}).
$$
The equivalence test can, for example, be used to find out whether a
concept term representing a particular notion has already been introduced,
thus avoiding multiple introduction of the same concept into the
concept hierarchy. This inference capability is very important if the knowledge
base containing the concept terms is very large, evolves during a long time
period, and is extended and maintained by several knowledge engineers.
However, testing for equivalence of concepts is not always sufficient to
find out whether, for a given concept term, there already exists another
concept term in the knowledge base describing the same notion. On the one hand,
different knowledge engineers may use different names for concepts, like $\mathsf{Male}$ versus
$\mathsf{Masculine}$. On the other hand, they may model on different levels
of granularity.
For example, assume that one knowledge engineer has defined the concept of 
{\em men loving fast cars\/} by the concept term
\[
\mathsf{Human}\sqcap\mathsf{Male}\sqcap \exists\,\mathsf{loves}.\mathsf{Sports\_car}.
\]
A second knowledge engineer might represent this notion in a somewhat different
way, e.g., by using the concept term
\[
\mathsf{Man}\sqcap \exists\,\mathsf{loves}.(\mathsf{Car}\sqcap\mathsf{Fast}).
\]
These two concept terms
are not equivalent, but they are meant to represent the same concept.
The two terms can obviously be made equivalent by substituting the concept name
$\mathsf{Sports\_car}$ in the first term by the concept term $\mathsf{Car}\sqcap\mathsf{Fast}$
and the concept name $\mathsf{Man}$ in the second term by the concept term 
$\mathsf{Human}\sqcap\mathsf{Male}$. 
This leads us to {\em unification of concept terms\/},
i.e., the question whether two concept terms can be made equivalent by applying
an appropriate substitution, where a substitution
replaces (some of the) concept names
by concept terms.
Of course, it is not necessarily the case that unifiable concept terms are
meant to
represent the same notion. A unifiability test can, however, suggest to the knowledge
engineer possible candidate terms.
A \emph{unifier} (i.e., a substitution whose application makes the two terms equivalent) then
proposes appropriate definitions for the concept names. In our example, we know that,
if we define $\mathsf{Man}$ as $\mathsf{Human}\sqcap\mathsf{Male}$ and $\mathsf{Sports\_car}$
as $\mathsf{Car}\sqcap\mathsf{Fast}$, then the concept terms 
$\mathsf{Human}\sqcap\mathsf{Male}\sqcap \exists\,\mathsf{loves}.\mathsf{Sports\_car}$
and
$\mathsf{Man}\sqcap \exists\,\mathsf{loves}.(\mathsf{Car}\sqcap\mathsf{Fast})$
are equivalent w.r.t.\ these definitions.

Unification in DLs was first considered in \cite{BaNa00} for a DL called 
$\mathcal{ FL}_0$, which has the concept constructors \emph{conjunction} ($\sqcap$),
\emph{value restriction} ($\forall\, r.C$), and the \emph{top concept} ($\top$). It was
shown that unification in $\mathcal{ FL}_0$ is decidable and ExpTime-complete, i.e.,
given an $\mathcal{ FL}_0$-unification problem, we can effectively decide whether
it has a solution or not, but in the worst-case, any such decision procedure
needs exponential time. This result was extended in \cite{BaKu01} to a more expressive
DL, which additionally has the role constructor \emph{transitive closure}. 
Interestingly, the \emph{unification type} of $\mathcal{ FL}_0$ had been determined
almost a decade earlier in \cite{Baad89}. In fact, as shown in \cite{BaNa00},
unification in $\mathcal{ FL}_0$ corresponds to unification modulo the
equational theory of idempotent Abelian monoids with several homomorphisms.
In \cite{Baad89} it was shown that, already for a single homomorphism, unification
modulo this theory has unification type zero, i.e., there are unification
problems for this theory that do not have a minimal complete set of unifiers.
In particular, such unification problems cannot have a finite complete
set of unifiers.

In this paper, we consider unification in the DL $\mathcal{ EL}$.
The $\mathcal{EL}$-family consists of inexpressive
DLs whose main distinguishing feature is that they
provide their users with \emph{existential restrictions} ($\exists\, r.C$) rather than value restrictions
($\forall\, r.C$)
as the main concept constructor involving roles. The core language of this family
is $\mathcal{EL}$, which has the top concept, conjunction,
and existential restrictions as concept constructors.
This family has recently drawn considerable attention since, on the one hand,
the subsumption problem stays tractable (i.e., decidable in polynomial time)
in situations where $\mathcal{ FL}_0$, the corresponding DL with value restrictions, 
becomes intractable: subsumption between concept terms is tractable
for both $\mathcal{ FL}_0$ and $\mathcal{EL}$ \cite{LeBr85,BaKM99}, 
but allowing the use of concept definitions
or even more expressive terminological formalisms makes $\mathcal{ FL}_0$ intractable
\cite{Nebe90,Baad90c,KaNi03,BaBL05}, whereas it leaves $\mathcal{EL}$ tractable \cite{Baad03e,Bran04,BaBL05}.
On the other hand, although of limited expressive power, $\mathcal{EL}$ is nevertheless used
in applications, e.g., to define biomedical ontologies. For example, both the
large medical ontology {\sc Snomed~ct}\footnote{%
http://www.ihtsdo.org/snomed-ct/
}
and the Gene Ontology\footnote{%
http://www.geneontology.org/
}
can be expressed in $\mathcal{EL}$, and the same is
true for large parts of the medical ontology {\sc Galen} \cite{ReHo97}.
The importance of $\mathcal{EL}$ can also be seen from the fact that the new 
OWL\,2 standard\footnote{%
See http://www.w3.org/TR/owl2-profiles/
}
contains a sub-profile OWL\,2\,EL, which is based on (an extension of) $\mathcal{EL}$.

Unification in $\mathcal{EL}$ has, to the best of our knowledge, not been investigated
before, but matching (where one side of the equation(s) to be solved does not contain variables)
has been considered in \cite{BaKu00,Kues01}. In particular, it was shown
in \cite{Kues01} that the decision problem, i.e., the problem of deciding whether
a given $\mathcal{EL}$-matching problem has a matcher or not, is NP-complete. Interestingly,
$\mathcal{ FL}_0$ behaves better w.r.t.\ matching than $\mathcal{EL}$: for $\mathcal{ FL}_0$, the decision
problem is tractable \cite{BKBM99}. In this paper, we show that, w.r.t.\ the unification
type, $\mathcal{ FL}_0$ and $\mathcal{EL}$ behave the same: just as $\mathcal{ FL}_0$, the DL $\mathcal{EL}$
has unification type zero. However, w.r.t.\ the decision problem, $\mathcal{EL}$ behaves
much better than $\mathcal{ FL}_0$: $\mathcal{EL}$-unification is NP-complete, 
and thus has the same complexity as $\mathcal{EL}$-matching.

Regarding unification in DLs that are more expressive than $\mathcal{EL}$ and $\mathcal{ FL}_0$,
one must look at the literature on unification in modal logics.
It is well-known that there is a close connection between modal logics and DLs \cite{BCNMP03}.
For example, the DL $\mathcal{ALC}$, which can be obtained by adding negation to $\mathcal{EL}$
or $\mathcal{FL}_0$, corresponds to the basic (multi-)modal logic $\mathsf{K}$. Decidability of
unification in $\mathsf{K}$ is a long-standing open problem. Recently, undecidability of unification
in some extensions of $\mathsf{K}$ (for example, by the universal modality) was shown in \cite{WoZa08}.
The undecidability results in \cite{WoZa08} also imply undecidability of unification in some expressive
DLs (e.g., $\mathcal{SHIQ}$ \cite{HoST00}).
The unification types of some modal (and related) logics have been determined by Ghilardi; for example
in \cite{Ghil00} he shows that $\mathsf{K}4$ and $\mathsf{S}4$ have unification type finitary.
Unification in sub-Boolean modal logics (i.e., modal logics that are not closed under all Boolean operations,
such as the modal logics corresponding to $\mathcal{EL}$ and $\mathcal{FL}_0$)
has, to the best of our knowledge, not been considered in the modal logic literature.

In addition to unification of concept terms as introduced until now, 
we will also consider unification w.r.t.\ a so-called acyclic TBox in this article.
Until now, we have only talked about concept terms, i.e., complex descriptions of concepts that
are built from concept and role names using the concept constructors of the given DL. In applications
of DLs, it is, of course, inconvenient to always use such complex descriptions when referring to concepts.
For this reason, DLs are usually also equipped with a terminological formalism. In its simplest
form, this formalism allows to introduce abbreviations for concept terms. For example, the two
concept definitions
$$
\mathsf{Mother} \equiv \mathsf{Woman}\sqcap\exists\,\mathsf{child}.\mathsf{Human}\ \ \mbox{and}\ \
\mathsf{Woman} \equiv \mathsf{Human}\sqcap\mathsf{Female}
$$
introduce the abbreviation $\mathsf{Woman}$ for the concept term $\mathsf{Human}\sqcap\mathsf{Female}$
and the abbreviation $\mathsf{Mother}$ for the concept term
$\mathsf{Human}\sqcap\mathsf{Female}\sqcap\exists\,\mathsf{child}.\mathsf{Human}$.
A finite set of such concept definitions is called an \emph{acyclic} TBox if it is unambiguous
(i.e., every concept name occurs at most once as left-hand side) and acyclic (i.e., there are no cyclic 
dependencies between concept definitions). These restrictions
ensure that every defined concept (i.e., concept name occurring on the left-hand side of a definition)
has a unique expansion to a concept term that it abbreviates. Inference problems like subsumption
and unification can also be considered w.r.t.\ such acyclic TBoxes. As mentioned above, the complexity
of the subsumption problem increases for the DL $\mathcal{ FL}_0$ if acyclic TBoxes are taken into
account \cite{Nebe90}. In contrast, for $\mathcal{EL}$, the complexity of the subsumption problem 
stays polynomial in the presence of acyclic TBoxes. We show that, for unification in $\mathcal{EL}$,
adding acyclic TBoxes is also harmless, i.e., unification 
in  $\mathcal{EL}$ \emph{w.r.t.\ acyclic TBoxes} is also NP-complete.

This article is structured as follows.
In the next section, we define the DL $\mathcal{EL}$ and unification in $\mathcal{EL}$
more formally. In Section~\ref{subs}, we recall the characterization of subsumption
and equivalence
in $\mathcal{EL}$ from \cite{Kues01}, and in Section~\ref{type:zero} we
use this to show that unification in $\mathcal{EL}$ has type zero. 
In Section~\ref{decision:problem}, we show that unification in  $\mathcal{EL}$
is NP-complete. 
The unification algorithm establishing the complexity upper bound is a typical ``guess and then test''
NP-algorithm, and thus it is unlikely that a direct implementation of this algorithm
will perform well in practice. In Section~\ref{goal:oriented:sect}, we introduce a more goal-oriented
unification algorithm for $\mathcal{EL}$, in which non-deterministic decisions are only made
if they are triggered by ``unsolved parts'' of the unification problem.
In Section~\ref{eq:th:sect},
we point out that our results for  $\mathcal{EL}$-unification imply that unification modulo the equational
theory of semilattices with monotone operators \cite{SoSt08}
is NP-complete and of unification type zero.
 
More information about Description Logics can be found in \cite{BCNMP03}, and
about unification theory in \cite{BaSn01}. This article is an extended version of a paper \cite{BaMo09}
published in the proceedings of the 20th international Conference on Rewriting Techniques and
applications (RTA'09). In addition to giving more detailed proofs, we have added
the goal-oriented unification algorithm (Section~\ref{goal:oriented:sect})
and the treatment of unification modulo acyclic TBoxes (Subsection~\ref{unif:mod:TBox}).

\section{Unification in $\mathcal{EL}$}

In this section, we first
define the syntax and semantics of $\mathcal{EL}$-concept terms as well as
the subsumption and the equivalence relation on these terms. Then, we introduce unification
of $\mathcal{EL}$-concept terms, and finally extend this notion to unification modulo 
an acyclic TBox.
 
\subsection{The Description Logic $\mathcal{EL}$}

Starting with a set $N_{\con}$ of concept names and a set $N_{\role}$
of role names, \emph{$\mathcal{EL}$-concept terms} are built using
the following concept constructors:
the nullary constructor \emph{top-concept} ($\top$),
the binary constructor \emph{conjunction} ($C\sqcap D$), and for every role name $r\in N_{\role}$,
the unary constructor \emph{existential restriction} ($\exists\, r.C$).
The semantics of $\mathcal{EL}$
is defined in the usual way, using the notion of an interpretation
$\mathcal{ I } = (\mathcal D_\mathcal{ I},{\cdot}^\mathcal{ I})$,
which consists of a nonempty domain $\mathcal  D_\mathcal{ I}$ and an interpretation
function ${\cdot}^\mathcal{ I}$ that assigns binary relations on $\mathcal  D_\mathcal{ I}$
to role names and subsets of $\mathcal  D_\mathcal{ I}$ to concept terms,
as shown in the semantics column of Table~\ref{el-syn-sem}.

\begin{table}[t] 
\begin{center}
\begin{tabular}{|l|c|c|}
 \hline Name &
Syntax & Semantics\\
\hline \hline
\rule{0cm}{.95em}concept name & $A$ & $A^{\mathcal I}\subseteq \mathcal  D_\mathcal{ I}$\\
\hline
\rule{0cm}{.95em}role name & $r$ & $r^{\mathcal I}\subseteq \mathcal  D_\mathcal{ I} \times \mathcal  D_\mathcal{ I}$\\
\hline
\rule{0cm}{.95em}top-concept & $\top$ & $\top^{\mathcal I} =  \mathcal  D_\mathcal{ I}$\\
\hline
\rule{0cm}{.95em}conjunction & $C \sqcap D$ & $(C \sqcap D) ^{\mathcal I} =  C^{\mathcal I}\cap D^{\mathcal I}$\\
\hline
\rule{0cm}{.95em}existential restriction & $\exists\, r.C$ & $(\exists\, r.C)^{\mathcal I} =
   \{x \mid \exists\, y : (x,y) \in r^{\mathcal I} \land y \in C^{\mathcal I}\}$\\
\hline \hline
\rule{0cm}{.95em}subsumption &$C \sqsubseteq D$& $C^{\mathcal I} \subseteq D^{\mathcal I}$\\
\hline
\rule{0cm}{.95em}equivalence &$C \equiv D$& $C^{\mathcal I} = D^{\mathcal I}$\\
\hline
\end{tabular}
\end{center}
\caption{Syntax and semantics of $\mathcal {EL}$}
\label{el-syn-sem}
\end{table}

The concept term $C$ \emph{is subsumed by} the concept term $D$ (written $C\sqsubseteq D$)
iff $C^{\mathcal I} \subseteq D^{\mathcal I}$ holds for all interpretations $\mathcal I$.
We say that $C$ \emph{is equivalent to} $D$ (written $C\equiv D$) iff $C\sqsubseteq D$ and
$D \sqsubseteq C$, i.e., iff $C^{\mathcal I} = D^{\mathcal I}$ holds for all 
interpretations $\mathcal I$. The concept term $C$ \emph{is strictly subsumed by}
the concept term $D$ (written $C\sqsubset D$) iff $C\sqsubseteq D$ and $C \not\equiv D$.
It is well-known 
that subsumption (and thus also equivalence) of $\mathcal{ EL}$-concept terms
can be decided in polynomial time \cite{BaKM99}.

\subsection{Unification of concept terms}

In order to define unification of concept terms, we first introduce the notion of a substitution
operating on concept terms. To this purpose, we partition the set of concepts names
into a set $N_v$ of concept variables (which may be replaced by substitutions)
and a set $N_c$ of concept constants (which must not be replaced by substitutions).
Intuitively, $N_v$ are the concept names that have possibly been given another name
or been specified in more detail in another concept term describing the same notion.
The elements of $N_c$ are the ones of which it is assumed that the same name
is used by all knowledge engineers (e.g., standardized names in a certain domain).

A {\em substitution\/} $\sigma$ is a mapping from $N_v$ into the set of all
$\mathcal{ EL}$-concept terms. This mapping is extended to concept terms in the obvious way,
i.e.,
\begin{enumerate}[$\bullet$]
\item $\sigma(A) := A$ for all $A\in N_c$,
\item $\sigma(\top) := \top$,
\item $\sigma(C\sqcap D) := \sigma(C) \sqcap \sigma(D)$, and
\item $\sigma(\exists\, r . C) := \exists\, r .\sigma(C)$.
\end{enumerate}

\begin{defi}
An \emph{$\mathcal{ EL}$-unification problem} is of the form 
$\Gamma = \{ C_1\equiv^? D_1,\ldots,C_n\equiv^? D_n\}$, 
where
$C_1, D_1,\ldots, C_n, D_n$ are $\mathcal{ EL}$-concept terms. The substitution $\sigma$
is a {\em unifier} (or \emph{solution}) of $\Gamma$ iff $\sigma(C_i) \equiv \sigma(D_i)$
for $i = 1,\ldots, n$.
In this case, $\Gamma$ is called \emph{solvable} or
{\em unifiable\/}.
\end{defi}

When we say that $\mathcal{ EL}$-unification is \emph{decidable}, then
we mean that the following decision problem is decidable: given an
$\mathcal{ EL}$-unification problem $\Gamma$, decide whether $\Gamma$ is solvable or not.
Accordingly, we say that $\mathcal{ EL}$-unification is \emph{NP-complete} if this decision problem
is NP-complete.

In the following, we introduce some standard notions from unification theory \cite{BaSn01},
but formulated for the special case of $\mathcal{ EL}$-unification rather than for an
arbitrary equational theory.
Unifiers can be compared using the instantiation preorder $\INST$.
Let $\Gamma$ be an $\mathcal{ EL}$-unification problem, $V$ the set of
variables occurring in $\Gamma$, and $\sigma, \theta$ two unifiers of this problem.
We define
$$
\sigma\INST \theta\ \ \mbox{iff}\ \ \mbox{there is a substitution $\lambda$ such that}\
\theta(X) \equiv \lambda(\sigma(X))\ \mbox{for all}\ X\in V.
$$
If $\sigma\INST \theta$, then we say that $\theta$ is an \emph{instance} of $\sigma$.

\begin{defi}
Let $\Gamma$ be an $\mathcal{ EL}$-unification problem. The set of substitutions
$M$ is called a \emph{complete set of unifiers} for $\Gamma$ iff it satisfies
the following two properties:
\begin{enumerate}[(1)]
\item every element of $M$ is a unifier of $\Gamma$;
\item if $\theta$ is a unifier of $\Gamma$, then there exists a unifier
          $\sigma\in M$ such that $\sigma\INST \theta$.
\end{enumerate}
The set $M$ is called a \emph{minimal complete set of unifiers} for $\Gamma$ iff
it additionally satisfies
\begin{enumerate}[(1)]
\item[(3)] if $\sigma, \theta\in M$, then $\sigma\INST \theta$ implies $\sigma = \theta$.
\end{enumerate}
\end{defi}

The unification type of a given unification problem is determined by
the existence and cardinality\footnote{%
It is easy to see that the cardinality of a minimal complete set of unifiers is
uniquely determined by the unification problem.
} 
of such a minimal complete set. 

\begin{defi}
Let $\Gamma$ be an $\mathcal{ EL}$-unification problem. 
This problem has type 
\begin{enumerate}[$\bullet$]
\item 
{\em unitary\/} iff it has a minimal complete set of unifiers
of cardinality $1$;
\item
{\em finitary\/} iff it has a finite minimal complete set of unifiers;
\item
{\em infinitary\/}
iff it has an infinite minimal complete set of unifiers;
\item
{\em zero\/} iff it
does not have a minimal complete set of unifiers.
\end{enumerate}
\end{defi}
 
Note that the set of all unifiers of a given $\mathcal{ EL}$-unification problem
is always a complete set of unifiers. However, this set is usually infinite
and redundant (in the sense that some unifiers are instances of others).
For a unitary or finitary $\mathcal{ EL}$-unification problem, all unifiers
can be represented by a finite complete set of unifiers, whereas for problems of
type infinitary or zero this is no longer possible. In fact, if a problem
has a finite complete set of unifiers $M$, then it also has a finite \emph{minimal}
complete set of unifiers, which can be obtained by iteratively removing
redundant elements from $M$.
 %
For an infinite complete set of unifiers, this approach of removing redundant
unifiers may be infinite, and the set reached in the limit need no longer
be complete. This is what happens for problems of type zero.
The difference between infinitary
and type zero is that a unification problem of type zero cannot even have
a non-redundant complete set of unifiers, i.e., every complete set of
unifiers must contain different unifiers $\sigma, \theta$ such that $\sigma\INST \theta$.
More information on unification type zero can be found in \cite{Baad89b}.

When we say that \emph{$\mathcal{ EL}$ has unification type zero}, we mean that there
exists an $\mathcal{ EL}$-unification problem that has type zero. Before we can prove 
in Section~\ref{type:zero} that this is indeed the case, we must have a closer look at 
equivalence in $\mathcal{ EL}$ in Section~\ref{subs}. But first, we consider unification
modulo acyclic TBoxes.

\subsection{Unification modulo acyclic TBoxes}\label{unif:mod:TBox}

A \emph{concept definition} is of the form $A\doteq C$ where $A$ is a concept
name and $C$ is a concept term. A \emph{TBox} $\mathcal{T}$ is a finite set of concept definitions 
such that no concept name occurs more than once on the left-hand side of
a concept definition in $\mathcal{T}$. The TBox $\mathcal{T}$ is called \emph{acyclic}
if there are no cyclic dependencies between its concept definitions. To be more precise,
we say that the concept name $A$ \emph{directly depends on} the concept name $B$ in a TBox $\mathcal{T}$
if $\mathcal{T}$ contains a concept definition $A\doteq C$ and $B$ occurs in $C$. Let
\emph{depends on} be the transitive closure of the relation  \emph{directly depends on}.
Then $\mathcal{T}$ contains a \emph{terminological cycle} if there is a concept name $A$ that depends
on itself. Otherwise, $\mathcal{T}$ is called \emph{acyclic}.
Given a TBox $\mathcal{T}$, we call a concept name $A$ a \emph{defined concept} if it
occurs as the left-side of a concept definition $A\doteq C$ in $\mathcal{T}$. All
other concept names are called \emph{primitive concepts}.

The interpretation
${\mathcal I}$ is a model of the TBox ${\mathcal T}$ iff $A^{\mathcal I} = C^{\mathcal I}$
holds for all concept definitions $A\doteq C$ in ${\mathcal T}$. Subsumption and equivalence
w.r.t.\ a TBox are defined as follows: $C\sqsubseteq_{\mathcal T} D$ ($C\equiv_{\mathcal T} D$)
iff $C^{\mathcal I} \subseteq D^{\mathcal I}$ ($C^{\mathcal I} = D^{\mathcal I}$) holds for all
models $\mathcal I$ of $\mathcal T$. 
 
Subsumption and equivalence w.r.t.\ an acyclic TBox can be reduced to subsumption and equivalence
of concept terms (without TBox) by \emph{expanding} the concept terms w.r.t.\ the TBox:
given a concept term $C$, its expansion ${C}^{\mathcal{T}}$ w.r.t.\ the acyclic TBox
$\mathcal{T}$ is obtained by exhaustively
replacing all defined concept names $A$ occurring on the left-hand side of concept
definitions $A\doteq C$ in $\mathcal{T}$ by their defining concept terms $C$. 
Given concept terms $C, D$, we have $C\sqsubseteq_{\mathcal{T}} D$ 
iff $C^{\mathcal{T}}\sqsubseteq D^{\mathcal{T}}$ 
\cite{BaNu03}. The same is true for equivalence, i.e.,
$C\equiv_{\mathcal{T}} D$ iff $C^{\mathcal{T}}\equiv D^{\mathcal{T}}$.
This expansion process may, however, result in an exponential blow-up \cite{Nebe90,BaNu03}, and thus
this reduction of subsumption and equivalence w.r.t.\ an acyclic TBox to subsumption and equivalence
without a TBox is not polynomial. Nevertheless, in $\mathcal{ EL}$, subsumption (and thus also equivalence)
w.r.t.\ acyclic TBoxes can be decided in polynomial time \cite{Baad03e}.

In our definition of unification modulo acyclic TBoxes, we assume that all defined concepts are
concept constants. In fact, defined concepts already have a definition in the given TBox, and
thus it does not make sense to introduce new ones for them by unification. In this setting, a
{\em substitution\/} $\sigma$ is a mapping from $N_v$ into the set of all $\mathcal{ EL}$-concept 
terms \emph{not containing any defined concepts}.\footnote{%
This restriction prevents the unifier from introducing cycles into the TBox.
}
The extension of $\sigma$ to concept terms is defined as in the previous subsection, and its application
to $\mathcal T$ is defined as
$$
\sigma(\mathcal T) := \{ A\doteq \sigma(C)\mid A\doteq C\in \mathcal{T}\}.
$$
 
\begin{defi}
An \emph{$\mathcal{ EL}$-unification problem modulo an acyclic TBox} is of the form
$\Gamma = \{ C_1\equiv_{\mathcal{T}}^? D_1,\ldots,C_n\equiv_{\mathcal{T}}^? D_n\}$,
where
$C_1, D_1,\ldots, C_n, D_n$ are $\mathcal{ EL}$-concept terms, and $\mathcal{T}$
is an acyclic $\mathcal{ EL}$-TBox. The substitution $\sigma$
is a {\em unifier} (or \emph{solution}) of $\Gamma$ \emph{modulo $\mathcal{T}$} iff 
$\sigma(C_i) \equiv_{\sigma(\mathcal{T})} \sigma(D_i)$
for $i = 1,\ldots, n$.
In this case, $\Gamma$ is called \emph{solvable modulo $\mathcal{T}$} or
{\em unifiable modulo $\mathcal{T}$\/}.
\end{defi}
Coming back to our example from the introduction, assume that one knowledge
engineer has written the concept definition
\[
\mathsf{Real\_man} \doteq
\mathsf{Human}\sqcap\mathsf{Male}\sqcap \exists\,\mathsf{loves}.\mathsf{Sports\_car}.
\]
to the TBox, whereas a second one has written the definition
\[
\mathsf{Stupid\_man} \doteq
\mathsf{Man}\sqcap \exists\,\mathsf{loves}.(\mathsf{Car}\sqcap\mathsf{Fast}),
\]
where all the concept names occurring on the left-hand side of these definitions
are primitive concepts. Then the substitution that replaces $\mathsf{Sports\_car}$
by $\mathsf{Car}\sqcap\mathsf{Fast}$ and $\mathsf{Man}$ by $\mathsf{Human}\sqcap\mathsf{Male}$
is a unifier of $\{\mathsf{Real\_man}\equiv_{\mathcal{T}}^? \mathsf{Stupid\_man}\}$
w.r.t.\ the TBox $\mathcal{T}$ consisting of these two definitions.

Using expansion, we can reduce unification modulo an acyclic TBox to unification
without a TBox. In fact, the following lemma is an easy consequence of the fact
that $\sigma(C^{\mathcal{T}}) = \sigma(C)^{\sigma(\mathcal{T})}$ holds for all
$\mathcal{ EL}$-concept terms $C$.

\begin{lem}\label{expansion:lem}
The substitution $\sigma$ is a unifier of
$\{ C_1\equiv_{\mathcal{T}}^? D_1,\ldots,C_n\equiv_{\mathcal{T}}^? D_n\}$ modulo $\mathcal{T}$
iff it is a unifier of 
$\{ C_1^{\mathcal{T}}\equiv^? D_1^{\mathcal{T}},\ldots,C_n^{\mathcal{T}}\equiv^? D_n^{\mathcal{T}}\}$.
\end{lem}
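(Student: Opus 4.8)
The plan is to prove both directions at once by reducing the claim to a chain of equivalences whose heart is the stated identity $\sigma(C^{\mathcal{T}}) = \sigma(C)^{\sigma(\mathcal{T})}$ together with the expansion characterization of equivalence modulo a TBox recalled earlier (the fact, from \cite{BaNu03}, that $E \equiv_{\mathcal{S}} F$ iff $E^{\mathcal{S}} \equiv F^{\mathcal{S}}$ for an acyclic TBox $\mathcal{S}$). Unfolding both notions of ``unifier'' into conditions on the individual equations, it suffices to show, for each pair $C, D$ among the given concept terms, that $\sigma(C) \equiv_{\sigma(\mathcal{T})} \sigma(D)$ holds iff $\sigma(C^{\mathcal{T}}) \equiv \sigma(D^{\mathcal{T}})$ holds.

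For a single equation I would argue as follows. By definition, $\sigma$ solves $C \equiv_{\mathcal{T}}^? D$ modulo $\mathcal{T}$ iff $\sigma(C) \equiv_{\sigma(\mathcal{T})} \sigma(D)$. Applying the expansion characterization of equivalence to the TBox $\sigma(\mathcal{T})$ and the terms $\sigma(C), \sigma(D)$ turns this into $\sigma(C)^{\sigma(\mathcal{T})} \equiv \sigma(D)^{\sigma(\mathcal{T})}$. Finally, the stated identity $\sigma(E^{\mathcal{T}}) = \sigma(E)^{\sigma(\mathcal{T})}$, instantiated at $E = C$ and $E = D$, rewrites this to $\sigma(C^{\mathcal{T}}) \equiv \sigma(D^{\mathcal{T}})$, which is exactly the condition that $\sigma$ unifies the expanded equation $C^{\mathcal{T}} \equiv^? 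D^{\mathcal{T}}$. Conjoining over $i = 1, \ldots, n$ then yields the lemma.

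The one point that needs care, and which I expect to be the main (though minor) obstacle, is the legitimacy of applying the expansion characterization to $\sigma(\mathcal{T})$: that characterization is only available for \emph{acyclic} TBoxes, so I must check that $\sigma(\mathcal{T})$ is again an acyclic $\mathcal{EL}$-TBox. It is still a TBox because $\sigma$ fixes the left-hand sides (defined concepts are constants), so each defined concept still occurs at most once on the left. Acyclicity is where the restriction built into the definition of a unifier modulo a TBox is used: since $\sigma$ maps every variable to a concept term containing no defined concepts, applying $\sigma$ to a definition body neither deletes any defined concept already present nor introduces a fresh one. Hence the \emph{directly depends on} relation between defined concepts is unchanged, the dependency graph coincides with that of $\mathcal{T}$, and $\sigma(\mathcal{T})$ inherits acyclicity from $\mathcal{T}$.

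For completeness I would also indicate where the assumed identity comes from, since it carries the real content of the lemma. It is proved by induction on $C$, using a well-founded measure that decreases both when passing to a subterm and when unfolding a defined concept (this unfolding terminates precisely because $\mathcal{T}$ is acyclic). The cases $C = \top$, $C = C_1 \sqcap C_2$ and $C = \exists\, r.C_1$ are immediate, since $\sigma$ and expansion both commute with these constructors; a primitive constant is fixed by $\sigma$ and by both expansions; for a defined concept $A$ with $A \doteq E \in \mathcal{T}$ one uses that $\sigma(\mathcal{T})$ contains $A \doteq \sigma(E)$ and applies the induction hypothesis to $E$; and the decisive variable case $C = X$ again uses that $\sigma(X)$ contains no defined concepts, so that $\sigma(X)^{\sigma(\mathcal{T})} = \sigma(X) = \sigma(X^{\mathcal{T}})$.
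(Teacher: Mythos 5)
Your proof is correct and follows exactly the route the paper intends: the paper gives no explicit proof, merely remarking that the lemma is ``an easy consequence'' of the identity $\sigma(C^{\mathcal{T}}) = \sigma(C)^{\sigma(\mathcal{T})}$ combined with the expansion characterization of $\equiv_{\mathcal{T}}$, and your chain of equivalences is precisely that argument spelled out. Your additional checks --- that $\sigma(\mathcal{T})$ is again an acyclic TBox (using the restriction that unifiers introduce no defined concepts) and the inductive proof of the commutation identity itself --- supply details the paper leaves implicit, and both are handled correctly.
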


Since expansion can cause an exponential blow-up, this is not a polynomial reduction. In the
remainder of this subsection, we show that there actually exists a polynomial-time reduction
of unification modulo an acyclic TBox to unification without a TBox. 

We say that the
$\mathcal{EL}$-unification problem $\Gamma$ is in \emph{dag-solved form}
if it can be written as $\Gamma = \{X_1\equiv^? C_1,\ldots, X_n\equiv^? C_n\}$,
where $X_1,\ldots,X_n$ are distinct concept variables such that, for all $i\leq n$, 
$X_i$ does not occur in $C_{i},\ldots,C_n$. For $i = 1,\ldots,n$, let $\sigma_i$ be
the substitution that maps $X_i$ to $C_i$ and leaves all other variables unchanged.
We define the substitution $\sigma_\Gamma$ as
$$
\sigma_\Gamma(X_i) := \sigma_n(\cdots(\sigma_i(X_i))\cdots)
$$
for $i = 1, \ldots,n$, and $\sigma_\Gamma(X) := X$ for all other variables $X$.
The following is an instance of a well-known fact from unification theory \cite{JoKi91}.

\begin{lem}\label{JK:lemma}
Let $\Gamma = \{X_1\equiv^? C_1,\ldots, X_n\equiv^? C_n\}$ 
be an $\mathcal{EL}$-unification problem in dag-solved form. Then, the set
$\{\sigma_\Gamma\}$ is a complete set of unifiers for $\Gamma$.
\end{lem}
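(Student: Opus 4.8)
The plan is to unfold the definition of $\sigma_\Gamma$ as an ordinary composition of the elementary substitutions $\sigma_i$, and then verify the two defining properties of a complete set of unifiers separately: property (1) by a purely syntactic argument that exploits the occurrence condition of the dag-solved form, and property (2) by an argument that passes through equivalence and uses that $\equiv$ is a congruence. First I would observe that $\sigma_\Gamma$ coincides with the composition $\sigma_n\circ\cdots\circ\sigma_1$ (applying $\sigma_1$ first): since the $X_i$ are distinct and each $\sigma_j$ affects only $X_j$, applying $\sigma_1,\ldots,\sigma_{i-1}$ to $X_i$ leaves it unchanged, so $(\sigma_n\circ\cdots\circ\sigma_1)(X_i)=\sigma_n(\cdots\sigma_i(X_i)\cdots)=\sigma_\Gamma(X_i)$, while on every other variable both sides are the identity.

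For property (1), that $\sigma_\Gamma$ is a unifier, I would prove the stronger \emph{syntactic} identity $\sigma_\Gamma(X_i)=\sigma_\Gamma(C_i)$. Here the dag condition does the work: since $X_j$ can occur in $C_i$ only for $j>i$, none of $X_1,\ldots,X_i$ occurs in $C_i$, hence $(\sigma_i\circ\cdots\circ\sigma_1)(C_i)=C_i$. Consequently $\sigma_\Gamma(C_i)=(\sigma_n\circ\cdots\circ\sigma_{i+1})(C_i)=(\sigma_n\circ\cdots\circ\sigma_{i+1})(\sigma_i(X_i))=\sigma_\Gamma(X_i)$. Syntactic equality of course entails $\sigma_\Gamma(X_i)\equiv\sigma_\Gamma(C_i)$, so $\sigma_\Gamma$ solves every equation of $\Gamma$.

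For property (2), let $\theta$ be an arbitrary unifier of $\Gamma$; I would show that $\theta$ itself witnesses $\sigma_\Gamma\INST\theta$, i.e.\ take $\lambda:=\theta$ and prove $\theta(X)\equiv\theta(\sigma_\Gamma(X))$ for every variable $X$. The key lemma is that for each $j$ and every $\mathcal{EL}$-concept term $t$ one has $\theta(\sigma_j(t))\equiv\theta(t)$: the two terms differ only in that, at the positions where $X_j$ occurs in $t$, the former carries $\theta(C_j)$ and the latter $\theta(X_j)$, and these are equivalent because $\theta$ unifies $X_j\equiv^? C_j$. Since $\equiv$ is a congruence with respect to conjunction and existential restriction, replacing equivalent subterms preserves equivalence. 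Peeling off the factors of $\sigma_\Gamma=\sigma_n\circ\cdots\circ\sigma_1$ one at a time and iterating this lemma yields $\theta(\sigma_\Gamma(X))\equiv\theta(X)$, so $\lambda=\theta$ works and $\theta$ is an instance of $\sigma_\Gamma$.

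The only genuinely delicate point is the congruence step in property (2): unlike in syntactic (free) unification, where $\sigma_\Gamma$ would be a most general unifier up to literal equality, here we must reason up to $\equiv$, so the argument hinges on equivalence being preserved under the $\mathcal{EL}$-constructors and on applying this uniformly to \emph{all} occurrences of $X_j$ inside a term. Once that fact is isolated, the remaining bookkeeping — that the $\sigma_j$ may be peeled off harmlessly thanks to the distinctness of the $X_i$ and the acyclic occurrence condition — is routine.
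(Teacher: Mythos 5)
Your proof is correct. Note that the paper does not actually prove this lemma---it is stated as ``an instance of a well-known fact from unification theory'' with a citation to Jouannaud and Kirchner---so there is no in-paper argument to compare against; what you have written is precisely the standard proof behind that citation: unfolding $\sigma_\Gamma$ as the composition $\sigma_n\circ\cdots\circ\sigma_1$, using the occurrence condition of the dag-solved form to get the syntactic identity $\sigma_\Gamma(X_i)=\sigma_\Gamma(C_i)$, and using that $\equiv$ is a congruence for the $\mathcal{EL}$-constructors to show $\theta(\sigma_j(t))\equiv\theta(t)$ and hence that any unifier $\theta$ is an instance of $\sigma_\Gamma$ via $\lambda=\theta$. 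Both halves are sound, and you correctly isolate the one point where the equational setting differs from syntactic unification.
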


There is a close relationship between acyclic TBoxes and unification problems
in dag-solved form. In fact, if $\mathcal{T}$ is an acyclic TBox, then there is an
enumeration $A_1,\ldots,A_n$ of the defined concepts in $\mathcal{T}$ such that
$\mathcal{T} = \{A_1\doteq C_1,\ldots, A_n\doteq C_n\}$ and $A_i$ does not occur
in $C_i,\ldots,C_n$. Consequently, the corresponding unification problem
$$
\Gamma(\mathcal{T}) := \{A_1\equiv^? C_1,\ldots, A_n\equiv^? C_n\}
$$
(where $A_1,\ldots, A_n$ are now viewed as concept variables) is in dag-solved form. 
In addition, it is easy to see that, for any $\mathcal{EL}$-concept term $C$, we have
$C^{\mathcal{T}} = \sigma_{\Gamma(\mathcal{T})}(C)$.


\begin{lem}
The $\mathcal{ EL}$-unification problem 
$\Gamma = \{ C_1\equiv_{\mathcal{T}}^? D_1,\ldots,C_n\equiv_{\mathcal{T}}^? D_n\}$ 
is solvable modulo the acyclic TBox $\mathcal{T}$ iff 
$\{ C_1\equiv^? D_1,\ldots,C_n\equiv^? D_n\}\cup \Gamma(\mathcal{T})$ is solvable.\footnote{%
Note that the defined concepts of $\mathcal{T}$ are treated as concept constants in
$\Gamma$, and as concept variables in $\{ C_1\equiv^? D_1,\ldots,C_n\equiv^? D_n\}\cup \Gamma(\mathcal{T})$.
}
\end{lem}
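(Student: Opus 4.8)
The plan is to prove the two directions separately, each time transforming a unifier of one problem into a unifier of the other, and in both cases reducing ``equivalence modulo $\mathcal{T}$'' to plain equivalence of the expanded equations via Lemma~\ref{expansion:lem}. Throughout I would use the two observations recorded above, namely that $C^{\mathcal{T}} = \sigma_{\Gamma(\mathcal{T})}(C)$ for every concept term $C$, and that expanding a defined concept $A_j \doteq C_j$ yields $A_j^{\mathcal{T}} = C_j^{\mathcal{T}}$. I would also use two standard properties of equivalence in $\mathcal{EL}$, both immediate from the semantics: it is a congruence with respect to the constructors, and it is stable under substitution, i.e.\ $P \equiv Q$ implies $\rho(P) \equiv \rho(Q)$ for every substitution $\rho$. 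The one point needing care is that the defined concepts $A_1,\dots,A_n$ play a double role: they are \emph{constants} in $\Gamma$ but \emph{variables} in $\{C_1\equiv^? D_1,\dots\}\cup\Gamma(\mathcal{T})$, so a unifier of the combined problem may map them and may place them inside the images of other variables.

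For the direction from left to right, let $\sigma$ be a unifier of $\Gamma$ modulo $\mathcal{T}$. By Lemma~\ref{expansion:lem}, $\sigma$ unifies $\{C_i^{\mathcal{T}}\equiv^? D_i^{\mathcal{T}}\}$. I would define the substitution $\theta := \sigma\circ\sigma_{\Gamma(\mathcal{T})}$ on $N_v\cup\{A_1,\dots,A_n\}$, so that $\theta(X)=\sigma(X)$ for $X\in N_v$ and $\theta(A_j)=\sigma(A_j^{\mathcal{T}})$. Then $\theta(C_i)=\sigma(\sigma_{\Gamma(\mathcal{T})}(C_i))=\sigma(C_i^{\mathcal{T}})\equiv\sigma(D_i^{\mathcal{T}})=\theta(D_i)$, and for each definition $A_j\doteq C_j$ one gets $\theta(A_j)=\sigma(A_j^{\mathcal{T}})=\sigma(C_j^{\mathcal{T}})=\theta(C_j)$ since $A_j^{\mathcal{T}}=C_j^{\mathcal{T}}$. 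Hence $\theta$ unifies $\{C_i\equiv^? D_i\}\cup\Gamma(\mathcal{T})$.

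For the converse, let $\theta$ be a unifier of the combined problem, so that $\theta(C_i)\equiv\theta(D_i)$ and $\theta(A_j)\equiv\theta(C_j)$ for all $i,j$. I would set $\sigma(X):=\theta(X)^{\mathcal{T}}$ for $X\in N_v$; since expansion removes every defined concept, $\sigma$ maps into concept terms without defined concepts, as the definition of unification modulo $\mathcal{T}$ requires. Two facts then finish the argument. First, for any concept term $E$ containing no defined concept one has $\sigma(E)=(\theta(E))^{\mathcal{T}}$: the defined-concept-free skeleton of $E$ is untouched by $\sigma_{\Gamma(\mathcal{T})}$, so applying $\theta$ and then expanding coincides with replacing each variable $X$ of $E$ by $\sigma_{\Gamma(\mathcal{T})}(\theta(X))=\sigma(X)$. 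Second, $\theta(C_i^{\mathcal{T}})\equiv\theta(C_i)$: as $C_i^{\mathcal{T}}$ arises from $C_i$ by successively replacing a defined concept $A_j$ by its body $C_j$, and $\theta(A_j)\equiv\theta(C_j)$, congruence of $\equiv$ propagates the equivalence through each replacement. Combining these with stability of $\equiv$ under substitution gives $\sigma(C_i^{\mathcal{T}})=(\theta(C_i^{\mathcal{T}}))^{\mathcal{T}}\equiv(\theta(C_i))^{\mathcal{T}}$, and symmetrically for $D_i$; finally $\theta(C_i)\equiv\theta(D_i)$ yields $(\theta(C_i))^{\mathcal{T}}\equiv(\theta(D_i))^{\mathcal{T}}$, so $\sigma(C_i^{\mathcal{T}})\equiv\sigma(D_i^{\mathcal{T}})$. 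By Lemma~\ref{expansion:lem}, $\sigma$ is a unifier of $\Gamma$ modulo $\mathcal{T}$.

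I expect the main obstacle to be the bookkeeping in the backward direction: one must simultaneously guarantee that the reconstructed $\sigma$ avoids defined concepts in its range and that it still solves the problem modulo $\mathcal{T}$. The technical heart is the interplay of the composition identity $\sigma(E)=(\theta(E))^{\mathcal{T}}$, the congruence argument $\theta(C_i^{\mathcal{T}})\equiv\theta(C_i)$ (which is precisely where the equations $\theta(A_j)\equiv\theta(C_j)$ are used), and stability of $\equiv$ under substitution; the double role of the defined concepts as constants versus variables is what makes these steps substantive rather than purely formal.
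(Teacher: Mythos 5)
Your proof is correct, and while your forward direction coincides with the paper's (both define the unifier of the combined problem as $\sigma\circ\sigma_{\Gamma(\mathcal{T})}$ and check the two groups of equations separately), your backward direction takes a genuinely different route. The paper observes that a unifier $\tau$ of the combined problem is in particular a unifier of $\Gamma(\mathcal{T})$, which is in dag-solved form, and invokes Lemma~\ref{JK:lemma} to factor $\tau$ as $\theta\circ\sigma_{\Gamma(\mathcal{T})}$ up to equivalence; the identity $C_i^{\mathcal{T}}=\sigma_{\Gamma(\mathcal{T})}(C_i)$ then immediately makes $\theta$ a unifier of the expanded system, and Lemma~\ref{expansion:lem} finishes. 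You instead construct the modulo-$\mathcal{T}$ unifier explicitly as $\sigma(X):=\theta(X)^{\mathcal{T}}$ and verify it by hand, via the composition identity $\sigma(E)=(\theta(E))^{\mathcal{T}}$ for defined-concept-free $E$, the congruence argument $\theta(C_i^{\mathcal{T}})\equiv\theta(C_i)$ (which is where the equations of $\Gamma(\mathcal{T})$ enter), and stability of $\equiv$ under substitution. What the paper's route buys is brevity: the factoring lemma absorbs all the congruence bookkeeping. What your route buys is self-containedness --- no appeal to the unification-theoretic completeness of dag-solved forms --- and it also handles explicitly a point the paper leaves implicit, namely that the reconstructed substitution must have a range free of defined concepts (the $\theta$ obtained from the instantiation preorder in the paper's argument is not a priori of this form, whereas your expansion $\theta(X)^{\mathcal{T}}$ guarantees it by construction).
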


\begin{proof}
Assume that $\theta$ is a unifier of 
$\Gamma = \{ C_1\equiv_{\mathcal{T}}^? D_1,\ldots,C_n\equiv_{\mathcal{T}}^? D_n\}$ 
modulo $\mathcal{T}$. Then it is a unifier of 
$\widehat{\Gamma} := 
\{ C_1^{\mathcal{T}}\equiv^? D_1^{\mathcal{T}},\ldots,C_n^{\mathcal{T}}\equiv^? D_n^{\mathcal{T}}\}$,
by Lemma~\ref{expansion:lem}. 
Since $C_i^{\mathcal{T}} = \sigma_{\Gamma(\mathcal{T})}(C_i)$ and
$D_i^{\mathcal{T}} = \sigma_{\Gamma(\mathcal{T})}(D_i)$, we have
$\widehat{\Gamma} = \{ \sigma_{\Gamma(\mathcal{T})}(C_i) \equiv^? \sigma_{\Gamma(\mathcal{T})}(D_i)\mid
1\leq i\leq n\}$. Consequently, if we define the substitution $\tau$ by setting
$\tau(X) := \theta(\sigma_{\Gamma(\mathcal{T})}(X))$ for all concept variables and defined concepts
$X$, then $\tau$ is a unifier of $\{ C_1\equiv^? D_1,\ldots,C_n\equiv^? D_n\}$. In addition,
since $\sigma_{\Gamma(\mathcal{T})}$ is a unifier of $\Gamma(\mathcal{T})$, $\tau$ is also a unifier
of $\{ C_1\equiv^? D_1,\ldots,C_n\equiv^? D_n\}\cup \Gamma(\mathcal{T})$.

Conversely, assume that $\tau$ is a unifier of
$\{ C_1\equiv^? D_1,\ldots,C_n\equiv^? D_n\}\cup \Gamma(\mathcal{T})$.
In particular, this implies that $\tau$ is a unifier of $\Gamma(\mathcal{T})$.
By Lemma~\ref{JK:lemma}, $\{\sigma_{\Gamma(\mathcal{T})}\}$ is a complete set of unifiers 
for $\Gamma(\mathcal{T})$, and thus there is a substitution $\theta$ such that
$\tau(X) = \theta(\sigma_{\Gamma(\mathcal{T})}(X))$ for all concept variables occurring
in the unification problem $\{ C_1\equiv^? D_1,\ldots,C_n\equiv^? D_n\}\cup \Gamma(\mathcal{T})$.
Since $C_i^{\mathcal{T}} = \sigma_{\Gamma(\mathcal{T})}(C_i)$ and
$D_i^{\mathcal{T}} = \sigma_{\Gamma(\mathcal{T})}(D_i)$, this implies that
$\theta$ is a unifier of 
$\{ C_1^{\mathcal{T}}\equiv^? D_1^{\mathcal{T}},\ldots,C_n^{\mathcal{T}}\equiv^? D_n^{\mathcal{T}}\}$,
and thus of
$\Gamma = \{ C_1\equiv_{\mathcal{T}}^? D_1,\ldots,C_n\equiv_{\mathcal{T}}^? D_n\}$
modulo $\mathcal{T}$, by Lemma~\ref{expansion:lem}.
\end{proof}

Since the size of $\Gamma(\mathcal{T})$ is basically the same as the size of $\mathcal{T}$,
the size of $\Gamma\cup \Gamma(\mathcal{T})$ is linear in the size of $\Gamma$ and $\mathcal{T}$.
Thus, the above lemma provides us with a 
polynomial-time reduction of $\mathcal{EL}$-unification w.r.t.\ acyclic TBoxes
to $\mathcal{EL}$-unification.

\begin{thm}\label{reduction:thm}
$\mathcal{EL}$-unification w.r.t.\ acyclic TBoxes can be reduced in polynomial time to
$\mathcal{EL}$-unification.
\end{thm}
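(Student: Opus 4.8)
The plan is to read the theorem off the preceding lemma together with a size estimate, so most of the content is already in place; the remaining work is to make the reduction explicit and check that it is polynomial. First I would fix the reduction as a map on instances: given an $\mathcal{EL}$-unification problem $\Gamma = \{C_1 \equiv_{\mathcal{T}}^? D_1,\ldots,C_n \equiv_{\mathcal{T}}^? D_n\}$ modulo the acyclic TBox $\mathcal{T}$, I would output the TBox-free problem
\[
\Gamma' := \{C_1 \equiv^? D_1,\ldots,C_n \equiv^? D_n\} \cup \Gamma(\mathcal{T}),
\]
with the convention (as in the footnote to the preceding lemma) that the defined concepts of $\mathcal{T}$, which are treated as constants in $\Gamma$, are reinterpreted as concept variables in $\Gamma'$. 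Correctness of the reduction, namely that $\Gamma$ is solvable modulo $\mathcal{T}$ if and only if $\Gamma'$ is solvable, is exactly the statement of the preceding lemma, so nothing new has to be proved there.

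Second, I would verify that the reduction is computable in polynomial time. The problem $\Gamma(\mathcal{T})$ is obtained from $\mathcal{T}$ purely syntactically, by replacing each definition $A \doteq C$ with the equation $A \equiv^? C$ and regarding the left-hand defined concepts as variables; hence its size coincides with the size of $\mathcal{T}$. Appending the equations $C_i \equiv^? D_i$ contributes the size of $\Gamma$. Thus $\Gamma'$ has size linear in the combined size of $\Gamma$ and $\mathcal{T}$, and it is built in linear time by a single syntactic pass. Together with the correctness statement this already yields the theorem.

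The only point worth emphasizing, and the one place where a more expensive route is tempting, is why this reduction avoids an exponential blow-up. Lemma~\ref{expansion:lem} already reduces solvability modulo $\mathcal{T}$ to solvability of the expanded problem $\{C_i^{\mathcal{T}} \equiv^? D_i^{\mathcal{T}}\}$, but literally performing the expansion can be exponential. The device that keeps the reduction polynomial is to encode the TBox itself as the dag-solved-form problem $\Gamma(\mathcal{T})$ and append it as additional constraints, rather than unfolding the definitions: since $C^{\mathcal{T}} = \sigma_{\Gamma(\mathcal{T})}(C)$ and $\{\sigma_{\Gamma(\mathcal{T})}\}$ is complete for $\Gamma(\mathcal{T})$ by Lemma~\ref{JK:lemma}, solving $\Gamma'$ recovers exactly the unifiers of the expanded problem without ever materializing the expansions. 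I therefore expect no genuine obstacle to remain; the substantive argument was carried out in establishing the preceding lemma, and this theorem is simply its polynomial-complexity refinement.
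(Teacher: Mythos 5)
Your proposal matches the paper exactly: the paper derives Theorem~\ref{reduction:thm} from the immediately preceding lemma (solvability of $\Gamma$ modulo $\mathcal{T}$ iff solvability of $\{C_1\equiv^? D_1,\ldots,C_n\equiv^? D_n\}\cup\Gamma(\mathcal{T})$) together with the observation that $\Gamma\cup\Gamma(\mathcal{T})$ has size linear in that of $\Gamma$ and $\mathcal{T}$, which is precisely your argument. Your closing remark on why appending $\Gamma(\mathcal{T})$ avoids the exponential blow-up of expansion is a faithful gloss on the paper's own motivation, not a deviation.
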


\section{Equivalence and subsumption in $\mathcal{ EL}$}\label{subs}
In order to characterize equivalence of $\mathcal{ EL}$-concept terms, the
notion  of a reduced $\mathcal{ EL}$-concept term is introduced in \cite{Kues01}.
A given $\mathcal{ EL}$-concept term can be transformed into an equivalent reduced term
by applying the following rules modulo associativity and commutativity of
conjunction:
$$
\begin{array}{l@{\ \ \ \ }l}
C\sqcap \top \rightarrow C & \mbox{for all $\mathcal{ EL}$-concept terms $C$}\\[.3em]
A\sqcap A \rightarrow A    &\mbox{for all concept names $A\in N_\con$}\\[.3em]
\exists\, r.C\sqcap \exists\, r.D \rightarrow \exists\, r.C & \mbox{for all $\mathcal{ EL}$-concept terms $C, D$
                                                               with $C\sqsubseteq D$}
\end{array}
$$
Obviously, these rules are equivalence preserving.
We say that the $\mathcal{ EL}$-concept term $D$ \emph{is reduced} if none of the above
rules is applicable to it (modulo associativity and commutativity of $\sqcap$), and that
$C$ \emph{can be reduced to} $D$ if $D$ can be obtained from $C$ by applying the above rules 
(modulo associativity and commutativity of $\sqcap$).
The $\mathcal{ EL}$-concept term $D$ is a \emph{reduced form} of $C$ if $C$ can be reduced
to $D$ and $D$ is reduced.
The following theorem is an easy consequence of Theorem~6.3.1 on page~181 of \cite{Kues01}.

%

\begin{thm}\label{cor1}
Let $C, D$ be $\mathcal{ EL}$-concept terms, and $\widehat{C}, \widehat{D}$ reduced forms
of $C, D$, respectively. Then $C\equiv D$ iff $\widehat{C}$ is identical to $\widehat{D}$
up to associativity and commutativity of $\sqcap$.
\end{thm}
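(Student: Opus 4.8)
The plan is to prove both directions separately, with the forward (``only if'') direction carrying essentially all the weight.

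The \emph{if} direction is immediate. Since each reduction rule is equivalence preserving, we have $C\equiv\widehat{C}$ and $D\equiv\widehat{D}$. Moreover, rearranging a term modulo associativity and commutativity of $\sqcap$ clearly does not change its semantics, so $\widehat{C}\eqAC\widehat{D}$ implies $\widehat{C}\equiv\widehat{D}$. Chaining these equivalences yields $C\equiv D$.

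For the \emph{only if} direction, assume $C\equiv D$. Using $C\equiv\widehat{C}$ and $D\equiv\widehat{D}$ again, it suffices to prove the following key claim: if $\widehat{C}$ and $\widehat{D}$ are \emph{reduced} $\mathcal{EL}$-concept terms with $\widehat{C}\equiv\widehat{D}$, then $\widehat{C}\eqAC\widehat{D}$. I would prove this claim by induction on the role depth (the maximal nesting depth of existential restrictions), relying on the structural characterization of subsumption in $\mathcal{EL}$: writing a term as a conjunction of \emph{atoms} (concept names and existential restrictions), one has $E\sqsubseteq F$ iff every top-level concept name of $F$ is a top-level concept name of $E$, and for every top-level atom $\exists\, r.F'$ of $F$ there is a top-level atom $\exists\, r.E'$ of $E$ with $E'\sqsubseteq F'$.

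The argument for the claim proceeds by matching top-level atoms. First consider concept names: from $\widehat{C}\sqsubseteq\widehat{D}$ and $\widehat{D}\sqsubseteq\widehat{C}$ the characterization gives that $\widehat{C}$ and $\widehat{D}$ have exactly the same set of top-level concept names, and because the rule $A\sqcap A\rightarrow A$ is inapplicable each such name occurs once on either side. Next consider existential atoms. Fix a top-level atom $\exists\, r.D'$ of $\widehat{D}$. From $\widehat{C}\sqsubseteq\widehat{D}$ we obtain a top-level atom $\exists\, r.C'$ of $\widehat{C}$ with $C'\sqsubseteq D'$, and from $\widehat{D}\sqsubseteq\widehat{C}$ a top-level atom $\exists\, r.D''$ of $\widehat{D}$ with $D''\sqsubseteq C'$, whence $D''\sqsubseteq D'$. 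Since $\widehat{D}$ is reduced, the rule $\exists\, r.C\sqcap\exists\, r.D\rightarrow\exists\, r.C$ is inapplicable, so no two distinct top-level $r$-atoms of $\widehat{D}$ have $\sqsubseteq$-comparable arguments; hence $\exists\, r.D''$ and $\exists\, r.D'$ must be the same atom, giving $D'\sqsubseteq C'\sqsubseteq D'$, i.e.\ $C'\equiv D'$. As $C'$ and $D'$ are reduced (subterms of a reduced term are reduced) and have strictly smaller role depth, the induction hypothesis yields $C'\eqAC D'$, so $\exists\, r.D'$ is matched by an $\eqAC$-equal atom of $\widehat{C}$. By symmetry every top-level existential atom of $\widehat{C}$ is likewise matched, and reducedness (again via the third rule) guarantees these matchings are injective, so the multisets of top-level atoms of $\widehat{C}$ and $\widehat{D}$ coincide up to $\eqAC$. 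Hence $\widehat{C}\eqAC\widehat{D}$.

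I expect the existential case of the induction to be the main obstacle: the delicate point is turning the two one-sided subsumptions $C'\sqsubseteq D'$ and $D''\sqsubseteq C'$ into an honest equivalence $C'\equiv D'$ between the arguments, which is exactly where reducedness of $\widehat{D}$ (incomparability of distinct same-role existentials) is needed to force $D''$ and $D'$ to be the same atom. Care is also required for the base case (role depth $0$, i.e.\ terms built only from concept names and $\top$), where the concept-name matching alone, together with the treatment of $\top$ by the rule $C\sqcap\top\rightarrow C$, already gives $\widehat{C}\eqAC\widehat{D}$.
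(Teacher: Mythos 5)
Your proof is correct in substance, but it is worth noting that the paper does not actually prove this theorem at all: it is cited as ``an easy consequence of Theorem~6.3.1 on page~181 of \cite{Kues01}'', so you have supplied a self-contained argument where the paper defers to the literature. Your induction on role depth, matching top-level atoms via the two one-sided subsumptions and using reducedness (incomparability of distinct same-role existential conjuncts) to force $D''$ and $D'$ to coincide, is sound; the injectivity of the resulting $\eqAC$-matching and the treatment of $\top$ in the base case are also handled correctly. The one point you should make explicit is a dependency issue rather than a mathematical gap: your argument rests on the recursive characterization of subsumption (every top-level atom of $F$ must be ``covered'' by a top-level atom of $E$), which in this paper appears as Corollary~\ref{cor2} and is \emph{derived from} the very theorem you are proving. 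To avoid circularity you must establish that characterization independently of reduced forms --- which is indeed possible (and is essentially how K\"usters proceeds, via description trees and homomorphisms, or directly by a canonical-model argument from the semantics) --- and the same caveat applies to the auxiliary facts you use, such as ``subterms of reduced terms are reduced'', which fortunately follows directly from the definition of the rewrite system. With that ordering made explicit, your proof stands, and it has the advantage of making the paper's black-box citation fully transparent.
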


This theorem can also be used to derive a recursive characterization of
subsumption in $\mathcal{EL}$. In fact,
if $C\sqsubseteq D$, then $C\sqcap D\equiv C$,
and thus $C$ and $C\sqcap D$ have the same reduced form. Thus, during
reduction, all concept names and existential restrictions of $D$ must be 
``eaten up'' by corresponding concept names and existential restrictions
of $C$.

\begin{cor}\label{cor2}
Let
$C = A_1\sqcap\ldots\sqcap A_k\sqcap\exists\, r_1.C_1\sqcap\ldots\sqcap \exists\, r_m.C_m$ and
$D = B_1\sqcap\ldots\sqcap B_\ell\sqcap\exists\, s_1.D_1\sqcap\ldots\sqcap \exists\, s_n.D_n$,
where $A_1,\ldots, A_k, B_1,\ldots,B_\ell$ are concept names.
Then $C\sqsubseteq D$ iff $\{B_1,\ldots, B_\ell\}\subseteq\{A_1,\ldots,A_k\}$ and
for every $j, 1\leq j\leq n$, there exists an $i,1\leq i\leq m$, such that $r_i = s_j$
and $C_i\sqsubseteq D_j$.
\end{cor}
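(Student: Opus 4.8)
The plan is to prove the two directions separately, handling the ``if'' direction directly from the semantics and the ``only if'' direction by reducing subsumption to equivalence and invoking Theorem~\ref{cor1}. For the ``if'' direction, assume $\{B_1,\dots,B_\ell\}\subseteq\{A_1,\dots,A_k\}$ and that for each $j$ there is an $i$ with $r_i=s_j$ and $C_i\sqsubseteq D_j$. Fix an interpretation $\mathcal I$ and $x\in C^{\mathcal I}$. Then $x$ lies in every $A_p^{\mathcal I}$, hence in every $B_q^{\mathcal I}$; and for each $j$, choosing the matching $i$, the fact $x\in(\exists\, r_i.C_i)^{\mathcal I}$ yields a $y$ with $(x,y)\in r_i^{\mathcal I}=s_j^{\mathcal I}$ and $y\in C_i^{\mathcal I}\subseteq D_j^{\mathcal I}$, so $x\in(\exists\, s_j.D_j)^{\mathcal I}$. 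Thus $x\in D^{\mathcal I}$, giving $C\sqsubseteq D$. Here the hypotheses $C_i\sqsubseteq D_j$ are used as ready-made semantic inclusions, so no recursion is needed, and the degenerate cases (an empty conjunction being $\top$) are covered by the same vacuous reasoning.

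For the ``only if'' direction I would use that $C^{\mathcal I}\subseteq D^{\mathcal I}$ is equivalent to $C^{\mathcal I}=(C\sqcap D)^{\mathcal I}$, so that $C\sqsubseteq D$ iff $C\equiv C\sqcap D$. By Theorem~\ref{cor1} this means $C$ and $C\sqcap D$ have the same reduced form up to associativity and commutativity of $\sqcap$. Since the reduction rules only merge duplicate concept names (via $A\sqcap A\to A$) and never create or delete a name outright, the set of top-level concept names of a reduced form is invariant under reduction; it is $\{A_1,\dots,A_k\}\cup\{B_1,\dots,B_\ell\}$ for $C\sqcap D$ and $\{A_1,\dots,A_k\}$ for $C$. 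Equality of the two reduced forms forces these sets to coincide, i.e.\ $\{B_1,\dots,B_\ell\}\subseteq\{A_1,\dots,A_k\}$, which is the first required condition.

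The existential condition is the heart of the argument and the step I expect to be the main obstacle. The surviving top-level existential restrictions of a reduced form arise from the $\exists\, r_i.C_i$ and $\exists\, s_j.D_j$ by repeated application of $\exists\, r.E\sqcap\exists\, r.F\to\exists\, r.E$ whenever $E\sqsubseteq F$; the difficulty is that an $\exists\, s_j.D_j$ coming from $D$ may be eliminated through a \emph{chain} of such steps rather than directly by a single conjunct of $C$, so one must argue transitively that it is ultimately dominated by a surviving existential. Since the reduced forms of $C\sqcap D$ and of $C$ agree, that surviving existential must be one inherited from some $\exists\, r_i.C_i$ with $r_i=s_j$ and reduced filler subsumed by $D_j$, which yields $C_i\sqsubseteq D_j$. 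Making this domination argument precise proceeds naturally by induction on role depth, the subsumptions $C_i\sqsubseteq D_j$ living at strictly smaller depth.

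A cleaner, fully self-contained alternative that sidesteps the reduction bookkeeping is to build the canonical tree model $\mathcal I_C$ of $C$ and establish its characteristic property: its root $\rho_C$ satisfies exactly those concepts $F$ with $C\sqsubseteq F$. This rests on two standard facts, that $\mathcal{EL}$-membership is preserved along homomorphisms, and that from any $w\in C^{\mathcal J}$ there is a homomorphism $\mathcal I_C\to\mathcal J$ sending $\rho_C$ to $w$. Applying $\rho_C\in D^{\mathcal I_C}$ then reads off both conditions at once: the top-level names of $D$ lie among those of $C$, and, since the only successors of $\rho_C$ are the roots $\rho_{C_i}$ reached via $r_i$, for each $j$ there is an $i$ with $r_i=s_j$ and $\rho_{C_i}\in D_j^{\mathcal I_{C_i}}$, i.e.\ $C_i\sqsubseteq D_j$ by the characteristic property applied at $\rho_{C_i}$.
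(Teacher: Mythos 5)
Your proof is correct and takes essentially the same route as the paper, which likewise obtains the ``only if'' direction from the observation that $C\sqsubseteq D$ iff $C\sqcap D\equiv C$ together with Theorem~\ref{cor1}, arguing that all concept names and existential restrictions of $D$ must be ``eaten up'' during reduction by corresponding atoms of $C$; your chain-chasing refinement (which is really an induction along the finite reduction sequence rather than on role depth) just makes the paper's one-line sketch precise, and the semantic argument for the ``if'' direction is the obvious one. The canonical tree-model alternative you mention is also a valid standard proof, but it is not the route the paper takes.
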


Note that this corollary also covers the cases where some of
the numbers $k, \ell, m, n$ are zero. The empty conjunction should then be read as $\top$.
The following lemma, which is an immediate consequence of this
corollary, will be used in our proof that $\mathcal{ EL}$ has unification type zero.

\begin{lem}\label{lem1}
If $C, D$ are reduced $\mathcal{ EL}$-concept terms such that
$\exists\, r.D\sqsubseteq C$, then $C$ is either $\top$, or of the
form $C = \exists\, r.C_1\sqcap\ldots\sqcap\exists\, r.C_n$ where
$n\geq 1$;
$C_1,\ldots,C_n$ are reduced and pairwise incomparable w.r.t.\ subsumption;
and $D\sqsubseteq C_1, \ldots, D\sqsubseteq C_n$.
Conversely, 
if $C, D$ are $\mathcal{ EL}$-concept terms
such that $C = \exists\, r.C_1\sqcap\ldots\sqcap\exists\, r.C_n$ and
$D\sqsubseteq C_1, \ldots, D\sqsubseteq C_n$, then $\exists\, r.D\sqsubseteq C$.
\end{lem}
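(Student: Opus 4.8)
The plan is to derive both directions directly from the recursive subsumption characterization of Corollary~\ref{cor2}, using the definition of \emph{reduced} to obtain the extra structural properties. For the forward direction, I would write $C$ in the shape required by Corollary~\ref{cor2}, say $C = A_1\sqcap\cdots\sqcap A_k\sqcap\exists\, s_1.E_1\sqcap\cdots\sqcap\exists\, s_p.E_p$ with $A_1,\ldots,A_k$ concept names, and observe that $\exists\, r.D$ has the same shape with no top-level concept names and a single existential restriction $\exists\, r.D$. Applying Corollary~\ref{cor2} to $\exists\, r.D\sqsubseteq C$, the inclusion of concept-name sets reads $\{A_1,\ldots,A_k\}\subseteq\emptyset$, forcing $k=0$; and the condition on existential restrictions says that each $\exists\, s_j.E_j$ must be matched by the only existential restriction available on the left, namely $\exists\, r.D$, hence $s_j=r$ and $D\sqsubseteq E_j$ for every $j$. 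If $p=0$ this leaves $C=\top$ (the empty conjunction), the first alternative; otherwise I set $n:=p$ and $C_i:=E_i$, obtaining $C=\exists\, r.C_1\sqcap\cdots\sqcap\exists\, r.C_n$ with $n\geq 1$ and $D\sqsubseteq C_i$ for all $i$.

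It then remains to check the two reducedness side conditions. Each $C_i$ is reduced because any reduction rule applicable inside $C_i$ could be applied to the same subterm of $C$ (beneath the leading $\exists\, r$), contradicting that $C$ is reduced; so reducedness is inherited by immediate subterms. For pairwise incomparability I would argue by contradiction: if $C_i\sqsubseteq C_j$ for some $i\neq j$, then the third reduction rule $\exists\, r.C_i\sqcap\exists\, r.C_j\rightarrow\exists\, r.C_i$ (used modulo associativity and commutativity of $\sqcap$) would apply to $C$, again contradicting reducedness. Hence no $C_i$ is subsumed by another, i.e., the $C_i$ are pairwise incomparable.

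For the converse, I would simply apply Corollary~\ref{cor2} once more to $\exists\, r.D\sqsubseteq C$, where now $C=\exists\, r.C_1\sqcap\cdots\sqcap\exists\, r.C_n$: the concept-name inclusion is trivial since both sides have no top-level concept names, and each existential restriction $\exists\, r.C_j$ of $C$ is matched by $\exists\, r.D$ (same role $r$, and $D\sqsubseteq C_j$ by hypothesis), which is exactly the condition Corollary~\ref{cor2} requires.

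The whole argument is essentially bookkeeping on top of Corollary~\ref{cor2}; the only step that is not purely mechanical is the translation between ``$C$ is reduced'' and the two structural properties (each $C_i$ reduced, and the $C_i$ pairwise incomparable). This is the point I expect to require the most care, and I would handle it via the third reduction rule together with the observation that the reduction rules may be applied in any subterm context.
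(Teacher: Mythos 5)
Your proposal is correct and follows the same route the paper intends: the paper gives no explicit proof, stating only that the lemma is an immediate consequence of Corollary~\ref{cor2}, and your argument is precisely the bookkeeping that remark leaves implicit (your inline derivation of the two reducedness side conditions also matches parts (1) and (2) of Lemma~\ref{1:new:lem}, which the paper records separately just afterwards). No gaps.
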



The following lemma states several other
obvious consequences of Corollary~\ref{cor2}.

\begin{lem}\label{1:new:lem}\hfill
\begin{enumerate}[\em(1)]
\item
  The existential restriction $\exists\, r.C$ is reduced iff $C$ is reduced.
\item
  Let $C_1\sqcap\ldots\sqcap C_n$ be concept names or existential restrictions.
  Then the conjunction $C_1\sqcap\ldots\sqcap C_n$ is reduced iff 
  $C_1, \ldots, C_n$ are reduced and pairwise incomparable w.r.t.\ subsumption.
\item
  Let $C = C_1\sqcap\ldots\sqcap C_m$ and $D = D_1\sqcap\ldots\sqcap D_n$ be
  conjunctions of $\mathcal{EL}$-concept terms. If,
  for all $i, 1\leq i\leq n$, there exists $j, 1\leq j\leq m$, such that
  $C_j\sqsubseteq D_i$, then $C\sqsubseteq D$.
  If $D_1,\ldots, D_n$ are concept names or existential restrictions,
  then the implication in the other direction also holds.
\end{enumerate}
\end{lem}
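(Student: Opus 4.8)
The plan is to treat each of the three parts as a direct corollary of the subsumption characterization in Corollary~\ref{cor2}, using the correspondence between ``reduced'' and ``no reduction rule applicable'' spelled out just before Theorem~\ref{cor1}. For part~(1), I would argue by unwinding the definition of reduced. The top-rule and the $A\sqcap A$-rule cannot apply at the outermost level of $\exists\,r.C$ since it is neither a conjunction nor contains a redundant conjunct there; likewise the existential-restriction rule needs two top-level existential conjuncts and $\exists\,r.C$ has only one. Hence the only place a reduction rule can fire in $\exists\,r.C$ is strictly inside $C$, which gives ``$\exists\,r.C$ reduced iff $C$ reduced'' immediately. (One should note that a rule applied inside $C$ is exactly a rule applicable to $C$, so the equivalence is really just the observation that reducibility propagates through the $\exists\,r.\cdot$ constructor.)

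For part~(2), where $C_1\sqcap\ldots\sqcap C_n$ is a conjunction of concept names and existential restrictions, I would again check which rules can apply. Since there is no $\top$ among the $C_i$ (each is a concept name or an $\exists$-restriction), the top-rule is irrelevant. The $A\sqcap A$-rule applies precisely when two conjuncts are the same concept name, and the $\exists\,r.C\sqcap\exists\,r.D\to\exists\,r.C$ rule applies (using Corollary~\ref{cor2} at the level of the restricted concepts) precisely when two conjuncts $\exists\,r.C_i$, $\exists\,r.C_j$ have comparable fillers. Both of these are instances of ``two of the $C_i$ are comparable w.r.t.\ $\sqsubseteq$'' (a concept name $A$ being trivially comparable to itself, and $\exists\,r.E\sqsubseteq\exists\,r.F$ exactly when $E\sqsubseteq F$ by Corollary~\ref{cor2}). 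Combining this with part~(1), which tells us the $C_i$ that are existential restrictions are reduced iff their fillers are, yields: the conjunction is reduced iff no rule fires iff the $C_i$ are individually reduced \emph{and} pairwise incomparable.

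For part~(3), the forward implication is the ``easy half'': if every $D_i$ has some $C_j$ below it, then $C=C_1\sqcap\ldots\sqcap C_m\sqsubseteq C_j\sqsubseteq D_i$ for each $i$, so $C$ is subsumed by each conjunct of $D$ and hence by $D$ itself. For the converse, under the extra hypothesis that each $D_i$ is a concept name or existential restriction, I would fix $i$ and apply Corollary~\ref{cor2} to $C\sqsubseteq D_i$: if $D_i$ is a concept name $B$, then $C\sqsubseteq B$ forces $B$ to be one of the top-level concept names of $C$, i.e.\ some $C_j = B \sqsubseteq D_i$; if $D_i = \exists\,s.E$, then $C\sqsubseteq\exists\,s.E$ forces some existential conjunct $C_j=\exists\,s.E'$ of $C$ with $E'\sqsubseteq E$, whence $C_j\sqsubseteq D_i$. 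I expect the only subtle point to be bookkeeping about the \emph{shape} of $C$: Corollary~\ref{cor2} is stated for $C,D$ written as conjunctions of concept names and existential restrictions, so one must first collect the top-level concept-name and existential conjuncts of each $C_j$ to put $C$ in that normal form before invoking the corollary. Since this is purely syntactic rearrangement (associativity and commutativity of $\sqcap$), it is routine rather than a genuine obstacle.
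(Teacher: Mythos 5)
Your proposal is correct and follows the route the paper intends: the paper gives no explicit proof, presenting the lemma as ``obvious consequences of Corollary~\ref{cor2},'' and you derive all three parts exactly from that corollary together with the definition of the reduction rules. The only nitpick is in part~(3): when $D_i$ is a concept name $B$, the conclusion is not literally that some $C_j = B$ but that $B$ occurs as a top-level conjunct of some $C_j$ (whence $C_j\sqsubseteq B$), a point you already cover with your remark about flattening $C$ into the normal form required by Corollary~\ref{cor2}.
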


In the proof of decidability of $\mathcal{ EL}$-unification, we will make use of the fact that
the inverse strict subsumption order is well-founded. 

\begin{prop}\label{well:founded1}
There is no infinite sequence 
$C_0, C_1, C_2, C_3, \ldots$ 
of $\mathcal{ EL}$-concept terms 
such that
$C_0\sqsubset C_1\sqsubset C_2 \sqsubset C_3 \sqsubset \cdots$.
\end{prop}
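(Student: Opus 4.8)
The plan is to derive a contradiction from the existence of such a chain, exploiting the fact that moving up a subsumption chain means passing to more general concepts, and that an $\mathcal{EL}$-concept term has only finitely many subsumers up to equivalence. First I would reduce to reduced terms: since the reduction rules are equivalence preserving, I may replace each $C_i$ by a reduced form $\widehat{C_i}$, which by Theorem~\ref{cor1} is unique up to associativity and commutativity of $\sqcap$, and $\widehat{C_i}\sqsubset\widehat{C_{i+1}}$ still holds. In a strict chain the elements are pairwise inequivalent (for $i<j$ we have $C_i\sqsubset C_{i+1}\sqsubseteq C_j$, hence $C_i\not\equiv C_j$), so the $\widehat{C_i}$ are pairwise distinct modulo associativity and commutativity. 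Moreover $C_0\sqsubseteq C_i$ for every $i$, so every $\widehat{C_i}$ is a subsumer of $\widehat{C_0}$.

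The heart of the argument is the following finiteness claim: \emph{for every reduced $\mathcal{EL}$-concept term $E$, the set of reduced concept terms $D$ with $E\sqsubseteq D$ is finite up to associativity and commutativity of $\sqcap$.} I would prove this by induction on the role depth of $E$ (the maximal nesting of existential restrictions), using the recursive characterization of subsumption in Corollary~\ref{cor2}. Writing $E=A_1\sqcap\cdots\sqcap A_k\sqcap\exists\,r_1.E_1\sqcap\cdots\sqcap\exists\,r_m.E_m$ and $D=B_1\sqcap\cdots\sqcap B_\ell\sqcap\exists\,s_1.D_1\sqcap\cdots\sqcap\exists\,s_n.D_n$ in reduced form, Corollary~\ref{cor2} forces $\{B_1,\ldots,B_\ell\}\subseteq\{A_1,\ldots,A_k\}$ (finitely many choices) and forces each existential conjunct $\exists\,s_j.D_j$ of $D$ to be covered by some $\exists\,r_i.E_i$ with $s_j=r_i$ and $E_i\sqsubseteq D_j$, i.e.\ $D_j$ is a subsumer of $E_i$. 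Since each $E_i$ has strictly smaller role depth than $E$, the induction hypothesis yields finitely many possible $D_j$ for each $i$; together with Lemma~\ref{1:new:lem}(1) (which guarantees that $\exists\,r_i.D_j$ is again reduced) this shows that the existential conjuncts of $D$ are drawn from a fixed finite pool of reduced existential restrictions. As $D$ is reduced, it is a conjunction of a subset of the finitely many admissible names together with a subset of this finite pool, so there are only finitely many possibilities for $D$. The base case (role depth $0$) is immediate, since $D$ can then contain no existential restriction and is thus a conjunction of a subset of $\{A_1,\ldots,A_k\}$.

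Applying the claim to $E=\widehat{C_0}$ then contradicts the existence of infinitely many pairwise distinct reduced subsumers $\widehat{C_i}$ of $\widehat{C_0}$, which completes the proof. I expect the main obstacle to be the finiteness claim, and within it the treatment of the existential restrictions: one must argue that reducedness of $D$ (pairwise incomparability of its conjuncts, Lemma~\ref{1:new:lem}(2)) together with the induction hypothesis keeps the pool of admissible existential conjuncts finite, so that the potential ``broadening'' of a concept as one passes to a more general one (for instance replacing $\exists\,r.(A\sqcap B)$ by the strictly more general $\exists\,r.A\sqcap\exists\,r.B$) cannot continue indefinitely. This is precisely the point where a naive size measure fails, since such a step can increase the size of the reduced form, so the finiteness argument rather than a direct decreasing measure is what drives the proof.
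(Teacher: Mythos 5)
Your argument is correct, and its overall skeleton matches the paper's: both proofs observe that all terms in the chain are pairwise inequivalent subsumers of $C_0$, bound the role depth of these subsumers via Corollary~\ref{cor2}, and then derive a contradiction from a finiteness statement. The difference lies in how that finiteness statement is obtained. The paper simply cites the known fact that, over a finite signature and with bounded role depth, there are only finitely many $\mathcal{EL}$-concept terms up to equivalence \cite{BaST07}, whereas you prove a sharper, self-contained claim --- that a fixed reduced term has only finitely many reduced subsumers up to associativity and commutativity --- by induction on role depth, using the recursive characterization of subsumption in Corollary~\ref{cor2} together with Theorem~\ref{cor1} and Lemma~\ref{1:new:lem} to control the pool of admissible top-level conjuncts. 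Your route buys independence from the external reference and makes explicit why the ``broadening'' phenomenon (e.g.\ passing from $\exists r.(A\sqcap B)$ to $\exists r.A\sqcap\exists r.B$) cannot go on forever, at the cost of a longer argument; the paper's route is shorter but leans on a result proved elsewhere. Both are sound.
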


\begin{proof}
We define the \emph{role depth} of an $\mathcal{ EL}$-concept term $C$ as the maximal nesting of
existential restrictions in $C$. Let $n_0$ be the role depth of $C_0$.
Since $C_0\sqsubseteq C_i$ for $i\geq 1$, it is an easy consequence of Corollary~\ref{cor2}
that the role depth of $C_i$ is bounded by $n_0$, and that $C_i$ contains only
concept and role names occurring in $C_0$. In addition, it is known that, for a given 
natural number
$n_0$ and finite sets of concept names $N_{\con}$ and role names $N_{\role}$, there are, up to
equivalence, only finitely many $\mathcal{ EL}$-concept terms 
 built using concept names from
$\mathcal  C$ and role names from $\mathcal  R$ and of a role depth bounded by $n_0$
\cite{BaST07}. 
Consequently,
there are indices $i < j$ such that $C_i \equiv C_j$. This contradicts our assumption
that $C_i\sqsubset C_j$.
\end{proof}

\section{An $\mathcal{ EL}$-unification problem of type zero}\label{type:zero}

To show that $\mathcal{ EL}$ has unification type zero, we exhibit
an $\mathcal{ EL}$-unification problem that has this type.

\begin{thm}\label{type:zero:thm}
Let $X, Y$ be variables.
The $\mathcal{ EL}$-unification problem 
$\Gamma := \{ X\sqcap\exists\, r.Y\equiv^? \exists\, r.Y\}$ 
has unification type zero.
\end{thm}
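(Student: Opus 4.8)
The plan is to prove the claim via the definition of type zero: I must show that $\Gamma = \{X \sqcap \exists\, r.Y \equiv^? \exists\, r.Y\}$ has no minimal complete set of unifiers. The strategy is to exhibit a specific infinite family of unifiers and argue that any complete set $M$ must be redundant, i.e., contain two distinct unifiers $\sigma \INST \theta$. First I would understand the unifiers concretely. A substitution $\sigma$ solves $\Gamma$ iff $\sigma(X) \sqcap \exists\, r.\sigma(Y) \equiv \exists\, r.\sigma(Y)$, which by definition of $\equiv$ means $\exists\, r.\sigma(Y) \sqsubseteq \sigma(X)$. So the unifiers are exactly the substitutions $\sigma$ with $\exists\, r.\sigma(Y) \sqsubseteq \sigma(X)$. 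By Lemma~\ref{lem1}, assuming $\sigma(X)$ is reduced, this forces $\sigma(X)$ to be either $\top$ or a conjunction $\exists\, r.C_1 \sqcap \ldots \sqcap \exists\, r.C_n$ of existential restrictions all over the same role $r$, with each $\sigma(Y) \sqsubseteq C_i$.

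The heart of the argument is to build an infinite ascending (w.r.t.\ the instantiation preorder) or otherwise cleverly interrelated chain of unifiers witnessing non-minimality. A natural candidate family is obtained by iterating the role constructor: for instance, set $\sigma_k(Y) := \top$ and $\sigma_k(X) := \exists\, r.\top \sqcap \exists\, r.\exists\, r.\top \sqcap \cdots$, or more usefully let the unifiers ``stack'' existential restrictions so that each is strictly more instantiated than the previous one while all remain below any fixed unifier one might hope to use as an upper approximation. The key technical tool here is Proposition~\ref{well:founded1}: the inverse strict subsumption order is well-founded, which rules out infinite strictly descending chains and thereby blocks the existence of a single ``most general'' building block that the family could converge to. I would use this to show that no unifier in a purported minimal set can dominate the whole tail of my family.

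The decisive step is the redundancy argument. Suppose, for contradiction, that $M$ is a minimal complete set of unifiers for $\Gamma$. Given any unifier $\theta$ in my family, completeness yields some $\sigma \in M$ with $\sigma \INST \theta$, meaning $\theta(Z) \equiv \lambda(\sigma(Z))$ for $Z \in \{X, Y\}$ and some $\lambda$. I would analyze how applying $\lambda$ to $\sigma(X)$ — a conjunction of $r$-restrictions — can reproduce $\theta(X)$, and use Corollary~\ref{cor2} together with the well-foundedness of $\sqsupset$ to derive that $\sigma$ must itself already be ``essentially as instantiated as'' $\theta$, in a way that forces a second element of $M$ below $\sigma$, contradicting minimality condition~(3). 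Concretely, I expect to show that for the $\theta$ I construct, the required $\sigma$ is itself an instance of yet another unifier $\sigma' \in M$ with $\sigma' \neq \sigma$, violating $\sigma' \INST \sigma \Rightarrow \sigma' = \sigma$.

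The main obstacle I anticipate is controlling the behaviour of the intermediary substitution $\lambda$ in the instantiation relation. Because $\lambda$ may substitute arbitrary $\mathcal{EL}$-concept terms for the variables, it can increase role depth and introduce new conjuncts, so ruling out a finite complete set requires carefully tracking what $\lambda$ can and cannot do to a conjunction of $r$-restrictions $\exists\, r.C_1 \sqcap \ldots \sqcap \exists\, r.C_n$. The essential leverage is that $\lambda$ cannot \emph{decrease} the ``content'' contributed by $\sigma(Y)$ in a way that would let a single $\sigma$ uniformly cover an infinite family with ever-growing structure; making this precise, via Lemma~\ref{lem1} and Proposition~\ref{well:founded1}, is where the real work lies, and it is exactly the mechanism that distinguishes type zero from merely infinitary.
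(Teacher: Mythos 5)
There is a genuine gap. You correctly reduce unifiability to the condition $\exists\, r.\sigma(Y)\sqsubseteq\sigma(X)$ and you correctly identify the target (every complete set $M$ must contain two distinct comparable unifiers), but the proposal never supplies the one construction the argument hinges on: given an \emph{arbitrary} unifier $\sigma\in M$, a unifier $\widehat{\sigma}$ that is \emph{strictly} more general than $\sigma$. The paper's proof produces this by a fresh-variable trick: writing the reduced $\sigma(X)=\exists\, r.C_1\sqcap\ldots\sqcap\exists\, r.C_n$ (which Lemma~\ref{lem1} guarantees once one has arranged, via the test unifier $\tau(X)=\exists\, r.A$, $\tau(Y)=A$, that $\sigma(X)\not\equiv\top$ and $\sigma(X)\not\equiv\exists\, r.\top$), it sets $\widehat{\sigma}(X):=\sigma(X)\sqcap\exists\, r.Z$ and $\widehat{\sigma}(Y):=\sigma(Y)\sqcap Z$ for a fresh variable $Z$. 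Then $\widehat{\sigma}\INST\sigma$ via $Z\mapsto C_1$, while $\sigma\INST\widehat{\sigma}$ is impossible because any instance of $\sigma(X)$ reduces to a conjunction of at most $n$ existential restrictions, whereas $\widehat{\sigma}(X)$ is reduced with $n+1$ top-level conjuncts; by Theorem~\ref{cor1} (uniqueness of reduced forms) they cannot be equivalent. Completeness then yields $\theta\in M$ with $\theta\INST\widehat{\sigma}\INST\sigma$ and $\theta\neq\sigma$, which is the redundancy. Your text explicitly defers exactly this step (``making this precise \ldots is where the real work lies''), so the proposal is a plan rather than a proof.

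Moreover, the two concrete mechanisms you do name would not carry the argument. The family obtained by ``iterating the role constructor'' fails already at the level of being unifiers: with $\sigma_k(Y)=\top$ and $\sigma_k(X)$ of role depth $k\geq 2$, the required subsumption $\exists\, r.\top\sqsubseteq\sigma_k(X)$ is false by Corollary~\ref{cor2}; and no depth-indexed family can, for an arbitrary $\sigma\in M$, manufacture the strictly-more-general unifier that the redundancy argument needs. Likewise, Proposition~\ref{well:founded1} is not the right lever here: well-foundedness of the inverse strict subsumption order concerns the order $\succ$ on ground unifiers used for the decision procedure in Section~\ref{decision:problem}, not the instantiation preorder $\INST$, and the paper's type-zero proof does not invoke it. The tool that actually blocks $\sigma\INST\widehat{\sigma}$ is the counting of top-level conjuncts in reduced forms via Theorem~\ref{cor1}, together with the observation that $C_1,\ldots,C_n$ are pairwise incomparable, none is $\top$, and $Z$ is fresh, so that $\widehat{\sigma}(X)$ is genuinely reduced with $n+1$ conjuncts.
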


\begin{proof}
It is enough to show that any complete set of unifiers for this problem
is redundant, i.e., contains two different unifiers that are comparable
w.r.t.\ the instantiation preorder. Thus, let $M$ be a complete set of
unifiers for $\Gamma$.

First, note that $M$ must contain a unifier that maps $X$ to an $\mathcal{ EL}$-concept
term not equivalent to $\top$ or $\exists\, r.\top$. In fact, consider a substitution $\tau$ such that
$\tau(X) = \exists\, r.A$ and $\tau(Y) = A$. Obviously, $\tau$ is a unifier of
$\Gamma$. Thus, $M$ must contain a unifier $\sigma$
such that $\sigma\INST\tau$. In particular, this means that there is a substitution
$\lambda$ such that $\exists\, r.A = \tau(X) \equiv \lambda(\sigma(X))$. 
Obviously,
$\sigma(X)\equiv \top$ 
would imply $\lambda(\sigma(X))\equiv \top$, 
and thus $\exists\, r.A\equiv \top$, 
which is, however, not the case.
Similarly, $\sigma(X)\equiv \exists\, r.\top$
would imply $\lambda(\sigma(X))\equiv \exists\, r.\top$,
and thus $\exists\, r.A\equiv \exists\, r.\top$, which is also not the case.

Thus, let $\sigma\in M$ be such that $\sigma(X)\not\equiv\top$ and $\sigma(X)\not\equiv\exists\, r.\top$. 
Without loss of generality, we assume that $C := \sigma(X)$ and $D := \sigma(Y)$ are
reduced. Since $\sigma$ is a unifier of $\Gamma$,
we have $\exists\, r.D\sqsubseteq C$. Consequently, Lemma~\ref{lem1} yields that
$C$ is of the form $C = \exists\, r.C_1\sqcap\ldots\sqcap\exists\, r.C_n$ where
$n\geq 1$, 
$C_1,\ldots,C_n$ are reduced and pairwise incomparable w.r.t.\ subsumption, and
$D\sqsubseteq C_1, \ldots, D\sqsubseteq C_n$.

We use $\sigma$ to construct a new unifier $\widehat{\sigma}$ as follows:
\begin{eqnarray*}
\widehat{\sigma}(X) &:=& \exists\, r.C_1\sqcap\ldots\sqcap\exists\, r.C_n\sqcap\exists\, r.Z\\
\widehat{\sigma}(Y) &:=& D\sqcap Z
\end{eqnarray*}
where $Z$ is a new variable (i.e., one not occurring in $C, D$).
The second part of Lemma~\ref{lem1} implies that $\widehat{\sigma}$ is indeed a unifier of 
$\Gamma$.

Next, we show that $\widehat{\sigma}\INST\sigma$. To this purpose, we
consider the substitution $\lambda$ that maps $Z$ to $C_1$,
and does not change any of the other variables. Then we have
$\lambda(\widehat{\sigma}(X)) = \exists\, r.C_1\sqcap\ldots\sqcap\exists\, r.C_n\sqcap\exists\, r.C_1
\equiv \exists\, r.C_1\sqcap\ldots\sqcap\exists\, r.C_n = \sigma(X)$ and
$\lambda(\widehat{\sigma}(Y)) = D\sqcap C_1 \equiv D = \sigma(Y)$.
Note that the second equivalence holds since we have $D\sqsubseteq C_1$.

Since $M$ is complete, there exists a unifier $\theta\in M$ such that
$\theta\INST\widehat{\sigma}$. Transitivity of the relation $\INST$ thus
yields $\theta\INST\sigma$. Since $\sigma$ and $\theta$ both belong to $M$,
we have completed the proof of the theorem once we have shown that $\sigma\neq\theta$.
Assume to the contrary that $\sigma = \theta$. Then we have $\sigma\INST\widehat{\sigma}$,
and thus there exists a substitution $\mu$ such that $\mu(\sigma(X))\equiv\widehat{\sigma}(X)$,
i.e.,
\begin{equation}\label{eq1}
\exists\, r.\mu(C_1)\sqcap\ldots\sqcap\exists\, r.\mu(C_n)\equiv
\exists\, r.C_1\sqcap\ldots\sqcap\exists\, r.C_n\sqcap\exists\, r.Z.
\end{equation}
Recall that the concept terms $C_1,\ldots,C_n$ are reduced and 
pairwise incomparable w.r.t.\ subsumption. In addition, 
since $\sigma(X) = \exists\, r.C_1\sqcap\ldots\sqcap\exists\, r.C_n$
is reduced and not equivalent to $\exists\, r.\top$, none of the
concept terms $C_1,\ldots,C_n$ can be equivalent to $\top$.
Finally, $Z$ is a concept name that
does not occur in $C_1,\ldots,C_n$. All this implies that
$\exists\, r.C_1\sqcap\ldots\sqcap\exists\, r.C_n\sqcap\exists\, r.Z$ is reduced.
Obviously, any reduced form for $\exists\, r.\mu(C_1)\sqcap\ldots\sqcap\exists\, r.\mu(C_n)$
is a conjunction of at most $n$ existential restrictions. Thus, Theorem~\ref{cor1}
shows that the above equivalence $(\ref{eq1})$ actually cannot hold.

To sum up, we have shown that $M$ contains two distinct unifiers $\sigma, \theta$
such that $\theta\INST\sigma$. Since $M$ was an arbitrary complete set of unifiers
for $\Gamma$, this shows that this unification
problem cannot have a minimal complete set of unifiers.
\end{proof}


\section{The decision problem}\label{decision:problem}

Before we can describe our decision procedure for $\mathcal{EL}$-unification, we
must introduce some notation. An $\mathcal{EL}$-concept term is called an
\emph{atom} iff it is a concept name (i.e., concept constant or concept variable) 
or an existential restriction $\exists\, r.D$.\footnote{%
Note that $\top$ is \emph{not} an atom.
} 
Obviously, any $\mathcal{EL}$-concept term is (equivalent to) a conjunction of atoms,
where the empty conjunction is $\top$.
The set $\At(C)$ of \emph{atoms of an $\mathcal{EL}$-concept term $C$} is defined inductively:
if $C = \top$, then $\At(C) := \emptyset$;
if $C$ is a concept name, then $\At(C) := \{C\}$;
if $C = \exists\, r.D$ then $\At(C) := \{C\}\cup \At(D)$;
if $C = C_1\sqcap C_2$, then $\At(C) := \At(C_1)\cup\At(C_2)$.

Concept names and existential restrictions
$\exists\, r.D$ where $D$ is a concept name or $\top$ are called \emph{flat atoms}.
An $\mathcal{EL}$-concept term is \emph{flat} iff it is a conjunction
of flat atoms (where the empty conjunction is $\top$). 
The $\mathcal{EL}$-unification problem $\Gamma$ is \emph{flat} 
iff it consists of equations between flat $\mathcal{EL}$-concept terms.
 %
By introducing new concept variables and eliminating $\top$, 
any $\mathcal{EL}$-unification problem $\Gamma$
can be transformed in polynomial time into a flat $\mathcal{EL}$-unification problem $\Gamma'$
such that $\Gamma$ is solvable iff $\Gamma'$ is solvable. Thus, we may assume
without loss of generality that our input $\mathcal{EL}$-unification problems are flat.
Given a flat $\mathcal{EL}$-unification problem $\Gamma = \{ C_1\equiv^? D_1, \ldots,
C_n\equiv^? D_n \}$, we call the atoms of $C_1,D_1,\ldots,C_n,D_n$
the \emph{atoms of $\Gamma$}. Atoms of $\Gamma$ that are not variables (i.e., not
elements of $N_v$) are called \emph{non-variable atoms of $\Gamma$}.

The unifier $\sigma$ of $\Gamma$ is called \emph{reduced} 
iff, for all concept variables $X$ occurring in $\Gamma$, the $\mathcal{EL}$-concept term
$\sigma(X)$ is reduced. 
It is \emph{ground} iff, for all concept variables $X$ occurring in $\Gamma$, 
the $\mathcal{EL}$-concept term $\sigma(X)$ does not contain variables.
Obviously, $\Gamma$ is solvable
iff it has a reduced ground unifier. 
Given a ground unifier $\sigma$ of $\Gamma$, 
the \emph{atoms of $\sigma$} are the atoms of all the concept terms $\sigma(X)$, 
where $X$ ranges over all variables occurring in $\Gamma$.
 

\begin{rem}\label{occurrence:remark}
In the following, we consider situations where all occurrences of a given reduced atom $D$
in a reduced concept term $C$ are replaced by a more general concept term, i.e., by 
a concept term $D'$ with $D\gtris D'$. However, when we say \emph{occurrence of $D$ in $C$}, 
we mean occurrence modulo equivalence ($\equiv$) rather than syntactic occurrence. For example,
if $C = \exists\, r.(A\sqcap B)\sqcap\exists\, r.(B\sqcap A)$, $D = \exists\, r.(A\sqcap B)$, and
$D' = \exists\, r.A$, then the term obtained by replacing all occurrences of $D$ in $C$ by
$D'$ should be $\exists\, r.A\sqcap\exists\, r.A$, and not $\exists\, r.A\sqcap\exists\, r.(B\sqcap A)$.
Since $C$ and $D$ are reduced, equivalence is actually the same as being identical up to
associativity and commutativity of $\sqcap$. In particular, this means that any concept term that
(syntactically) occurs in $C$ and is equivalent to the atom $D$ is also an atom, i.e., only atoms
can be replaced by $D'$. In order to make this meaning of occurrence explicit
we will call it \emph{occurrence modulo $\AC$} in the following. We will write
$D_1\eqAC D_2$ to express that the atoms $D_1$ and $D_2$ are identical
up to associativity and commutativity of $\sqcap$. Obviously, $D_1\eqAC D_2$ implies
$D_1\equiv D_2$.
\end{rem}

\begin{lem}\label{monotone:greater:lem}
Let $C, D, D'$ be $\mathcal{EL}$-concept terms such that $D$ is a reduced
atom, $D \gtris D'$, and $C$ is reduced and contains at least one occurrence of $D$
modulo $\AC$. 
If $C'$ is obtained from $C$ by replacing all occurrences of $D$ by $D'$, 
then $C \gtris C'$.
\end{lem}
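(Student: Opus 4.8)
The plan is to prove both halves of $C\gtris C'$, namely $C\sqsubseteq C'$ and $C\not\equiv C'$, simultaneously by induction on the structure (equivalently, the role depth and size) of the reduced term $C$. Since $C$ is reduced, I would write it as a conjunction of its atoms $C = P_1\sqcap\cdots\sqcap P_k$, which by Lemma~\ref{1:new:lem}(2) are reduced and pairwise incomparable w.r.t.\ subsumption. I then split the indices into three groups: $I$ collects the (at most one, by reducedness) top-level atoms $P_i\eqAC D$; $J$ collects the atoms $P_i=\exists\, r_i.Q_i$ that are not AC-equal to $D$ but contain an occurrence of $D$ modulo $\AC$ inside $Q_i$ (a concept name can be an occurrence of $D$ only if it equals $D$, so $J$ contains only existential restrictions); and $K$ collects the atoms with no occurrence of $D$. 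Thus $C'=\bigsqcap_{i\in I}D'\sqcap\bigsqcap_{j\in J}\exists\, r_j.Q_j'\sqcap\bigsqcap_{k\in K}P_k$, where $Q_j'$ is obtained from $Q_j$ by the same replacement; since $\exists\, r_j.Q_j$ reduced gives $Q_j$ reduced by Lemma~\ref{1:new:lem}(1), the induction hypothesis applies to each $Q_j$ and yields $Q_j\gtris Q_j'$.

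For the subsumption direction $C\sqsubseteq C'$ I would invoke Lemma~\ref{1:new:lem}(3): it suffices to cover every atom of $C'$ by an atom of $C$ that subsumes it. The atom $D'$ (for $i\in I$) is covered by $D$ since $D\sqsubseteq D'$; each $\exists\, r_j.Q_j'$ is covered by $\exists\, r_j.Q_j$ because $Q_j\sqsubseteq Q_j'$ by the induction hypothesis; and each $P_k$ covers itself. If $D'$ is a conjunction, its atoms are all accounted for through $D\sqsubseteq D'$.

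The real work is the strictness $C\not\equiv C'$, which I would establish by assuming $C'\sqsubseteq C$ and exhibiting an atom of $C$ that, by Corollary~\ref{cor2}, cannot be covered by any atom of $C'$. If $D$ occurs only nested ($I=\emptyset$), I pick some $\exists\, r_j.Q_j$ with $j\in J$ and show no role-$r_j$ existential atom of $C'$ lies below it: an unchanged $P_k$ cannot, being incomparable to $\exists\, r_j.Q_j$ in the reduced term $C$; and $Q_{j'}'\sqsubseteq Q_j$ for some $j'\in J$ would, via $Q_{j'}\sqsubseteq Q_{j'}'$, give $\exists\, r_j.Q_{j'}\sqsubseteq\exists\, r_j.Q_j$, contradicting incomparability (for $j'=j$ it directly contradicts $Q_j\gtris Q_j'$). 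If instead $D$ occurs at top level ($I=\{1\}$, say $P_1\eqAC D$), I take the atom $D$ itself: when $D$ is a concept name, coverage forces $D$ to be a concept name of $C'$, but $D$ is none of the $P_k$ and cannot be a conjunct of $D'$ (that would give $D'\sqsubseteq D$); when $D=\exists\, r.E$, coverage requires a role-$r$ existential atom $\exists\, r.F$ of $C'$ with $F\sqsubseteq E$, which again cannot come from an incomparable $P_k$, cannot come from a modified atom $\exists\, r.Q_j'$ (this would force $\exists\, r.Q_j\sqsubseteq\exists\, r.E=D$, violating incomparability), and cannot come from an existential conjunct $\exists\, r.H$ of $D'$ with $H\sqsubseteq E$ (since then $D'\sqsubseteq\exists\, r.H\sqsubseteq D$).

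In every case the assumed coverage collapses either to $D'\sqsubseteq D$ or to a violation of the pairwise incomparability of $C$'s atoms, and since $D\sqsubseteq D'$ this contradicts $D\gtris D'$. The main obstacle is precisely the last case, where $D$ sits at top level as an existential restriction and $D'$ may itself contain a same-role existential restriction: one must rule out that $D'$ reintroduces an atom specific enough to cover $D$, and the clean way to do this is to observe that any such atom would already force $D'\sqsubseteq D$. The induction is well-founded because the hypothesis is applied only to the proper subterms $Q_j$, whose role depth is strictly smaller than that of $C$.
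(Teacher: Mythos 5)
Your proof is correct, but it reaches the conclusion by a genuinely different route than the paper, and the difference is worth noting. The paper argues by structural induction with three cases ($C\eqAC D$, $C$ a single existential restriction, $C$ a proper conjunction of $n>1$ atoms); the subsumption half $C\sqsubseteq C'$ comes for free from monotonicity of the constructors, and the strictness in the conjunction case is obtained via Theorem~\ref{cor1}: if $C\equiv C'$ held, the reduced form of $C_1'\sqcap\ldots\sqcap C_n'$ would have to be AC-identical to $C$, which fails either because reduction collapses it to fewer than $n$ atoms or because the strictly more general atom $C_1'$ would have to coincide with some $C_i$, $i\neq 1$, contradicting pairwise incomparability. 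You instead flatten $C$ into its top-level atoms once, push the induction only into the bodies $Q_j$ of the existential restrictions with a properly nested occurrence of $D$, and refute $C'\sqsubseteq C$ directly by exhibiting a witness atom of $C$ (the top-level copy of $D$ if $I\neq\emptyset$, otherwise some $\exists\, r_j.Q_j$ with $j\in J$) that no top-level atom of $C'$ can cover, using only Corollary~\ref{cor2} and the pairwise incomparability of the atoms of the reduced term $C$. What your approach buys is that it never needs to reduce $C'$ or invoke uniqueness of reduced forms; the price is a longer exhaustive case analysis over the possible sources of a covering atom (unchanged atoms from $K$, modified atoms from $J$, and conjuncts of $D'$), and the delicate sub-case you rightly single out --- a same-role existential conjunct $\exists\, r.H$ of $D'$ covering a top-level $D=\exists\, r.E$ --- is correctly dispatched by observing that it would force $D'\sqsubseteq D$ and hence $D\equiv D'$. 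Your inductive step is also well-founded as claimed, since each $Q_j$ is reduced by Lemma~\ref{1:new:lem} and still contains an occurrence of $D$ modulo $\AC$.
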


\begin{proof}
We prove the lemma by induction on the size of $C$.
If $C \eqAC D$, then $C' = D'$, and thus $C \equiv D \gtris D' = C'$, which yields
$C \gtris C'$. 
Thus, assume that $C \not\eqAC D$. 
In this case, $C$ cannot be a concept name since it contains the atom $D$.
If $C = \exists\, r. C_1$, then $D$ occurs in $C_1$ modulo $\AC$. By induction, 
we can assume that $C_1 \gtris C_1'$, where $C_1'$ is obtained from $C_1$ by
replacing all occurrences of $D$ (modulo $\AC$) by $D'$. Thus, we have 
$C = \exists\, r. C_1 \gtris \exists\, r.C_1' = C'$ by Corollary~\ref{cor2}.
Finally, assume that $C = C_1\sqcap\ldots\sqcap C_n$ for $n > 1$ atoms
$C_1,\ldots,C_n$. Since $C$ is reduced, these atoms are incomparable
w.r.t.\ subsumption, and since the atom $D$ occurs in $C$ modulo $\AC$ we can assume without loss of
generality that $D$ occurs in $C_1$ modulo $\AC$. Let $C_1',\ldots,C_n'$ be respectively obtained
from  $C_1,\ldots,C_n$ by replacing every occurrence of $D$ (modulo $\AC$) by $D'$, and
then reducing the concept term obtained this way. By induction, we have $C_1 \gtris C_1'$. 
Assume that $C \not\gtris C'$. Since the concept constructors of $\mathcal{EL}$
are monotone w.r.t.\ subsumption $\sqsubseteq$, we have $C \sqsubseteq C'$, and
thus $C \not\gtris C'$ means that $C\equiv C'$. Consequently, $C = C_1\sqcap\ldots\sqcap C_n$
and the reduced form of $C_1'\sqcap\ldots\sqcap C_n'$ must be equal up to associativity
and commutativity of $\sqcap$. 
If $C_1'\sqcap\ldots\sqcap C_n'$ is not reduced, then its reduced form
is actually a conjunction of $m < n$ atoms, which contradicts $C\equiv C'$.
If $C_1'\sqcap\ldots\sqcap C_n'$ is reduced, then $C_1 \gtris C_1'$ implies that
there is an $i \neq 1$ such that $C_i \equiv C_1'$. However, then $C_i \equiv C_1' \sqsupset C_1$
contradicts the fact that the atoms $C_1,\ldots,C_n$ are incomparable
w.r.t.\ subsumption.
\end{proof}

Proposition~\ref{well:founded1} says that the inverse strict subsumption order on concept terms
is well-founded.
We use this fact to obtain
a well-founded strict order $\succ$ on ground unifiers.
 %
 %
\begin{defi}
Let $\sigma, \theta$ be ground unifiers of $\Gamma$. We define
\begin{enumerate}[(1)]
\item
$\sigma \succeq \theta$ iff 
$\sigma(X)\sqsubseteq\theta(X)$ holds for all variables $X$ occurring in $\Gamma$.
\item
$\sigma \succ \theta$ iff $\sigma \succeq \theta$ and $\theta \not\succeq \sigma$, i.e., iff
$\sigma(X)\sqsubseteq\theta(X)$ holds for all variables $X$ occurring in $\Gamma$,
and $\sigma(X)\gtris\theta(X)$ holds for at least one variable $X$ occurring in $\Gamma$.
\end{enumerate}
\end{defi}
If $\Gamma$ contains $n$ variables, then
$\succeq$ is the $n$-fold product of the order $\sqsubseteq$ with itself.
Since the strict part $\gtris$ of the inverse subsumption order
$\sqsubseteq$ is well-founded by Proposition~\ref{well:founded1},
the strict part $\succ$ of $\succeq$
is also well-founded \cite{BaNiLong98}.
The ground unifier $\sigma$ of $\Gamma$ is called
\emph{is-minimal} iff there is no ground unifier $\theta$ of $\Gamma$ such that
$\sigma \succ \theta$. The following proposition is an easy consequence
of the fact that $\succ$ is well-founded.

\begin{prop}\label{minimal:unif:prop}
Let $\Gamma$ be an $\mathcal{EL}$-unification problem. Then $\Gamma$ is solvable
iff it has an is-minimal reduced ground unifier.
\end{prop}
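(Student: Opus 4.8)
The plan is to prove the two directions separately; the direction from right to left is immediate, since an is-minimal reduced ground unifier is in particular a unifier and therefore witnesses that $\Gamma$ is solvable. All the content lies in the forward direction, for which I would combine two ingredients already at hand: the fact recorded above that a solvable $\Gamma$ has a \emph{reduced} ground unifier, so that the set $R$ of reduced ground unifiers of $\Gamma$ is nonempty, and the well-foundedness of the strict order $\succ$, established just above from Proposition~\ref{well:founded1} via the product-order argument. Since $\succ$ is well-founded as a relation on ground unifiers, every nonempty set of ground unifiers has a $\succ$-minimal element, so I may pick a reduced ground unifier $\sigma\in R$ that is $\succ$-minimal \emph{within} $R$, i.e.\ with no $\tau\in R$ satisfying $\sigma\succ\tau$.

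First I would isolate the tool that makes the whole argument go through, namely that passing to the reduced form of a unifier leaves its position in the $\succeq$-preorder untouched. If $\theta$ is any ground unifier and $\widehat\theta$ is obtained by reducing each $\theta(X)$, then $\widehat\theta(X)\equiv\theta(X)$ for every variable $X$; because equivalence is a congruence for the $\mathcal{EL}$-constructors, $\widehat\theta$ is again a reduced ground unifier of $\Gamma$, and because $\succeq$ is defined through $\sqsubseteq$, which sees only the equivalence class, one has both $\widehat\theta\succeq\theta$ and $\theta\succeq\widehat\theta$. Thus $\widehat\theta$ and $\theta$ are interchangeable on either side of $\succeq$.

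The main step, and the one place where something could go wrong, is upgrading ``$\succ$-minimal within $R$'' to genuine is-minimality, i.e.\ to the absence of \emph{any} ground unifier strictly below $\sigma$. Suppose $\sigma$ were not is-minimal, so that $\sigma\succ\rho$ for some ground unifier $\rho$. I would then replace $\rho$ by its reduced form $\widehat\rho\in R$ and check that the strict inequality survives: from $\sigma\succeq\rho$ and $\rho\succeq\widehat\rho$ I get $\sigma\succeq\widehat\rho$, while $\widehat\rho\succeq\sigma$ would combine with $\rho\succeq\widehat\rho$ to give $\rho\succeq\sigma$, contradicting $\rho\not\succeq\sigma$; hence $\widehat\rho\not\succeq\sigma$ and therefore $\sigma\succ\widehat\rho$. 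This contradicts the $\succ$-minimality of $\sigma$ within $R$, so no such $\rho$ exists and $\sigma$ is is-minimal. The delicate point is exactly that reduction might a priori collapse the strict inequality $\sigma\succ\rho$ into an equivalence; the congruence observation of the previous paragraph is precisely what rules this out, and once it is in place the proposition follows.
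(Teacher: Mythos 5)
Your proof is correct and takes essentially the same route as the paper, which states the proposition as ``an easy consequence of the fact that $\succ$ is well-founded'' without spelling out the details. Your argument---choosing a $\succ$-minimal element among the reduced ground unifiers and then using the fact that reduction preserves the $\succeq$-position (since $\sqsubseteq$ only sees equivalence classes) to upgrade this to is-minimality among all ground unifiers---is exactly the intended elaboration.
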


In the following, we show that is-minimal reduced ground unifiers of flat
$\mathcal{EL}$-unification problems satisfy properties that make it easy to
check (with an NP-algorithm) whether such a unifier exists or not.

%
%

\begin{lem}\label{atoms:equiv:lem}
Let $\Gamma$ be a flat $\mathcal{EL}$-unification problem and $\gamma$
an is-minimal reduced ground unifier of $\Gamma$. 
If $C$ is an atom of $\gamma$,
then there is a non-variable atom $D$ of $\Gamma$ such that $C\equiv \gamma(D)$.
\end{lem}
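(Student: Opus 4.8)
The plan is to argue by contradiction using the is-minimality of $\gamma$. Suppose there is an atom $C$ of $\gamma$ such that $C\not\equiv\gamma(D)$ for every non-variable atom $D$ of $\Gamma$. I would then build a ground unifier $\theta$ of $\Gamma$ with $\gamma\succ\theta$, contradicting the assumption that $\gamma$ is is-minimal. The substitution $\theta$ is obtained by \emph{deleting} $C$ everywhere: for each variable $X$ occurring in $\Gamma$, let $\theta(X)$ be the reduced form of the term obtained from $\gamma(X)$ by replacing every occurrence of $C$ modulo $\AC$ by $\top$. Since $C$ is an atom, $C\gtris\top$, so this is precisely a generalization step of the kind treated in Lemma~\ref{monotone:greater:lem}.

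First I would check that $\gamma\succ\theta$. Applying Lemma~\ref{monotone:greater:lem} with $D:=C$ and $D':=\top$ gives $\gamma(X)\gtris\theta(X)$ for every variable $X$ whose image $\gamma(X)$ actually contains $C$ modulo $\AC$, while $\theta(X)=\gamma(X)$ for the remaining variables; hence $\gamma(X)\sqsubseteq\theta(X)$ holds for all $X$, so $\gamma\succeq\theta$. Because $C$ is an atom of $\gamma$, it occurs in $\gamma(X)$ for at least one $X$, and for that variable the subsumption is strict, which yields $\gamma\succ\theta$.

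The main obstacle is to verify that $\theta$ is again a unifier of $\Gamma$, and this is where the choice of $C$ is used decisively. Fix an equation $C_i\equiv^? D_i$ and write $C_i$ as a conjunction $A_1\sqcap\cdots\sqcap A_p$ of flat atoms, and let $\rho$ denote the operation that reduces a ground concept term and then replaces every occurrence modulo $\AC$ of $C$ by $\top$. I would first show that $\theta(A_j)\equiv\rho(\gamma(A_j))$ for each flat atom $A_j$: for a variable $A_j=X$ this holds by the definition of $\theta$; for a non-variable flat atom $A_j$ (a constant, or $\exists\, r.X$, $\exists\, r.A$, $\exists\, r.\top$), no sub-atom of $\gamma(A_j)$ except those lying inside an image $\gamma(X)$ of a variable argument can be equivalent to $C$, since otherwise $C$ would be equivalent to $\gamma(E)$ for the non-variable atom $E$ of $\Gamma$ sitting at that position, contradicting the choice of $C$; hence $\rho$ alters $\gamma(A_j)$ only inside such variable images, exactly as $\theta$ does. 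Since this deletion commutes with conjunction and reduction (this is where the modulo-$\AC$ bookkeeping of Remark~\ref{occurrence:remark} is needed), it follows that $\theta(C_i)\equiv\rho(\gamma(C_i))$ and, symmetrically, $\theta(D_i)\equiv\rho(\gamma(D_i))$. Finally, since $\gamma$ is a unifier we have $\gamma(C_i)\equiv\gamma(D_i)$, so by Theorem~\ref{cor1} these terms share a reduced form up to $\AC$; applying $\rho$ to this common reduced form gives $\rho(\gamma(C_i))\eqAC\rho(\gamma(D_i))$, and therefore $\theta(C_i)\equiv\theta(D_i)$. Thus $\theta$ solves every equation of $\Gamma$, and $\gamma\succ\theta$ contradicts the is-minimality of $\gamma$; the delicate point throughout is to treat occurrences \emph{modulo $\AC$} and their interaction with reduction carefully, which is exactly what Remark~\ref{occurrence:remark} and Lemma~\ref{monotone:greater:lem} are designed to support.
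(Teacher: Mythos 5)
Your overall strategy coincides with the paper's: assume an offending atom $C$ exists, replace it everywhere by something strictly more general, invoke Lemma~\ref{monotone:greater:lem} to get a $\succ$-smaller substitution, and contradict is-minimality. The gap is in your choice of replacement. Deleting $C$ (i.e., replacing it by $\top$) does \emph{not} in general yield a unifier when $C$ is an existential restriction, and the step you flag as delicate --- that the deletion ``commutes with conjunction and reduction,'' so that $\theta(C_i)\equiv\rho(\gamma(C_i))$ --- is simply false. The paper's own example $\Gamma_2 = \{X\sqcap\exists\, r.A\sqcap\exists\, r.B\equiv^? X\}$ with $\gamma(X)=\exists\, r.(A\sqcap B)$ is a counterexample to that intermediate claim: here $C=\exists\, r.(A\sqcap B)$ is an atom of the unifier not equivalent to the image of any non-variable atom of $\Gamma_2$, your $\theta$ sends $X$ to $\top$, and the left-hand side becomes $\exists\, r.A\sqcap\exists\, r.B\not\equiv\top$, so $\theta$ is not a unifier. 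Concretely, $\rho$ first reduces $\gamma(X)\sqcap\exists\, r.A\sqcap\exists\, r.B$ to $\exists\, r.(A\sqcap B)$ --- the reduction ``eats up'' $\exists\, r.A$ and $\exists\, r.B$ --- and only then deletes $C$, yielding $\top$; whereas $\theta$ deletes $C$ inside $\gamma(X)$ before the conjuncts $\exists\, r.A$, $\exists\, r.B$ can be absorbed, so they survive. Nothing in your verification excludes this: the observation that no top-level position of $\gamma(A_j)$ for non-variable $A_j$ can be equivalent to $C$ is correct, but it does not give the commutation you need, because the interaction is between the deleted atom and \emph{other, inequivalent} atoms that it strictly subsumes.

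This is precisely why the paper replaces $C$ not by $\top$ but by $\widehat{D}=\gamma(D_1)\sqcap\ldots\sqcap\gamma(D_\ell)$, the conjunction of the images of \emph{all} non-variable atoms $D_i$ of $\Gamma$ with $C\sqsubset\gamma(D_i)$ --- exactly the atoms that $C$ would absorb during reduction --- and additionally chooses $C$ maximal w.r.t.\ subsumption among the offending atoms (a hypothesis your proof omits but which the paper needs in the induction of Lemma~\ref{hilf:2}). One still gets $C\sqsubset\widehat{D}$ and hence $\gamma\succ\gamma\CDh$, but now the subsumptions between conjuncts on the two sides of each equation are preserved. Your uniform treatment does work in the special case where $C$ is a concept constant not occurring in $\Gamma$ (replacing a concept name by $\top$ is globally equivalence-preserving), and indeed that is exactly what the paper does in its first case; the existential-restriction case genuinely requires the more careful replacement.
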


The main idea underlying the proof of this crucial lemma is that an atom $C$ of a unifier
$\sigma$ that violates the condition of the lemma (i.e., that is not of the form
$C\equiv \gamma(D)$ for a non-variable atom $D$ of $\Gamma$) can be replaced by a concept term $\widehat{D}$
such that $C\gtris \widehat{D}$, which yields a unifier
of $\Gamma$ that is smaller than $\sigma$ w.r.t.\ $\succ$.

Before proving the lemma formally, let us illustrate this idea by two examples.

\begin{exa}
First, consider the unification problem 
$$
\Gamma_1 := \{ \exists\, r.X\sqcap\exists\, r.A\equiv^? \exists\, r.X\}.
$$
The substitution $\sigma_1 := \{X\mapsto A\sqcap B\}$ is a unifier of $\Gamma_1$ that
does not satisfy the condition of Lemma~\ref{atoms:equiv:lem}. In fact, $B$ is an
atom of $\sigma_1$, but none of the non-variable atoms $D$ of $\Gamma_1$ (which are
$A$, $\exists\, r.A$, and $\exists\, r.X$) satisfy $B\equiv \sigma_1(D)$.
The unifier $\sigma_1$ is not is-minimal since $\gamma_1 := \{X\mapsto A\}$,
which can be obtained from $\sigma_1$ by replacing the offending atom $B$ with $\top$,
is a unifier of $\Gamma_1$ that is smaller than $\sigma_1$ w.r.t.\ $\succ$. The unifier
$\gamma_1$ is is-minimal, and it clearly
satisfies the condition of Lemma~\ref{atoms:equiv:lem}.

Second, consider the unification problem
$$
\Gamma_2 := \{ X \sqcap \exists\, r.A\sqcap\exists\, r.B\equiv^? X\}.
$$
The substitution $\sigma_2 := \{X\mapsto \exists\, r.(A\sqcap B)\}$ is a unifier of $\Gamma_2$ that
does not satisfy the condition of Lemma~\ref{atoms:equiv:lem}. In fact, $\exists\, r.(A\sqcap B)$ is an
atom of $\sigma_2$, but none of the non-variable atoms $D$ of $\Gamma_2$ (which are
$A$, $B$, $\exists\, r.A$, and $\exists\, r.B$) satisfy $\exists\, r.(A\sqcap B)\equiv \sigma_2(D)$.
The unifier $\sigma_2$ is not is-minimal since $\gamma_2 := \{X\mapsto \exists\, r.A\sqcap\exists\, r.B\}$,
which can be obtained from $\sigma_2$ by replacing the offending atom $\exists\, r.(A\sqcap B)$ with 
$\exists\, r.A\sqcap\exists\, r.B$, is a unifier of $\Gamma_2$ that is smaller than $\sigma_2$ w.r.t.\ $\succ$.
The unifier
$\gamma_2$ is is-minimal, and it clearly
satisfies the condition of Lemma~\ref{atoms:equiv:lem}.
\end{exa}

\subsubsection*{Proof of Lemma~\ref{atoms:equiv:lem}}
Assume that $\gamma$ is an is-minimal reduced ground unifier of $\Gamma$.
Since $\gamma$ is reduced, all atoms of $\gamma$ are reduced.
In particular, this implies that $C$ is reduced, and 
since $\gamma$ is ground, we know that $C$ is either a concept constant or an existential
restriction.

\emph{First}, assume that \emph{$C$ is of the form $A$ for a concept constant $A$}, but there is
no non-variable atom $D$ of $\Gamma$ such that $A\equiv \gamma(D)$. This simply
means that $A$ does not appear in $\Gamma$. Let $\gamma'$ be the substitution
obtained from $\gamma$ by replacing every occurrence of $A$ by $\top$. Since
equivalence in $\mathcal{EL}$ is preserved under replacing concept names by
$\top$, and since $A$ does not appear in $\Gamma$, it is easy to see that $\gamma'$
is also a unifier of $\Gamma$. However, since $\gamma \succ \gamma'$,
this contradicts our assumption that $\gamma$ is is-minimal. 

\emph{Second}, assume that 
\emph{$C$ is an existential restriction of the form $\exists\, r.C_1$}, but there is
no non-variable atom $D$ of $\Gamma$ such that $C\equiv \gamma(D)$. 
We assume that $C$ is maximal (w.r.t.\ subsumption) with this property,
i.e., for every atom $C'$ of $\gamma$ with $C\sqsubset C'$, there is a
non-variable atom $D'$ of $\Gamma$ such that $C'\equiv \gamma(D')$.
Let $D_1,\ldots, D_\ell$ be all the  non-variable atoms of $\Gamma$ with $C\sqsubseteq\gamma(D_i)$
($i = 1,\ldots,\ell$). By our assumptions on $C$, we actually have
$C\sqsubset\gamma(D_i)$ and, by Lemma~\ref{lem1}, the atom $D_i$ is also an existential restriction
$D_i = \exists\, r.D_i'$ ($i = 1,\ldots,\ell$). 
We consider the conjunction
$$
\widehat{D} := \gamma(D_1)\sqcap\ldots\sqcap\gamma(D_\ell),
$$ 
 which is $\top$ in case $\ell = 0$.

\begin{defi}
Given an $\mathcal{EL}$-concept term $F$, the concept term $F\CDh$ is obtained from 
$F$ by replacing every occurrence of $C$ (modulo $\AC$) by $\widehat{D}$. 
The substitution $\gamma\CDh$ is obtained from $\gamma$ by replacing every occurrence
of $C$ (modulo $\AC$) by $\widehat{D}$, i.e., $\gamma\CDh(X) := \gamma(X)\CDh$ for
all variables $X$.
\end{defi}

We will show in the following that $\gamma\CDh$ is a unifier of $\Gamma$ that is smaller than $\gamma$
w.r.t.\ $\succ$. This will then again contradict our assumption that $\gamma$ is is-minimal.

\begin{lem}
$\gamma \succ \gamma\CDh$.
\end{lem}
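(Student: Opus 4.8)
The goal is to show $\gamma \succ \gamma\CDh$, which by definition means two things: first, $\gamma(X) \sqsubseteq \gamma\CDh(X)$ for every variable $X$ occurring in $\Gamma$; and second, $\gamma(X) \gtris \gamma\CDh(X)$ for at least one such variable. The whole point of the construction is that $C$ is an existential restriction $\exists\, r.C_1$ that is \emph{not} of the form $\gamma(D)$ for any non-variable atom $D$ of $\Gamma$, and we replace every occurrence of $C$ (modulo $\AC$) by the strictly more general term $\widehat D = \gamma(D_1)\sqcap\ldots\sqcap\gamma(D_\ell)$. So the engine driving the argument is Lemma~\ref{monotone:greater:lem}: replacing a reduced atom by a strictly more general term in a reduced concept term yields a strictly more general result.

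**The key preliminary step.**

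The plan is to first establish the \emph{local} fact that $C \gtris \widehat D$, i.e.\ that $\widehat D$ is strictly more general than $C$. By the choice of $D_1,\ldots,D_\ell$ as exactly those non-variable atoms of $\Gamma$ with $C\sqsubset \gamma(D_i)$, we have $C\sqsubseteq \gamma(D_i)$ for each $i$, hence $C\sqsubseteq \widehat D$ by monotonicity of $\sqcap$ (Lemma~\ref{1:new:lem}(3)). For strictness we must rule out $C\equiv \widehat D$. If $\ell=0$ then $\widehat D = \top$ and $C\not\equiv\top$ since $C=\exists\, r.C_1$; if $\ell\geq 1$ then $C\equiv\widehat D$ would force $C\equiv\gamma(D_i)$ for some $i$ (each $\gamma(D_i)$ being an existential restriction $\sqsupseteq C$, and a single existential restriction subsuming the conjunction would have to equal it), contradicting the assumption that no non-variable atom $D$ of $\Gamma$ satisfies $C\equiv\gamma(D)$. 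So $C\gtris\widehat D$.

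**Applying the monotonicity lemma variable by variable.**

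Next I would argue the $\succeq$ direction. Fix a variable $X$ of $\Gamma$. Since $\gamma$ is reduced, $\gamma(X)$ is reduced, and $C$ is a reduced atom with $C\gtris\widehat D$. If $C$ occurs in $\gamma(X)$ modulo $\AC$, then Lemma~\ref{monotone:greater:lem} applies directly and gives $\gamma(X)\gtris\gamma(X)\CDh = \gamma\CDh(X)$, so in particular $\gamma(X)\sqsubseteq\gamma\CDh(X)$. If $C$ does not occur in $\gamma(X)$ modulo $\AC$, then $\gamma\CDh(X)=\gamma(X)$ and the subsumption $\gamma(X)\sqsubseteq\gamma\CDh(X)$ holds trivially. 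Either way $\gamma(X)\sqsubseteq\gamma\CDh(X)$, establishing $\gamma\succeq\gamma\CDh$.

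**Strictness and the expected obstacle.**

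Finally, for the strict part I need at least one variable $X$ with $\gamma(X)\gtris\gamma\CDh(X)$. Here I expect the main subtlety: I must confirm that $C$, being an atom of $\gamma$, really does occur modulo $\AC$ in $\gamma(X)$ for some variable $X$ of $\Gamma$ — this is exactly what ``$C$ is an atom of $\gamma$'' means by the definition of atoms of $\sigma$, so some $\gamma(X)$ contains $C$, and for that $X$ the monotonicity lemma gives the strict relation $\gamma(X)\gtris\gamma\CDh(X)$. Combined with $\gamma\succeq\gamma\CDh$ this yields $\gamma\succ\gamma\CDh$. The delicate point throughout is the interplay between syntactic occurrence and occurrence modulo $\AC$ flagged in Remark~\ref{occurrence:remark}: one must be careful that replacing occurrences of $C$ does not accidentally collapse incomparable atoms, but this is precisely the content that Lemma~\ref{monotone:greater:lem} has already packaged, so the present lemma reduces cleanly to that one plus the preliminary inequality $C\gtris\widehat D$.
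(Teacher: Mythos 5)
Your proof is correct and follows essentially the same route as the paper: first establish $C \sqsubset \widehat{D}$ (handling $\ell=0$ via $C\not\equiv\top$ and $\ell\geq 1$ via Corollary~\ref{cor2}, which would otherwise yield some $\gamma(D_i)\sqsubseteq C$ contradicting $C\sqsubset\gamma(D_i)$), and then conclude by Lemma~\ref{monotone:greater:lem}. The only difference is that you spell out the per-variable application of that lemma (including the case where $C$ does not occur in $\gamma(X)$ and the observation that $C$, being an atom of $\gamma$, occurs in at least one $\gamma(X)$), which the paper leaves implicit.
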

\begin{proof}
Obviously, $\widehat{D}$
subsumes $C$.
We claim that this subsumption relationship is actually strict.
In fact, if $\ell = 0$, then $\widehat{D} = \top$, and since $C$ is an atom, it is
not equivalent to $\top$. If $\ell \geq 1$, then 
$C = \exists\, r.C_1 \sqsupseteq  
\exists\, r.\gamma(D_1')\sqcap\ldots\sqcap\exists\, r.\gamma(D_\ell')$ would imply
(by Corollary~\ref{cor2}) that there is an $i, 1\leq i\leq \ell$, with $C_1\sqsupseteq \gamma(D_i')$.
However, this would yield $C = \exists\, r.C_1 \sqsupseteq \exists\, r.\gamma(D_i') = \gamma(D_i)$,
which contradicts the fact that $C\sqsubset\gamma(D_i)$. Thus, we have shown that
$C \sqsubset \widehat{D}$.
Lemma~\ref{monotone:greater:lem} implies that $\gamma \succ \gamma'$.
\end{proof}

To complete the proof of Lemma~\ref{atoms:equiv:lem}, it remains to show the next lemma.

\begin{lem}
$\gamma\CDh$ is a unifier of $\Gamma$.
\end{lem}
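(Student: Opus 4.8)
The goal is to verify that $\gamma\CDh(s)\equiv\gamma\CDh(t)$ for every equation $s\equiv^? t$ of the flat problem $\Gamma$. The plan is to factor this through the single-term replacement operation $\cdot\CDh$ by proving two facts: a \emph{commutation} property relating $\gamma\CDh(s)$ to $\gamma(s)\CDh$, and a \emph{compatibility} property stating that $\cdot\CDh$ respects equivalence. Once both are available, the chain $\gamma\CDh(s)\equiv\gamma(s)\CDh\equiv\gamma(t)\CDh\equiv\gamma\CDh(t)$ closes the argument, the middle equivalence coming from $\gamma(s)\equiv\gamma(t)$ (which holds because $\gamma$ is a unifier) together with compatibility.

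First I would prove the commutation property $\gamma\CDh(F)\equiv\gamma(F)\CDh$ for every flat concept term $F$ occurring as a side of an equation. The content is to locate the occurrences of $C$ modulo $\AC$ in $\gamma(F)$. Since $F$ is flat, its atoms are concept constants, variables, or existential restrictions of the form $\exists\, r.Y$ or $\exists\, r.A$ with $Y$ a variable and $A$ a constant or $\top$. Under $\gamma$, a constant atom stays a constant and hence is never $\eqAC C$; a variable atom $Y$ becomes $\gamma(Y)$, all of whose occurrences of $C$ are internal and are precisely the ones replaced when we pass to $\gamma\CDh(Y)=\gamma(Y)\CDh$; and an atom $\exists\, r.Y$ (resp.\ $\exists\, r.A$) becomes $\exists\, r.\gamma(Y)=\gamma(\exists\, r.Y)$ (resp.\ $\gamma(\exists\, r.A)$). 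A top-level occurrence of $C$ matching such an existential atom would force $C\equiv\gamma(\exists\, r.Y)$ for the \emph{non-variable} atom $\exists\, r.Y$ of $\Gamma$, contradicting the defining property of $C$; the same holds for $\exists\, r.A$. Hence no occurrence of $C$ is created at the boundary between the flat skeleton of $F$ and the substituted images, so every occurrence of $C$ in $\gamma(F)$ lies inside some $\gamma(Y)$, and replacing all of them yields exactly $\gamma\CDh(F)$. This boundary analysis is the main obstacle: it is the only place where flatness of $\Gamma$ and the choice of $C$ are genuinely needed, and it is what guarantees that substitution and replacement commute.

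Next I would establish compatibility: if $E_1\equiv E_2$, then $E_1\CDh\equiv E_2\CDh$. I would pass to reduced forms $\widehat{E_1},\widehat{E_2}$, which by Theorem~\ref{cor1} satisfy $\widehat{E_1}\eqAC\widehat{E_2}$; since $\cdot\CDh$ replaces occurrences \emph{modulo $\AC$}, it maps $\AC$-identical terms to $\AC$-identical terms, so $\widehat{E_1}\CDh\eqAC\widehat{E_2}\CDh$ and therefore $\widehat{E_1}\CDh\equiv\widehat{E_2}\CDh$. It then remains to check that reduction commutes with the replacement up to equivalence, i.e.\ $E_i\CDh\equiv\widehat{E_i}\CDh$; this follows from the fact that the reduction rules are equivalence preserving and that $C\sqsubset\widehat{D}$ (already established above), so that replacing $C$ by $\widehat{D}$ is monotone in the sense of Lemma~\ref{monotone:greater:lem} and cannot destroy the subsumptions that trigger the reduction rules.

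Finally, combining the two properties with the hypothesis $\gamma(s)\equiv\gamma(t)$ yields $\gamma\CDh(s)\equiv\gamma\CDh(t)$ for every equation of $\Gamma$, so $\gamma\CDh$ is a unifier. Here the role of the maximality of $C$ and of the specific choice $\widehat{D}=\gamma(D_1)\sqcap\ldots\sqcap\gamma(D_\ell)$ is to ensure that every non-variable atom of $\Gamma$ whose $\gamma$-image subsumes $C$ is accounted for, so that the covering relationships expressed by Corollary~\ref{cor2} between the two sides of each equation are preserved rather than broken by the replacement.
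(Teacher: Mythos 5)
Your overall architecture (a commutation step followed by a compatibility step) is reasonable, and your commutation argument is essentially the paper's: the case analysis on flat atoms, together with the observation that $C\not\equiv\gamma(E)$ for non-variable atoms $E$ of $\Gamma$, prevents replacements at the boundary between the flat skeleton and the substituted images. The gap is in the compatibility step. The claim that $\cdot\CDh$ respects equivalence (equivalently, that it ``cannot destroy the subsumptions that trigger the reduction rules'') is false for arbitrary terms, and your justification via Lemma~\ref{monotone:greater:lem} does not apply: that lemma compares a term with \emph{its own} image under the replacement ($F\gtris F\CDh$ when $C$ occurs in $F$), whereas what you need is that $A\sqsubseteq B$ implies $A\CDh\sqsubseteq B\CDh$ for two \emph{different} terms. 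To see that this can fail in general, take $A\eqAC C$ and $B$ an atom with $C\sqsubset B$ in which $C$ does not occur: then $A\CDh=\widehat{D}$ and $B\CDh=B$, and nothing forces $\widehat{D}\sqsubseteq B$ unless $B$ is equivalent to one of the conjuncts $\gamma(D_i)$ of $\widehat{D}$. Consequently your intermediate claim $E_i\CDh\equiv\widehat{E_i}\CDh$ (reduction commutes with replacement) is also false in general: for $E_1=C\sqcap B$ with such a $B$, the reduced form $\widehat{E_1}$ is $C$, yet $E_1\CDh=\widehat{D}\sqcap B\not\equiv\widehat{D}=\widehat{E_1}\CDh$.

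What saves the argument --- and what your proof is missing --- is that the implication $A\sqsubseteq B\Rightarrow A\CDh\sqsubseteq B\CDh$ is only ever needed for reduced ground atoms $B$ that are atoms of $\gamma$ or of the form $\gamma(D)$ for a non-variable atom $D$ of $\Gamma$. For those, the \emph{maximality} of $C$ guarantees that any such $B$ with $C\sqsubset B$ is equivalent to some $\gamma(D_i)$ and hence occurs (up to equivalence) as a conjunct of $\widehat{D}$, giving $\widehat{D}\sqsubseteq B\sqsubseteq B\CDh$; the remaining cases are handled by an induction on the size of $A$ that re-establishes these same hypotheses one role level down (this is the paper's Lemma~\ref{hilf:2}). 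You do gesture at this role of maximality in your closing paragraph, but it never enters your actual argument. Note also that the paper avoids your detour through reduced forms altogether: it works atom by atom, using the covering characterization of $L\sqsubseteq R$ from Lemma~\ref{1:new:lem} and applying the atom-level subsumption preservation directly, so no claim about commuting reduction with replacement is needed.
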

 %
\proof
Consider an equation in $\Gamma$ of the form
$L_1\sqcap\ldots \sqcap L_m \equiv^? R_1\sqcap\ldots\sqcap R_n$
where $L_1, \ldots, L_m$ and $R_1, \ldots, R_n$ are flat atoms, and define
$L := \gamma(L_1\sqcap\ldots \sqcap L_m)$ and $R := \gamma(R_1\sqcap\ldots\sqcap R_n)$.
We know that $L, R$ are conjunctions of atoms of the form
$L = A_1\sqcap\ldots\sqcap A_\mu$ and $R = B_1\sqcap\ldots\sqcap B_\nu$,
where each conjunct $A_1,\ldots,A_\mu, B_1,\ldots,B_\nu$
is a reduced ground atom that is either an atom of $\gamma$ or 
equal to $\gamma(E)$ for a non-variable atom $E$ of $\Gamma$. 
Since $\gamma$ is a unifier of $\Gamma$, we have $L\equiv R$.

\begin{enumerate}[(1)]
\item\label{part:one}
Since $C$ is an atom,
we obviously have $L\CDh = A_1\CDh\sqcap\ldots\sqcap A_\mu\CDh$ and $R\CDh = B_1\CDh\sqcap\ldots\sqcap B_\nu\CDh$.
Now, we show that 
$L\CDh = \gamma\CDh(L_1\sqcap\ldots\sqcap L_m)$ and $R\CDh = \gamma\CDh(R_1\sqcap\ldots\sqcap R_n)$.
We concentrate on proving the first identity since the second one can be shown analogously.
To show the first identity, it is enough to prove that 
$\gamma(L_j)\CDh = \gamma\CDh(L_j)$ holds for all $j, 1\leq j\leq m$. 
\begin{enumerate}[(a)]
\item\label{case:a}
If $L_j$ is a variable $X$, then $\gamma\CDh(X) = \gamma(X)\CDh$ holds by the definition
of $\gamma\CDh$.
\item\label{case:b}
If $L_j$ is a concept constant $A$, then $A\CDh = A$ since $C$ is an existential restriction.
Thus, we have $\gamma\CDh(A) = A = A\CDh = \gamma(A)\CDh$.
\item\label{case:c}
Otherwise, $L_j$ is an existential restriction $\exists\, r_j.L_j'$. By our assumption
on $C$, we have $C \not\equiv \gamma(L_j)$, and thus 
$\gamma(L_j)\CDh = \exists\, r_j.\left(\gamma(L_j)\CDh\right)$. In addition, we have
$\gamma\CDh(L_j) = \exists\, r_j.\gamma\CDh(L_j')$. Thus, it is enough to show
$\gamma(L_j')\CDh = \gamma\CDh(L_j')$. Since $L_j$ is a flat atom, we know that
$L_j'$ is either a concept constant, the top-concept $\top$, or a concept variable. In the first
to cases, we can show $\gamma(L_j')\CDh = \gamma\CDh(L_j')$ as in (\ref{case:b}), and in the third
case we can show this identity as in (\ref{case:a}).
\end{enumerate}

\item
Because of (\ref{part:one}), if we can prove that $L\CDh\equiv R\CDh$,
then we have shown that $\gamma\CDh$ solves the equation
$L_1\sqcap\ldots \sqcap L_m \equiv^? R_1\sqcap\ldots\sqcap R_n$.

Without loss of generality, we concentrate on showing that $L\CDh\sqsubseteq R\CDh$.
Since $L\CDh = A_1\CDh\sqcap\ldots\sqcap A_\mu\CDh$ and $R\CDh = B_1\CDh\sqcap\ldots\sqcap B_\nu\CDh$,
it is thus sufficient to show that, for every $i, 1\leq i\leq \nu$,
there exists a $j, 1\leq j\leq \mu$, such that $A_j\CDh\sqsubseteq B_i\CDh$
(see (3) of Lemma~\ref{1:new:lem}). 
Since $L = A_1\sqcap\ldots\sqcap A_\mu \sqsubseteq B_1\sqcap\ldots\sqcap B_\nu = R$ and 
$A_1,\ldots,A_\mu, B_1,\ldots, B_\nu$ are atoms, we actually know that, for every $i, 1\leq i\leq \nu$,
there exists a $j, 1\leq j\leq \mu$, such that $A_j\sqsubseteq B_i$.
Thus, it is sufficient to show that $A_j\sqsubseteq B_i$ implies
$A_j\CDh\sqsubseteq B_i\CDh$. This is an easy consequence of the next lemma
since $A_i, B_j$ satisfy the conditions of this lemma.\hfill\qed
\end{enumerate}


\begin{lem}\label{hilf:2}
Let $A, B$ be reduced ground atoms such that $B$ is an
atom of $\gamma$ or of the form $\gamma(D)$ for a non-variable atom $D$ of $\Gamma$. 
If $A\sqsubseteq B$, then
$A\CDh\sqsubseteq B\CDh$.
\end{lem}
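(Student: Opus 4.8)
\subsubsection*{Proof plan for Lemma~\ref{hilf:2}}

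The plan is to prove the implication by induction on the size of the source atom $A$, peeling off one layer at a time via the recursive subsumption characterization of Corollary~\ref{cor2}. Since $B$ is a reduced ground atom it is either a concept constant or an existential restriction, and since $A\sqsubseteq B$ with $A$ an atom, $A$ has the same top-level shape (and, in the existential case, the same leading role) by Corollary~\ref{cor2}. The recursion descends into the arguments of matching existential restrictions, which are in general conjunctions; to move between atoms and conjunctions I would repeatedly use Lemma~\ref{1:new:lem}(3), the elementary fact that a term is subsumed by a conjunction iff it is subsumed by each conjunct, and the fact that $\CDh$ distributes over $\sqcap$ (because $C$ is an atom). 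The key structural invariant I would maintain is that when $B$ is a nice atom, the argument $B'$ of $B=\exists\,s.B'$ is again a conjunction of nice atoms, so the induction hypothesis stays applicable to the children while the source strictly shrinks.

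The routine cases are as follows. If $B$ is a concept constant, then $A\sqsubseteq B$ forces $A=B$, and since $C$ is an existential restriction this constant contains no occurrence of $C$, whence $A\CDh=A=B=B\CDh$. If $B=\exists\,s.B'$ with $B\not\equiv C$, then $B\CDh=\exists\,s.(B'\CDh)$, and I split on $A$. When $A\not\eqAC C$, I write $A=\exists\,s.A'$ with $A'\sqsubseteq B'$ and reduce to $A'\CDh\sqsubseteq B'\CDh$: for each nice atom $B'_k$ of $B'$ I pick, via Lemma~\ref{1:new:lem}(3), an atom $A'_j$ of $A'$ with $A'_j\sqsubseteq B'_k$, and apply the induction hypothesis (the source $A'_j$ is strictly smaller than $A$) to get $A'_j\CDh\sqsubseteq B'_k\CDh$. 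When $A\eqAC C$ (so $A\CDh=\widehat{D}$), I would invoke the \emph{maximality} of $C$: from $C\equiv A\sqsubseteq B$ and $B\not\equiv C$ we get $C\sqsubset B$, and maximality forces $B\equiv\gamma(D_{i_0})$ for one of the atoms $D_{i_0}$ defining $\widehat{D}$; hence $A\CDh=\widehat{D}\sqsubseteq\gamma(D_{i_0})\equiv B\sqsubseteq B\CDh$, the last step by monotonicity (Lemma~\ref{monotone:greater:lem}).

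The hard part will be the case $B\equiv C$, where $B\CDh=\widehat{D}=\gamma(D_1)\sqcap\cdots\sqcap\gamma(D_\ell)=\bigsqcap_i\exists\,r.\gamma(D_i')$: here the target ``jumps up'', and a direct appeal to the induction hypothesis would only yield subsumption into the \emph{replaced} term $\gamma(D_i')\CDh$ rather than into $\gamma(D_i')$ itself, which is too weak. The observation that rescues this case is that $C$ cannot occur in $\gamma(D_i')$ at all, so that $\gamma(D_i')\CDh=\gamma(D_i')$: an occurrence of $C=\exists\,r.C_1$ would force the role depth of $\gamma(D_i')$ to be at least $1+\mathrm{rd}(C_1)$, whereas $C\sqsubset\gamma(D_i)$ gives $C_1\sqsubseteq\gamma(D_i')$ and hence $\mathrm{rd}(\gamma(D_i'))\le\mathrm{rd}(C_1)$ (subsumption can only decrease role depth, the fact underlying Proposition~\ref{well:founded1}), a contradiction. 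With each $\gamma(D_i')$ thus fixed by $\CDh$, the argument $A\eqAC C$ is immediate ($A\CDh=\widehat{D}=B\CDh$), and for $A\not\eqAC C$ I write $A=\exists\,r.A'$ with $A'\sqsubseteq C_1\sqsubseteq\gamma(D_i')$; for each nice atom $G_p$ of $\gamma(D_i')$ I choose $A'_j\sqsubseteq G_p$ and apply the induction hypothesis to obtain $A'_j\CDh\sqsubseteq G_p\CDh=G_p$. Collecting over $p$ and $i$ yields $A\CDh=\exists\,r.(A'\CDh)\sqsubseteq\bigsqcap_i\exists\,r.\gamma(D_i')=\widehat{D}=B\CDh$, completing the induction.
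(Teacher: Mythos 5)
Your plan is correct and stays close to the paper's proof: the same induction on the size of $A$, the same recursive descent through matching existential restrictions via Corollary~\ref{cor2} and Lemma~\ref{1:new:lem}(3), and the same use of the maximality of $C$ together with the definition of $D_1,\ldots,D_\ell$ to dispatch the case $A\eqAC C$. The one genuine divergence is the sub-case $B\eqAC C$, $A\not\eqAC C$. The paper stays on the $B$-side: it observes that $C$ cannot occur modulo $\AC$ in the top-level conjuncts $F_1,\ldots,F_\lambda$ of the argument of $B$ (a term cannot properly contain itself), so these conjuncts are fixed by $\CDh$; the inductive property then gives $A\CDh\sqsubseteq B$, and $B\sqsubseteq B\CDh$ finishes. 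You instead aim directly at $\widehat{D}$ and prove that $C$ cannot occur in any $\gamma(D_i')$ by a role-depth argument ($C\sqsubset\gamma(D_i)$ forces the role depth of $\gamma(D_i')$ to be at most that of $C_1$, while an occurrence of $C$ would force it to exceed it). This is a correct fact that the paper never needs, and it buys you exactly the equality $G_p\CDh=G_p$ that makes the induction hypothesis strong enough to land in $\gamma(D_i')$ itself rather than in $\gamma(D_i')\CDh$; both resolutions are sound, the paper's being marginally more economical. One point you assert but do not verify is the ``structural invariant'' that the conjuncts of the argument of a nice atom $B$ are again nice (atoms of $\gamma$ or $\gamma$-instances of non-variable atoms of $\Gamma$): this is precisely where the paper invests a paragraph, using flatness of $\Gamma$ (if $B=\gamma(D')$ with $D'$ non-ground, then $D'=\exists\,s.X$ and the conjuncts of $\gamma(X)$ are atoms of $\gamma$). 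You identify the right invariant, but the written proof must include this verification, since without it the induction hypothesis is not applicable to the pairs $(E_k,F_h)$ you descend to.
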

\proof
We show $A\CDh\sqsubseteq B\CDh$ by induction on the size of $A$. 
\begin{enumerate}[(1)]
\item
First, assume that
$A \eqAC C$, which implies that $A\CDh = \widehat{D} = \gamma(D_1)\sqcap\ldots\sqcap\gamma(D_n)$. 
\begin{enumerate}[(a)]
\item
If $B$ is of the form $B \equiv \gamma(D)$ for a non-variable atom $D$ of $\Gamma$,
then there is an  $h, 1\leq h\leq n$, such that $D = D_h$, which shows that
$A\CDh\sqsubseteq B$. 
Since $C\sqsubseteq \widehat{D}$ and the constructors of $\mathcal{EL}$ are monotone w.r.t.\ subsumption,
we also have $B\sqsubseteq B\CDh$, and thus $A\CDh\sqsubseteq B\CDh$. 
\item
Assume that $B$ is an atom of $\gamma$. If $B \eqAC C$, then $B\CDh = \widehat{D}$, and
thus $A\CDh = B\CDh$, which implies $A\CDh\sqsubseteq B\CDh$. Otherwise, since $C, B$ are reduced atoms,
$B \not\eqAC C$ implies $B \not\equiv C$. Together with $C\equiv A\sqsubseteq B$, this shows
that $C \sqsubset B$. Thus,
the maximality of $C$ implies that there is a non-variable atom $D$ of $\Gamma$ such that
$B \equiv \gamma(D)$. Thus, we are actually in case (a), which yields $A\CDh\sqsubseteq B\CDh$.
\end{enumerate}
\item
Now, assume that $A \not\eqAC C$. If there is no occurrence (modulo $\AC$) of $C$ in $A$,
then we have $A\CDh = A \sqsubseteq B \sqsubseteq B\CDh$. 

Otherwise, $A$ is of the form $A = \exists\, s.E$ and 
$C$ occurs in $E$ (modulo $\AC$). Obviously, $A\sqsubseteq B$ then implies that
$B$ is of the form $B = \exists\, s.F$ with $E\sqsubseteq F$. The concept terms $E, F$ are
conjunctions of reduced ground atoms, i.e.,
$E = E_1\sqcap\ldots\sqcap E_\kappa$ and $F = F_1\sqcap\ldots\sqcap F_\lambda$
where $E_1,\ldots,E_\kappa,F_1,\ldots,F_\lambda$ are reduced ground atoms. 
By Corollary~\ref{cor2},
for every $h, 1\leq h\leq \lambda$, there exists $k, 1\leq k\leq \kappa$ such that
$E_k\sqsubseteq F_h$. 

In order to be able to assume, by induction, that $E_k\sqsubseteq F_h$ implies
$E_k\CDh\sqsubseteq F_h\CDh$, we must show that the conditions in the statement
of the lemma hold for the concept terms $E_k, F_h$, where $E_k$ plays the r\^ole of $A$ and
$F_h$ plays the r\^ole of $B$. 
Since we already know that
$E_1,\ldots,E_\kappa,F_1,\ldots,F_\lambda$ are reduced ground atoms, it is sufficient to show that each
of the atoms
$F_1,\ldots,F_\lambda$ is an atom of $\gamma$ or of the form 
$\gamma(D)$ for a non-variable atom $D$ of $\Gamma$. We know that $B = \exists\, s.(F_1\sqcap\ldots\sqcap F_\lambda)$
is an atom of $\gamma$ or an instance (w.r.t.\ $\gamma$) of a non-variable atom of $\Gamma$.
In the first case, the atoms $F_1,\ldots,F_\lambda$ are clearly also atoms of $\gamma$.
In the second case, $B = \gamma(D')$ for a non-variable atom $D'$ of $\Gamma$. If $D'$ is a ground
atom, then $F_1,\ldots,F_\lambda$ are also ground atoms that are atoms of $\Gamma$, and thus they
are instances (w.r.t.\ $\gamma$) of non-variable atoms of $\Gamma$. Otherwise, since $\Gamma$ is
flat, $D'$ is of the form $\exists\, s.X$ for a variable $X$ and $\gamma(X) =  F_1\sqcap\ldots\sqcap F_\lambda$.
In this case, $F_1,\ldots,F_\lambda$ are clearly atoms of $\gamma$.

Thus, we can assume by induction:
$$
(*)\ \ 
\mbox{for every $h, 1\leq h\leq \lambda$, there exists 
$k, 1\leq k\leq \kappa$ such that $E_k\CDh\sqsubseteq F_h\CDh$} 
$$
It remains to show that this implies $A\CDh \sqsubseteq B\CDh$.
\begin{enumerate}[(a)]
\item
 If $B\not\eqAC C$, then $A\CDh = \exists\, s.(E_1\CDh\sqcap\ldots\sqcap E_\kappa\CDh)$
 and $B\CDh = \exists\, s.(F_1\CDh\sqcap\ldots\sqcap F_\lambda\CDh)$, and thus property $(*)$
 yields $A\CDh\sqsubseteq B\CDh$.
\item
 Assume that $B\eqAC C$. In this case, $C$ cannot occur (modulo $\AC$) in
 any of the concept terms $F_1,\ldots, F_h$, which implies that
 $B = \exists\, s.(F_1\sqcap\ldots\sqcap F_\lambda) = \exists\, s.(F_1\CDh\sqcap\ldots\sqcap F_\lambda\CDh)$. 
 Since we have $A\CDh = \exists\, s.(E_1\CDh\sqcap\ldots\sqcap E_\kappa\CDh)$, property $(*)$
 yields $A\CDh\sqsubseteq B$.
 Since 
 we also have $B\sqsubseteq B\CDh$, this yields $A\CDh\sqsubseteq B\CDh$.
\end{enumerate}
\end{enumerate}
 %
 %
 %
Thus, we have shown in all cases that $A\CDh\sqsubseteq B\CDh$, which
completes the proof of Lemma~\ref{hilf:2}.
\qed

Overall, we have thus completed the proof of Lemma~\ref{atoms:equiv:lem}. 
The next proposition is an easy consequence of this lemma.

\begin{prop}\label{main:prop}
Let $\Gamma$ be a flat $\mathcal{EL}$-unification problem and $\gamma$
an is-minimal reduced ground unifier of $\Gamma$. If $X$ is a concept variable
occurring in $\Gamma$, then $\gamma(X) \equiv \top$ or there are non-variable atoms
$D_1,\ldots, D_n$ ($n\geq 1$) of $\Gamma$ such that
$\gamma(X) \equiv \gamma(D_1)\sqcap\ldots\sqcap\gamma(D_n)$.
\end{prop}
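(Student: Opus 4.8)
The plan is to apply Lemma~\ref{atoms:equiv:lem} componentwise to the atoms making up $\gamma(X)$. Since $\gamma$ is a reduced ground unifier, $\gamma(X)$ is a reduced ground $\mathcal{EL}$-concept term, and hence a conjunction of atoms (modulo associativity and commutativity of $\sqcap$), where the empty conjunction is read as $\top$. If this conjunction is empty, then $\gamma(X)\equiv\top$ and the first alternative of the proposition holds.

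Otherwise I would write $\gamma(X) = C_1\sqcap\ldots\sqcap C_n$ with $n\geq 1$, where each $C_i$ is a reduced ground atom. By the very definition of the atoms of $\gamma$, each such $C_i$ is an atom of $\gamma$, since it occurs in $\gamma(X)$ for a variable $X$ occurring in $\Gamma$. Lemma~\ref{atoms:equiv:lem} then supplies, for each $i$, a non-variable atom $D_i$ of $\Gamma$ with $C_i\equiv\gamma(D_i)$. Combining these equivalences and using that equivalence is a congruence with respect to $\sqcap$ in $\mathcal{EL}$ yields
\[
\gamma(X) = C_1\sqcap\ldots\sqcap C_n \equiv \gamma(D_1)\sqcap\ldots\sqcap\gamma(D_n),
\]
which is exactly the second alternative of the proposition.

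There is essentially no obstacle here beyond Lemma~\ref{atoms:equiv:lem} itself, which is where all the real work resides. The only point that requires a moment's care is the observation that every atom appearing in the conjunctive decomposition of $\gamma(X)$ is genuinely an atom of $\gamma$, so that the lemma is applicable; this is immediate from the definition of the atoms of a ground unifier. The conjunction step that assembles the local equivalences into the global one is routine.
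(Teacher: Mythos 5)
Your proof is correct and follows essentially the same route as the paper: decompose $\gamma(X)$ into its top-level atoms (with the empty conjunction giving $\top$), observe that these are atoms of $\gamma$, and apply Lemma~\ref{atoms:equiv:lem} to each conjunct before reassembling by congruence of $\equiv$ with respect to $\sqcap$. No gaps.
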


\begin{proof}
If $\gamma(X)\not\equiv \top$, then it is a non-empty conjunction of atoms,
i.e., there are atoms $C_1,\ldots, C_n$ ($n\geq 1$) such that 
$\gamma(X) = C_1\sqcap\ldots\sqcap C_n$. Then $C_1,\ldots, C_n$ are atoms of
$\gamma$, and thus Lemma~\ref{atoms:equiv:lem} yields non-variable atoms
$D_1,\ldots,D_n$ of $\Gamma$ such that $C_i\equiv \gamma(D_i)$ for $i = 1,\ldots n$.
Consequently, $\gamma(X) \equiv \gamma(D_1)\sqcap\ldots\sqcap\gamma(D_n)$.
\end{proof}

This proposition suggests the following non-deterministic algorithm for deciding solvability
of a given flat $\mathcal{EL}$-unification problem.
\begin{algo}\label{algo:1}
Let $\Gamma$ be a flat $\mathcal{EL}$-unification problem.
\begin{enumerate}[(1)]
\item
  For every variable $X$ occurring in $\Gamma$, guess a finite, possibly empty,
  set $S_X$ of non-variable atoms of $\Gamma$.
\item
  We say that the variable $X$ \emph{directly depends on} the variable $Y$
  if $Y$ occurs in an atom of $S_X$. Let \emph{depends on} be the transitive
  closure of \emph{directly depends on}. If there is a variable that depends on
  itself, then the algorithm returns ``fail.'' Otherwise, there
  exists a strict linear order $>$ on the variables occurring in $\Gamma$ such
  that $X > Y$ if $X$ depends on $Y$.
\item
  We define the substitution $\sigma$ along the linear order $>$:
  \begin{enumerate}[$\bullet$]
  \item
    If $X$ is the least variable w.r.t.\ $>$, then $S_X$ does not contain
    any variables. We define $\sigma(X)$ to be the conjunction of the
    elements of $S_X$, where the empty conjunction is $\top$.
  \item
    Assume that $\sigma(Y)$ is defined for all variables $Y < X$. Then
    $S_X$ only contains variables $Y$ for which $\sigma(Y)$ is already
    defined. If $S_X$ is empty, then we define $\sigma(X) := \top$.
    Otherwise, let $S_X = \{D_1,\ldots,D_n\}$. We define
    $\sigma(X) := \sigma(D_1)\sqcap\ldots\sqcap\sigma(D_n)$.
  \end{enumerate}
\item
  Test whether the substitution $\sigma$ computed in the previous step
  is a unifier of $\Gamma$. If this is the case, then return $\sigma$;
  otherwise, return ``fail.''
\end{enumerate}
\end{algo}
This algorithm is trivially \emph{sound} since it only returns substitutions that
are unifiers of $\Gamma$. In addition, it obviously always terminates. Thus, to 
show correctness of our algorithm, it is sufficient to show that it is complete.

\begin{lem}[Completeness]
If $\Gamma$ is solvable, then there is a way of guessing in Step~1
subsets $S_X$ of the non-variable atoms of $\Gamma$ such that the \emph{depends on}
relation determined in Step~2 is acyclic and the substitution $\sigma$ 
computed in Step~3 is a unifier of $\Gamma$.
\end{lem}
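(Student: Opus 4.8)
The plan is to read off a good guess from an is-minimal reduced ground unifier. By Proposition~\ref{minimal:unif:prop}, solvability of $\Gamma$ gives us such a unifier $\gamma$. For each variable $X$ occurring in $\Gamma$ I would then apply Proposition~\ref{main:prop}: if $\gamma(X)\equiv\top$ I guess $S_X:=\emptyset$, and otherwise I guess $S_X:=\{D_1,\ldots,D_n\}$ to be the non-variable atoms of $\Gamma$ supplied by that proposition, so that $\gamma(X)\equiv\gamma(D_1)\sqcap\ldots\sqcap\gamma(D_n)$. The proof then splits into showing that (i) these guesses produce an acyclic \emph{depends on} relation in Step~2, and (ii) the substitution $\sigma$ built from them in Step~3 is a unifier of $\Gamma$.

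For (i) I would measure each variable by the role depth of its $\gamma$-image. Suppose $X$ directly depends on $Y$, i.e.\ $Y$ occurs in some atom $D\in S_X$. Since $\Gamma$ is flat and $D$ is a non-variable atom containing the variable $Y$, the only possibility is $D=\exists\, r.Y$, so $\gamma(D)=\exists\, r.\gamma(Y)$. As $\gamma(D)$ is (equivalent to) one of the conjuncts of $\gamma(X)\equiv\gamma(D_1)\sqcap\ldots\sqcap\gamma(D_n)$, we have $\gamma(X)\sqsubseteq\exists\, r.\gamma(Y)$. Using the fact (an easy consequence of Corollary~\ref{cor2}, already exploited in the proof of Proposition~\ref{well:founded1}) that $A\sqsubseteq B$ implies the role depth of $A$ is at least that of $B$, I get that the role depth of $\gamma(X)$ is at least one more than the role depth of $\gamma(Y)$, hence strictly larger. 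Thus direct dependence strictly decreases role depth; by transitivity the same holds for \emph{depends on}, so no variable can depend on itself and the relation is acyclic.

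For (ii) I would prove the stronger statement that $\sigma(X)\equiv\gamma(X)$ for every variable $X$ occurring in $\Gamma$, by induction along the linear order $>$ produced in Step~2. If $\gamma(X)\equiv\top$ then $S_X=\emptyset$ and $\sigma(X)=\top$. Otherwise $\sigma(X)=\sigma(D_1)\sqcap\ldots\sqcap\sigma(D_n)$, and since $\gamma(X)\equiv\gamma(D_1)\sqcap\ldots\sqcap\gamma(D_n)$ it suffices to check $\sigma(D_i)\equiv\gamma(D_i)$ for each $i$. Here flatness makes the case analysis trivial: if $D_i$ is a concept constant then $\sigma(D_i)=D_i=\gamma(D_i)$; if $D_i=\exists\, r.E$ with $E$ a constant or $\top$ the same holds; and if $D_i=\exists\, r.Y$ for a variable $Y$, then $X$ directly depends on $Y$, so $Y<X$ and the induction hypothesis gives $\sigma(Y)\equiv\gamma(Y)$, whence $\sigma(D_i)=\exists\, r.\sigma(Y)\equiv\exists\, r.\gamma(Y)=\gamma(D_i)$. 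Once $\sigma(X)\equiv\gamma(X)$ holds for all variables, monotonicity of the $\mathcal{EL}$ constructors (so that $\equiv$ is a congruence) yields $\sigma(C_i)\equiv\gamma(C_i)\equiv\gamma(D_i)\equiv\sigma(D_i)$ for every equation $C_i\equiv^? D_i$ of $\Gamma$; hence $\sigma$ is a unifier.

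I expect the role-depth argument for acyclicity to be the only genuinely non-routine step, since everything else is a direct induction driven by the structure of flat atoms. The point to be careful about is the role-depth monotonicity of subsumption; I would either cite it as the known fact used in Proposition~\ref{well:founded1} or include the short induction on Corollary~\ref{cor2} that establishes it.
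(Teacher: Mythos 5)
Your proposal is correct and follows essentially the same route as the paper's proof: extract an is-minimal reduced ground unifier $\gamma$, define $S_X$ via Proposition~\ref{main:prop}, establish acyclicity from the impossibility of $\gamma(X)\sqsubseteq\exists\, r_1.\cdots\exists\, r_k.\gamma(X)$ (a role-depth argument), and prove $\sigma(X)\equiv\gamma(X)$ by induction along the linear order $>$. Your phrasing of acyclicity as a strict role-depth decrease per direct dependence, and your explicit case split on the shape of the flat atoms $D_i$, are only cosmetic refinements of the paper's argument.
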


\begin{proof}
If $\Gamma$ is solvable, then it has an is-minimal reduced ground unifier $\gamma$.
By Proposition~\ref{main:prop}, for every variable $X$ occurring in
$\Gamma$ we have $\gamma(X) \equiv \top$ or there are non-variable atoms
$D_1,\ldots, D_n$ ($n\geq 1$) of $\Gamma$ such that
$\gamma(X) \equiv \gamma(D_1)\sqcap\ldots\sqcap\gamma(D_n)$.
If $\gamma(X) \equiv \top$, then we define $S_X := \emptyset$. Otherwise,
we define $S_X := \{D_1,\ldots, D_n\}$.

We show that the relation \emph{depends on} induced by these sets $S_X$ is
acyclic, i.e., there is no variable $X$ such that $X$ depends on itself.
If $X$ directly depends on $Y$, then $Y$ occurs in an element of $S_X$. Since
$S_X$ consists of non-variable atoms of the flat unification 
problem $\Gamma$, this means that there is a role name
$r$ such that $\exists\, r.Y \in S_X$. Consequently, we have 
$\gamma(X)\sqsubseteq\exists\, r.\gamma(Y)$. Thus, if $X$ depends on $X$, then
there are $k\geq 1$ role  names $r_1,\ldots,r_k$ such that
$\gamma(X)\sqsubseteq \exists\, r_1.\cdots\exists\, r_k.\gamma(X)$.
This is clearly not possible since $\gamma(X)$ cannot be subsumed by
an $\mathcal{EL}$-concept term whose role depth is larger than the role depth
of $\gamma(X)$.

To show that the substitution $\sigma$ induced by the sets $S_X$ is a unifier
of $\Gamma$, we prove that $\sigma$ is equivalent to $\gamma$, i.e.,
$\sigma(X)\equiv\gamma(X)$ holds for all variables $X$ occurring in $\Gamma$.
The substitution $\sigma$ is defined along the linear order $>$. 
If $X$ is the least variable w.r.t.\ $>$, then the elements of $S_X$ do not contain
any variables. If $S_X$ is empty, then $\sigma(X) = \top \equiv \gamma(X)$.
Otherwise, let $S_X = \{D_1,\ldots,D_n\}$. Since the atoms $D_i$ do not contain
variables, we have $D_i = \gamma(D_i)$.
Thus, the definitions of $S_X$ and of $\sigma$ yield
$\sigma(X) = D_1\sqcap\ldots\sqcap D_n = \gamma(D_1)\sqcap\ldots\sqcap\gamma(D_n) \equiv
\gamma(X)$.

Assume that $\sigma(Y)\equiv\gamma(Y)$ holds for all variables $Y < X$. 
If $S_X = \emptyset$, then we have again $\sigma(X) = \top \equiv \gamma(X)$.
Otherwise, let $S_X = \{D_1,\ldots,D_n\}$. Since the atoms $D_i$ contain
only variables that are smaller than $X$, we have $\sigma(D_i) \equiv \gamma(D_i)$
by induction. 
Thus, the definitions of $S_X$ and of $\sigma$ yield
$\sigma(X) = \sigma(D_1)\sqcap\ldots\sqcap \sigma(D_n) \equiv 
\gamma(D_1)\sqcap\ldots\sqcap\gamma(D_n) \equiv \gamma(X)$.
\end{proof}

Note that our proof of completeness actually shows that, up to equivalence, the algorithm 
returns all is-minimal reduced ground unifiers of $\Gamma$.

\begin{thm}\label{NP:complete:thm}
$\mathcal{EL}$-unification is NP-complete.
\end{thm}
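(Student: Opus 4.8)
The plan is to prove the two directions of NP-completeness separately, namely NP-hardness and membership in NP, with essentially all of the work for the upper bound already done by Algorithm~\ref{algo:1} and its completeness lemma.

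For NP-hardness I would simply observe that $\mathcal{EL}$-matching is the special case of $\mathcal{EL}$-unification in which one side of every equation is variable-free, so that every matching problem is in particular a unification problem. As recalled in the introduction, the decision problem for $\mathcal{EL}$-matching was shown to be NP-complete, hence NP-hard, in \cite{Kues01}; consequently $\mathcal{EL}$-unification is NP-hard as well.

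For membership in NP I would invoke Algorithm~\ref{algo:1}, whose soundness, termination, and completeness have been established above. Since any $\mathcal{EL}$-unification problem can be flattened in polynomial time while preserving solvability, it suffices to bound the running time on flat problems. In Step~1 the algorithm guesses, for each of the polynomially many variables, a subset of the polynomially many non-variable atoms of $\Gamma$, which is only a polynomial amount of non-deterministic information; and Step~2, which computes the \emph{depends on} relation and checks its acyclicity, is clearly polynomial.

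The main obstacle is Steps~3 and~4: naively, one would expand the recursive definition $\sigma(X) := \sigma(D_1)\sqcap\ldots\sqcap\sigma(D_n)$ along the linear order and then test $\sigma(C_i)\equiv\sigma(D_i)$ directly, but repeated substitution of nested variables can make $\sigma$ of exponential size, so this direct test is not polynomial. The resolution I would use is to never expand $\sigma$ explicitly. The guessed sets $S_X$ can be read as the acyclic TBox $\mathcal{T} := \{\, X \doteq \bigsqcap_{D\in S_X} D \mid X \text{ a variable of } \Gamma \,\}$, treating the variables as defined concepts; its acyclicity is exactly what Step~2 verifies. By the facts recalled in Section~\ref{unif:mod:TBox}, we then have $\sigma(C_i) = C_i^{\mathcal{T}}$ and $\sigma(D_i) = D_i^{\mathcal{T}}$, so that $\sigma$ is a unifier of $\Gamma$ iff $C_i \equiv_{\mathcal{T}} D_i$ holds for all $i$. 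Since subsumption, and hence equivalence, w.r.t.\ acyclic $\mathcal{EL}$-TBoxes can be decided in polynomial time \cite{Baad03e}, the test of Step~4 can be performed in polynomial time on the compact representation $\mathcal{T}$ without ever writing $\sigma$ down. Together with the polynomial-size guess this shows $\mathcal{EL}$-unification is in NP, and combined with the NP-hardness argument it is NP-complete.
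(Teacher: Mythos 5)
Your proposal is correct and follows essentially the same route as the paper: NP-hardness via the fact that $\mathcal{EL}$-matching (a special case of unification) is NP-complete, and membership in NP by running Algorithm~\ref{algo:1} with the guessed assignment represented as an acyclic TBox $\mathcal{T}_\sigma$ rather than an explicitly expanded substitution, so that the final check reduces to polynomial-time equivalence testing w.r.t.\ acyclic TBoxes. The structure-sharing idea via $\sigma(C) = C^{\mathcal{T}_\sigma}$ is exactly the paper's resolution of the exponential-blow-up issue in Steps~3 and~4.
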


\begin{proof}
NP-hardness follows from the fact that $\mathcal{EL}$-matching is NP-complete
\cite{Kues01}.\footnote{%
The NP-hardness proof in \cite{Kues01} is by reduction of SAT. This reduction
employs two concept constants and four role names. However, the roles are mainly 
used to encode several (matching) equations into a single one. When using
a set of equations rather than a single equation, one role name is sufficient.
}
To show that the problem can be decided by a non-deterministic
polynomial-time algorithm, we analyze the complexity of our algorithm.
Obviously, guessing the sets $S_X$ (Step~1) can be done within NP. 
Computing the \emph{depends on} relation and checking it for acyclicity
(Step~2) is clearly polynomial.

Steps~3 and~4 are more problematic. In fact, since a variable may occur
in different atoms of $\Gamma$, the substitution $\sigma$ computed in 
Step~3 may be of exponential size. This is actually the same reason that
makes a naive algorithm for syntactic unification compute an exponentially
large most general unifier \cite{BaSn01}. As in the case of syntactic
unification, the solution to this problem is basically structure sharing.
Instead of computing the substitution $\sigma$ explicitly, we view its definition
as an acyclic TBox. To be more precise, for every concept variable $X$
occurring in $\Gamma$, the TBox $\mathcal{T}_\sigma$ contains
the concept definition $X \doteq \top$ if $S_X = \emptyset$ and
$X\doteq D_1\sqcap\ldots\sqcap D_n$ if  $S_X = \{D_1,\ldots,D_n\}$ ($n\geq 1$).
Instead of computing $\sigma$ in Step~3, we compute $\mathcal{T}_\sigma$.
Because of the acyclicity test in Step~2, we know that $\mathcal{T}_\sigma$
is an acyclic TBox.
The size of $\mathcal{T}_\sigma$ is obviously polynomial in the size of $\Gamma$,
and thus this modified Step~3 is polynomial.

It is easy to see that applying the substitution $\sigma$ to a concept term
$C$ is the same as expanding $C$ w.r.t.\ the TBox $\mathcal{T}_\sigma$, i.e.,
$\sigma(C) = C^{\mathcal{T}_\sigma}$.
This implies that, for every equation $C\equiv^? D$ in
$\Gamma$, we have $C\equiv_{\mathcal{T}_\sigma} D$ iff $\sigma(C)\equiv\sigma(D)$.
Thus, testing in Step~4 whether $\sigma$ is a unifier of $\Gamma$ can be reduced to
testing whether $C\equiv_{\mathcal{T}_\sigma} D$ holds for every
equation $C\equiv^? D$ in $\Gamma$. Since subsumption (and thus equivalence) in $\mathcal{EL}$
w.r.t.\ acyclic TBoxes can be decided in polynomial time \cite{Baad03e},
this completes the proof of the theorem.
\end{proof}

In Subsection~\ref{unif:mod:TBox}, we have shown
 that there exists a polynomial-time 
reduction of unification modulo an acyclic TBox to unification without a TBox. Thus,
Theorem~\ref{NP:complete:thm} also yields the exact complexity for $\mathcal{EL}$-unification 
w.r.t.\ acyclic TBoxes.

\begin{cor}
$\mathcal{EL}$-unification w.r.t.\ acyclic TBoxes is NP-complete.
\end{cor}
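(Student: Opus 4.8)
The plan is to obtain this corollary directly by combining the two main results already proved, namely the polynomial-time reduction of Theorem~\ref{reduction:thm} and the NP-completeness of plain $\mathcal{EL}$-unification in Theorem~\ref{NP:complete:thm}. There is essentially no new mathematical content to develop; the only thing to check is that the reduction interacts correctly with the two complexity bounds, so I would split the argument into the upper bound (membership in NP) and the lower bound (NP-hardness) and treat each separately.

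For the \emph{upper bound}, I would argue as follows. Given an $\mathcal{EL}$-unification problem $\Gamma$ modulo an acyclic TBox $\mathcal{T}$, Theorem~\ref{reduction:thm} lets me compute, in time polynomial in the size of $\Gamma$ and $\mathcal{T}$, an ordinary $\mathcal{EL}$-unification problem (namely $\{C_1\equiv^? D_1,\ldots,C_n\equiv^? D_n\}\cup\Gamma(\mathcal{T})$) that is solvable iff $\Gamma$ is solvable modulo $\mathcal{T}$. Since Theorem~\ref{NP:complete:thm} provides a nondeterministic polynomial-time decision procedure for ordinary $\mathcal{EL}$-unification, composing the deterministic polynomial-time reduction with this NP procedure yields a nondeterministic polynomial-time procedure for unification modulo acyclic TBoxes. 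Because the class NP is closed under polynomial-time many-one reductions, this establishes membership in NP.

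For the \emph{lower bound}, I would observe that ordinary $\mathcal{EL}$-unification is simply the special case of unification modulo an acyclic TBox in which $\mathcal{T}$ is the empty TBox: an instance $\Gamma$ of plain unification is solvable iff it is solvable modulo $\emptyset$. Thus the NP-hardness of plain $\mathcal{EL}$-unification, already recorded in Theorem~\ref{NP:complete:thm}, carries over verbatim, and unification w.r.t.\ acyclic TBoxes is NP-hard as well. Combining the two bounds gives NP-completeness.

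The only point that requires a moment's care — and the closest thing to an obstacle — is making sure the reduction of Theorem~\ref{reduction:thm} really is \emph{polynomial}, since the naive route through expansion (Lemma~\ref{expansion:lem}) can cause an exponential blow-up; but this is exactly what the structure-sharing argument behind $\Gamma(\mathcal{T})$ already avoids, so I would simply cite Theorem~\ref{reduction:thm} rather than re-derive it. Everything else is a routine invocation of closure of NP under polynomial-time reductions, so I expect the proof to be only a few lines.
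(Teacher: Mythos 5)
Your proposal is correct and follows exactly the paper's own argument: membership in NP via the polynomial-time reduction of Theorem~\ref{reduction:thm} combined with Theorem~\ref{NP:complete:thm}, and NP-hardness by viewing plain $\mathcal{EL}$-unification as the special case with an empty TBox. No differences worth noting.
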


\begin{proof}
The problem is in NP since Theorem~\ref{reduction:thm} states that there is
a polynomial-time reduction of $\mathcal{EL}$-unification w.r.t.\ acyclic TBoxes
to $\mathcal{EL}$-unification, and we have just shown that $\mathcal{EL}$-unification
is in NP.

NP-hardness for $\mathcal{EL}$-unification w.r.t.\ acyclic TBoxes follows from 
NP-hardness of $\mathcal{EL}$-unification since $\mathcal{EL}$-unification can be viewed
as the special case of $\mathcal{EL}$-unification w.r.t.\ acyclic TBoxes where the TBox
is empty.
\end{proof}

\section{A goal-oriented algorithm}\label{goal:oriented:sect}

The NP-algorithm  introduced in the previous section is a typical ``guess and then test''
NP-algorithm, and thus it is unlikely that a direct implementation of this algorithm
will perform well in practice. Here, we introduce a more goal-oriented
unification algorithm for $\mathcal{EL}$, in which non-deterministic decisions are only made
if they are triggered by ``unsolved parts'' of the unification problem.

As in the previous section, we assume without loss of generality
that our input unification problem $\Gamma_0$ is flat.
For a given flat equation $C\equiv^? D$, the concept terms $C, D$ are thus
conjunctions of flat atoms. We will often view such an equation as consisting of four sets:
the left-hand side $C$ is given by the set of variables occurring in the top-level conjunction of $C$, 
together with the set of non-variable atoms occurring in this top-level conjunction;
the right-hand side $D$ is given by the set of variables occurring in the top-level conjunction of $D$, 
together with the set of non-variable atoms occurring in this top-level conjunction. 
To be more precise, let $e$ denote the equation
$C\equiv^? D$, where $C = X_1\sqcap\ldots\sqcap X_m\sqcap A_1\sqcap\ldots\sqcap A_k$ and
$D = Y_1\sqcap\ldots\sqcap Y_n\sqcap B_1\sqcap\ldots\sqcap B_\ell$ for concept variables
$X_1,\ldots,X_m,Y_1,\ldots,Y_n$ and non-variable atoms $A_1, \ldots, A_k, B_1, \ldots B_\ell$.
Then we define
$$
\begin{array}{ll}
\LV(e) := \{X_1,\ldots,X_m\}, & \RV(e) := \{Y_1,\ldots,Y_n\},\\
\LA(e) := \{A_1,\ldots,A_k\}, & \RA(e) := \{B_1, \ldots B_\ell\}.
\end{array}
$$
Obviously, the equation $e : C\equiv^? D$ is uniquely determined (up to associativity, commutativity, 
and idempotency of conjunction) by the four sets $\LV(e), \LA(e), \RV(e), \RA(e)$.
Instead of viewing an equation $e$ as being given by a pair of concept terms, we can thus also
view it as being given by these four sets. In the following, it will often be convenient to employ this
representation of equations.
If, with this point of view, we say that we \emph{add an atom} to the set $\LA(e)$  or $\RA(e)$,
then this means, for the other point of view, that we conjoin this atom to the top-level conjunction of
the left-hand side or right-hand side of the equation. In addition, if we say
that the equation $e$ 
\emph{contains} the variable $X$, then we mean that $X\in \LV(e)\cup\RV(e)$.
Similarly, if we say that the left-hand side of $e$ contains $X$, then we mean that
$X\in \LV(e)$, and if we say that the right-hand side of $e$ contains $X$, 
then we mean that $X\in \RV(e)$).\footnote{
Note that occurrences of $X$ inside non-variable atoms $\exists\, r.X\in \LA(e)\cup\RA(e)$
are not taken into consideration here.
}

In addition to the unification problem itself,
the algorithm also maintains, for every variable $X$ occurring in the input problem $\Gamma_0$,
a set $S_X$ of non-variable atoms of $\Gamma_0$. Initially, all the sets $S_X$ are empty.
We call the set $S_X$ the current assignment for $X$, and the collection of all these sets the
\emph{current assignment}. Throughout the run of our goal-oriented algorithm, we will ensure
that the current assignment is \emph{acyclic} in the sense that no variable depends on itself
w.r.t.\ this assignment (see (2) of Algorithm~\ref{algo:1}). An acyclic assignment induces
a substitution $\sigma$, as defined in (3) of Algorithm~\ref{algo:1}. We call this
substitution the \emph{current substitution}.
Initially, the current substitution maps all variables to $\top$.

The algorithm applies rules that can
\begin{enumerate}[(1)]
\item
  change an equation of the unification problem by adding non-variable
 atoms of 
  the input problem $\Gamma_0$ to one side of the equation;
\item
  introduce a new flat equation of the form $C\sqcap B\equiv B$, where $C, B$ are
  atoms of the input problem $\Gamma_0$ or $\top$;
\item
  add non-variable atoms of the input problem $\Gamma_0$ to the sets $S_X$.
\end{enumerate}
Another property that is maintained throughout the run of our algorithm is that all
equations $e$ are \emph{expanded} w.r.t.\ the current assignment in the following sense:
for all variables $X$ we have
$$
A \in S_X \wedge X\in\LV(e)\Rightarrow A\in \LA(e)\ \ \mbox{and}\ \ 
A \in S_X \wedge X\in\RV(e)\Rightarrow A\in \RA(e).
$$
Given a flat equation $e$ that contains the variable $X$, 
the \emph{expansion} of $e$ w.r.t.\ the assignment $S_X$ for $X$ is defined as follows:
if $X\in \LV(e)$ then all elements of $S_X$ are added to $\LA(e)$, and
if $X\in \RV(e)$ then all elements of $S_X$ are added to $\RA(e)$.

The following lemma is an immediate consequence of the definition of expanded equations 
and of the construction of the current substitution.

\begin{lem}\label{expanded:lem}
If the equation $C\equiv^? D$ is expanded w.r.t.\ the current assignment, then
$\LA(C\equiv^? D) = \RA(C\equiv^? D)$ implies that the current substitution $\sigma$
solves this equation, i.e., $\sigma(C) \equiv \sigma(D)$.
\end{lem}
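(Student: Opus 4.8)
The plan is to show that, under the expandedness hypothesis, the variable conjuncts on each side of the equation are redundant, so that $\sigma(C)$ and $\sigma(D)$ each collapse (up to equivalence) to the conjunction of $\sigma$ applied to the non-variable atoms on that side. Once this is established, the hypothesis $\LA(C\equiv^? D) = \RA(C\equiv^? D)$ immediately yields $\sigma(C)\equiv\sigma(D)$.

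First I would write $C = X_1\sqcap\ldots\sqcap X_m\sqcap A_1\sqcap\ldots\sqcap A_k$ with $\LV(e) = \{X_1,\ldots,X_m\}$ and $\LA(e) = \{A_1,\ldots,A_k\}$, so that $\sigma(C) = \sigma(X_1)\sqcap\ldots\sqcap\sigma(X_m)\sqcap\sigma(A_1)\sqcap\ldots\sqcap\sigma(A_k)$. The crucial step is the treatment of each $\sigma(X_i)$. By expandedness, every atom in $S_{X_i}$ already lies in $\LA(e)$, and by the construction of the current substitution (step (3) of Algorithm~\ref{algo:1}) the term $\sigma(X_i)$ is exactly the conjunction of $\sigma(A)$ over $A\in S_{X_i}$ (with $\sigma(X_i) = \top$ when $S_{X_i} = \emptyset$). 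Since $S_{X_i}\subseteq\{A_1,\ldots,A_k\}$, the term $\sigma(X_i)$ is therefore a conjunction of a subset of $\sigma(A_1),\ldots,\sigma(A_k)$, whence $\sigma(A_1)\sqcap\ldots\sqcap\sigma(A_k)\sqsubseteq\sigma(X_i)$ by (3) of Lemma~\ref{1:new:lem}. Adding a subsuming conjunct leaves the equivalence class unchanged (if $F\sqsubseteq E$ then $F\sqcap E\equiv F$), so absorbing the $\sigma(X_i)$ one at a time gives $\sigma(C)\equiv\sigma(A_1)\sqcap\ldots\sqcap\sigma(A_k)$.

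Running the identical argument on the right-hand side, using the right-hand analogue of expandedness, yields $\sigma(D)\equiv\sigma(B_1)\sqcap\ldots\sqcap\sigma(B_\ell)$, where $\RA(e) = \{B_1,\ldots,B_\ell\}$. Finally, the assumption $\LA(e) = \RA(e)$ means $\{A_1,\ldots,A_k\} = \{B_1,\ldots,B_\ell\}$ as sets, so the two conjunctions of $\sigma$-images coincide up to associativity and commutativity of $\sqcap$ and are in particular equivalent. Chaining the three equivalences gives $\sigma(C)\equiv\sigma(D)$, i.e.\ the current substitution $\sigma$ solves the equation.

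I do not expect a genuine obstacle here. The only point that needs care is the redundancy step, where one must correctly combine the recursive definition of $\sigma$ from Algorithm~\ref{algo:1} with the expandedness condition to see that each variable image $\sigma(X_i)$ is subsumed by, and hence absorbed into, the conjunction over $\LA(e)$; everything else is routine bookkeeping together with the standard fact that conjoining a subsuming term does not change a concept up to equivalence.
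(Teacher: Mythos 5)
Your proof is correct and fills in exactly the reasoning the paper leaves implicit: the paper offers no written proof, declaring the lemma ``an immediate consequence of the definition of expanded equations and of the construction of the current substitution,'' and your absorption argument---$S_{X_i}\subseteq\LA(e)$ by expandedness, so each $\sigma(X_i)$ is a conjunction of a subset of the $\sigma(A_j)$ and is therefore absorbed into $\sigma(A_1)\sqcap\ldots\sqcap\sigma(A_k)$---is precisely that consequence spelled out. No gaps; the edge cases (empty $S_{X_i}$, empty atom sets) are covered by your reading of the empty conjunction as $\top$.
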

We say that an equation $e$ is \emph{solved} if $\LA(e) = \RA(e)$. An atom 
$A\in \LA(e) \cap \RA(e)$ is called \emph{solved in $e$}; atoms $A\in \LA(e) \cup \RA(e)$ 
that are not solved in $e$ are called \emph{unsolved in $e$}. Obviously, an equation $e$ is solved iff
all atoms $A\in\LA(e) \cup \RA(e)$ are solved in $e$.

Basically, in each step, the goal-oriented algorithm considers an unsolved equation and
an unsolved atom in this equation, and tries
to solve it. Picking the unsolved equation and the unsolved atom in it
is don't care non-deterministic, i.e., there is no
need to backtrack over such a choice. Once an unsolved equation and an unsolved
atom in it was picked, don't know non-determinism comes in since there
may be several possibilities for how to solve this atom in the equation, some of which may lead to overall success 
whereas others won't. 
In some cases, however, a given equation uniquely determines the assignment for a certain
variable $X$. In this case, we make this assignment and then label the variable $X$ as \emph{finished}.
This has the effect that the set $S_X$ can no longer be extended. Initially, none of the variables
occurring in the input unification problem is labeled as finished. We say that the variable
$X$ is \emph{unfinished} if it is not labeled as finished.

%
%

\begin{algo}\label{goal:oriented:alg}
Let $\Gamma_0$ be a flat $\mathcal{EL}$-unification problem. We define $\Gamma := \Gamma_0$ and
$S_X := \emptyset$ for all variables $X$ occurring in $\Gamma_0$. None of these variables is labeled
as finished.

As long as $\Gamma$ contains an unsolved equation, do the following: 
\begin{enumerate}[(1)]
\item
  If the Eager-Assignment rule applies to some equation $e$, 
  then apply it to this equation (see Figure~\ref{eager}).
\item
  Otherwise, let $e$ be an unsolved equation and $A$ an unsolved atom in $e$. If neither 
  of the rules Decomposition (see Figure~\ref{decomp})
  and Extension (see Figure~\ref{extent}) applies to $A$ in $e$, then return ``fail.'' 
  If one of these rules applies to $A$ in $e$, then (don't know) non-deterministically choose
  one of these rules and apply it.
\end{enumerate}
Once all equations of $\Gamma$ are solved, return the substitution $\sigma$ that is
induced by the current assignment.
\end{algo}

\begin{figure}[t]
\begin{boxedminipage}[t!]{\textwidth}
\begin{center}
\begin{minipage}[t!]{.95\textwidth}
The L-variant of the Eager-Assignment rule applies to the equation $e$ if there is an
unfinished variable $X\in \LV(e)$ such that
\begin{enumerate}[$\bullet$]
\item
  all variables $Z\in (\LV(e)\setminus\{X\}) \cup\RV(e)$ are finished;
\item
  $\LA(e) = \emptyset$.
\end{enumerate}
Its application sets $S_X := \RA(e)$. 
\begin{enumerate}
\item
If this makes the current assignment cyclic, then
return ``fail.'' 
\item
Otherwise, label $X$ as finished and expand all equations containing $X$
w.r.t.\ the new assignment for $X$.
\end{enumerate}
\end{minipage}
\end{center}
  \end{boxedminipage}
\caption{The Eager-Assignment rule in its L-variant. The R-variant is obtained by exchanging the 
r\^oles of the two sides of the equation.}
\label{eager}
\end{figure}

The \emph{Eager-Assignment rule} is described in Figure~\ref{eager}. Note that, after a non-failing application
of this rule, the equation it was applied to is solved since the expansion of this equation w.r.t.\ the new
assignment for $X$ adds all elements of $\RA(e)$ to $\LA(e)$. As an example, consider the equations
$$
Y\equiv^? \top,\ \ Z\equiv^? \exists\, r.\top,\ \ X\sqcap Y \equiv^? Z, 
$$
and assume that $S_X = S_Y = S_Z = \emptyset$ and none of the three variables $X, Y, Z$ is finished.
An application of the Eager-Assignment rule to the first equation labels $Y$ as finished, but does
not change anything else. The subsequent application of the Eager-Assignment rule to the second
equation changes the assignment for $Z$ to $S_Z = \{\exists\, r.\top\}$, labels $Z$ as finished, 
and expands the second and the third equation w.r.t.\ the new assignment for $Z$. Thus, we now have the equations
$$
Y\equiv^? \top,\ \ Z\sqcap \exists\, r.\top \equiv^? \exists\, r.\top,\ \ X\sqcap Y \equiv^? Z\sqcap\exists\, r.\top.
$$
Since $Y, Z$ are finished, the Eager-Assignment rule can now be applied to the third equation.
This changes the assignment for $X$ to $S_X = \{\exists\, r.\top\}$, labels $X$ as finished, 
and adds $\exists\, r.\top$ to the left-hand side of the third equation. Now all equations are
solved. The current assignment induces a substitution $\sigma$ with
$\sigma(X) = \exists\, r.\top = \sigma(Z)$ and $\sigma(Y) = \top$, which is a unifier of the
original set of equations.

\begin{figure}[t]
\begin{boxedminipage}[t!]{\textwidth}
\begin{center}
\begin{minipage}[t!]{.95\textwidth}
The L-variant of the Decomposition rule applies to the unsolved atom $A$ in the
equation $e$ if 
\begin{enumerate}[$\bullet$]
\item
  $A\in \LA(e)\setminus\RA(e)$;
\item
  $A$ is of the form $A = \exists\, r.C$; 
\item
  there is at least one atom of the form $\exists\, r.B\in \RA(e)$.
\end{enumerate}
Its application chooses (don't know) non-deterministically an atom of the form
$\exists\, r.B\in \RA(e)$ and
\begin{enumerate}[$\bullet$]
\item
adds $\exists\, r.C$ to $\RA(e)$;
\item
creates a new equation $C\sqcap B\equiv^? B$ and expands it w.r.t.\ the assignments of
all variables contained in this equation, unless this equation has already been generated before.
If the equation has already been generated before, it is not generated again.
\end{enumerate}
\end{minipage}
\end{center}
  \end{boxedminipage}
\caption{The Decomposition rule in its L-variant. The R-variant is obtained by exchanging the 
r\^oles of the two sides of the equation.}
\label{decomp}
\end{figure}

The \emph{Decomposition rule} is described in Figure~\ref{decomp}. This rule
solves the unsolved atom $A = \exists\, r.C$ by adding it to the other side. For this to be admissible,
one needs a more specific atom $\exists\, r.B$ on that side, where the ``more specific'' is meant to
hold after application of the unifier. Thus, to ensure that the unifier $\sigma$ computed by the algorithm 
satisfies $\sigma(\exists\, r.B)\sqsubseteq\sigma(\exists\, r.C)$, the rule adds the new
equation $C\sqcap B\equiv^? B$. Obviously, if the substitution $\sigma$ solves this equation, then
it satisfies $\sigma(B)\sqsubseteq\sigma(C)$, and thus
$\sigma(\exists\, r.B)\sqsubseteq\sigma(\exists\, r.C)$.
As an example, consider the equation
$$
\exists\, r.X \sqcap  \exists\, r.A\equiv^? \exists\, r.A,
$$
and assume that $S_X = \emptyset$ and that $X$ is unfinished.
An application of the L-variant of the Decomposition rule to this equation
adds $\exists\, r.X $ to the right-hand side of this equation, and thus solves it.
In addition, it generates the new equation $X\sqcap A\equiv^? A$, which is solved. 
The current assignment induces a substitution $\sigma$ with $\sigma(X) = \top$,
which solves the original equation.

The \emph{Extension rule} is described in Figure~\ref{extent}. Basically, this rule
solves the unsolved atom $A$ by extending with this atom 
the assignment of an unfinished variable contained in the
other side of the equation. As an example, consider the equation
$$
A \sqcap \exists\, r.\top \equiv^? \exists\, r.\top \sqcap X,
$$
where $A$ is a concept constant, $S_X = \emptyset$, and $X$ is unfinished.
An application of the Extension rule to $A$ in this equation extends the
assignment for $X$ to $S_X = \{A\}$, and expands this equation by adding
$A$ to the right-hand side. The equation obtained this way is solved. The substitution
$\sigma$ induced by the current assignment replaces $X$ by $A$, and solves the original equation.

\begin{figure}[t]
\begin{boxedminipage}[t!]{\textwidth}
\begin{center}
\begin{minipage}[t!]{.95\textwidth}
The L-variant of the Extension rule applies to the unsolved atom $A$
of the equation $e$ if 
\begin{enumerate}[$\bullet$]
\item
  $A\in \LA(e)\setminus\RA(e)$;
\item
  there is at least one unfinished variable $X \in \RA(e)$
\end{enumerate}
Its application chooses (don't know) non-deterministically an unfinished variable $X \in \RA(e)$
and adds $A$ to $S_X$.
\begin{enumerate}[$\bullet$]
\item
If this makes the current assignment cyclic, then return ``fail.''
\item
Otherwise, expand all equations containing $X$
w.r.t.\ the new\\ assignment for $X$.
\end{enumerate}
\end{minipage}
\end{center}
  \end{boxedminipage}
\caption{The Extension rule in its L-variant. The R-variant is obtained by exchanging the 
r\^oles of the two sides of the equation.}
\label{extent}
\end{figure}

\begin{thm}\label{goal:oriented:thm}
Algorithm~\ref{goal:oriented:alg} is an NP-algorithm for testing solvability
of flat $\mathcal{EL}$-uni\-fi\-ca\-tion problems.
\end{thm}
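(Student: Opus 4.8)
The plan is to discharge the three obligations that together make Algorithm~\ref{goal:oriented:alg} an NP-decision procedure: \emph{soundness} (any run that halts returning a substitution returns a unifier of $\Gamma_0$), a \emph{polynomial resource bound} (every run has polynomially many steps, each doing polynomial work with polynomially many don't-know alternatives), and \emph{completeness} (if $\Gamma_0$ is solvable, then some resolution of the don't-know choices reaches a solved state). Throughout I would lean on the two invariants the algorithm is built to preserve: that the current assignment stays acyclic, so that it always induces a well-defined current substitution $\sigma$ as in (2)--(3) of Algorithm~\ref{algo:1}, and that every equation is expanded w.r.t.\ the current assignment.

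Soundness and the resource bound are the routine parts. For soundness, when the main loop halts successfully every equation $e$ satisfies $\LA(e)=\RA(e)$ and is expanded, so Lemma~\ref{expanded:lem} gives that the induced substitution solves each equation, and acyclicity guarantees it is well-defined; hence $\sigma$ is a unifier. For the bound I would introduce the monotone progress measure $\Phi := \sum_X|S_X| + \sum_e\bigl(|\LA(e)|+|\RA(e)|\bigr) + \#\{\text{equations}\} + \#\{\text{finished variables}\}$. Every atom ever inserted is a non-variable atom of the fixed problem $\Gamma_0$, so each $S_X,\LA(e),\RA(e)$ is a subset of one fixed finite set; newly created equations have the restricted shape $C\sqcap B\equiv^? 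B$ with $C,B$ atoms of $\Gamma_0$ or $\top$ and are never regenerated, so their number is quadratically bounded. Thus $\Phi$ is bounded by a polynomial in the size of $\Gamma_0$. A short case analysis shows each non-failing application strictly increases $\Phi$: Extension adds an $A\notin S_X$ to $S_X$ (it cannot already lie there, else expandedness would put it in $\RA(e)$ and $A$ would be solved), Decomposition adds $\exists\, r.C\notin\RA(e)$ to $\RA(e)$, and Eager-Assignment finishes a hitherto unfinished variable. Hence runs are polynomially long, each step (an acyclicity test, an expansion, bookkeeping) is polynomial, and each don't-know choice ranges over polynomially many atoms or variables.

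The main obstacle is completeness, which I would prove by steering the don't-know choices with a fixed solution. If $\Gamma_0$ is solvable, Proposition~\ref{minimal:unif:prop} provides an is-minimal reduced ground unifier $\gamma$, and Proposition~\ref{main:prop} attaches to each variable $X$ a set of non-variable atoms realizing $\gamma(X)$ as a conjunction of their $\gamma$-images. I would then build a guided run maintaining a two-tier invariant: $\gamma$ solves every current equation; for every \emph{unfinished} $X$ the assignment is $\gamma$-compatible, i.e.\ $\gamma(X)\sqsubseteq\gamma(E)$ for all $E\in S_X$; and every \emph{finished} $X$ is fully realized, $\sigma(X)\equiv\gamma(X)$. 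The heart of the argument is to show that at each choice point some admissible rule preserves this invariant. Given an unsolved left atom $A$ in an equation solved by $\gamma$, the subsumption characterization (Corollary~\ref{cor2}, Lemma~\ref{lem1}, and part (3) of Lemma~\ref{1:new:lem}) forces $\gamma(A)$ to be covered on the right: either by an existing $\exists\, r.B\in\RA(e)$, whence Decomposition applies and the generated equation $C\sqcap B\equiv^? B$ is again solved by $\gamma$ since $\gamma(B)\sqsubseteq\gamma(C)$; or by some $\gamma(Y)$ with $Y\in\RV(e)$, in which case $\gamma(Y)\sqsubseteq\gamma(A)$ and Extension with this $Y$ keeps $S_Y$ $\gamma$-compatible. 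Compatibility in turn rules out the cyclicity ``fail'': a self-dependence would give $\gamma(X)\sqsubseteq\exists\, r_1.\cdots\exists\, r_k.\gamma(X)$ with $k\geq1$, impossible by the role depth of $\gamma$, exactly as in the completeness proof of Algorithm~\ref{algo:1}.

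The genuinely delicate point I expect to be the crux is the Eager-Assignment rule (Figure~\ref{eager}), since it is applied \emph{eagerly} and must move $X$ into the fully-realized tier rather than merely respecting compatibility. Here I would exploit its side conditions: because $\LA(e)=\emptyset$ and all other variables of $e$ are finished, expandedness forces the finished left variables to have empty assignments (hence $\gamma$-image $\top$) and forces $S_V\subseteq\RA(e)$ for the finished right variables $V$, so that $\gamma(V)\sqsupseteq\gamma(\RA(e))$. Together with $\gamma$ solving $e$ these yield $\gamma(X)\equiv\gamma(\RA(e))$, which is precisely what licenses setting $S_X:=\RA(e)$ and declaring $X$ finished while preserving $\sigma(X)\equiv\gamma(X)$. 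Establishing these forced equalities, and checking that the purely local triggers never strand an unsolved atom --- i.e.\ that locality never loses the global structure recorded in $\gamma$ --- is where I would invest the most care, relying repeatedly on Lemma~\ref{atoms:equiv:lem} and Proposition~\ref{main:prop}. Once no unsolved equation remains, the resource bound guarantees termination and soundness delivers a unifier, completing the proof.
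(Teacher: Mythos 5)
Your termination bound and your completeness argument track the paper's own proof closely: the progress measure $\Phi$ is a packaging of the same counting argument (each rule either finishes a variable or solves an atom, and the set of possible equations and atoms is polynomially bounded), and the guided run with the three-tier invariant ($\gamma$ solves the current system, unfinished assignments are $\gamma$-compatible, finished variables are fully realized) is exactly the paper's $I_1$--$I_3$, including the correct treatment of Eager-Assignment and the role-depth argument against cyclic assignments. Those parts are fine.

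The genuine gap is in soundness, which you dismiss as routine. Lemma~\ref{expanded:lem} only gives you that the returned $\sigma$ solves the \emph{final} system $\widehat{\Gamma}$, whose equations are not the equations of $\Gamma_0$: the rules have added atoms to their sides. For expansions this is harmless (the added atom $A\in S_X$ satisfies $\sigma(X)\sqsubseteq\sigma(A)$, so the $\sigma$-image is unchanged up to equivalence), but for Decomposition it is not. If $e: D\sqcap\exists\, r.C\equiv^? E\sqcap\exists\, r.B$ was turned into $e'$ by adding $\exists\, r.C$ to the right-hand side, then $\sigma$ solving $e'$ yields $\sigma$ solving $e$ only if $\sigma(B)\sqsubseteq\sigma(C)$, and that fact has to be extracted from the generated D-equation $C\sqcap B\equiv^? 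B$ --- which is itself subsequently modified by the run, so you cannot simply read it off from $\widehat{\Gamma}$ either. The paper closes this circle by a separate induction: it defines the predecessor relation $e\rightarrow e'$ on all equations ever present, observes that transformation preserves role depth w.r.t.\ $\sigma$ and that the D-equation has strictly smaller role depth than the equation whose Decomposition step created it, and then propagates ``$\sigma$ solves it'' backwards from $\widehat{\Gamma}$ in order of increasing role depth, so that the D-equation's final descendant is already known to be solved when the Decomposition step is undone. Some argument of this shape (or an explicit invariant, maintained forward through the run, that $\sigma$-solvability of the current version of each equation implies $\sigma$-solvability of its original version) is needed; without it your proof only shows that the algorithm returns a unifier of the transformed problem, not of $\Gamma_0$.
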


First, we show that the algorithm is indeed an NP-algorithm. For this, we consider all
\emph{runs} of the algorithm, where for every (don't care) non-deterministic choice
exactly one alternative is taken. Since a single rule application can
obviously be realized in polynomial time, it is sufficient to show the following lemma.

\begin{lem}[Termination]
Every run of the algorithm terminates after a polynomial number
of rule applications.
\end{lem}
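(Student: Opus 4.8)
The plan is to show that every run halts after polynomially many rule applications by identifying a polynomial bound on the total work the algorithm can do, broken down rule-by-rule. The key observation is that the algorithm only ever manipulates a fixed finite universe of objects determined by the input $\Gamma_0$: let $N$ be the number of non-variable atoms of $\Gamma_0$ and let $V$ be the number of variables occurring in $\Gamma_0$; both are linear in the size of $\Gamma_0$. Every atom that is ever added to a set $\LA(e)$, $\RA(e)$, or $S_X$ is a non-variable atom of $\Gamma_0$, and every variable that is ever labeled finished is one of the $V$ variables of $\Gamma_0$. So the first step is to make precise that no new atoms or variables are ever created out of thin air, and that the only genuinely new objects are the equations introduced by Decomposition, which are of the restricted form $C\sqcap B\equiv^? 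B$ with $C,B$ atoms of $\Gamma_0$ or $\top$.

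The second step is to bound the number of equations. Since every Decomposition-generated equation is of the form $C\sqcap B\equiv^? B$ with $C,B$ drawn from the at most $N+1$ atoms (including $\top$), there are at most $(N+1)^2$ distinct such equations; together with the $O(\lvert\Gamma_0\rvert)$ original equations this gives a polynomial bound on the total number of equations that can ever appear. Crucially, the Decomposition rule explicitly refuses to regenerate an equation that has already been produced, so each of these equations is created at most once. The third step is then to bound, for a \emph{fixed} set of equations, how many rule applications can occur. Here I would argue that each rule application makes monotone progress against a polynomially-bounded measure. Concretely, I would consider the quantity
$$
\Phi := \sum_X \bigl(\lvert S_X\rvert + [X\text{ finished}]\bigr) + \sum_e \bigl(\lvert\LA(e)\rvert + \lvert\RA(e)\rvert\bigr),
$$
where the sum over $e$ ranges over the equations currently present. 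Each $S_X\subseteq$ (atoms of $\Gamma_0$) has at most $N$ elements and can only grow (a variable, once finished, is never unfinished again and its $S_X$ is frozen); each $\LA(e),\RA(e)$ likewise has at most $N$ elements and only grows during the life of $e$ via expansion and via the rules. Every Extension application strictly increases some $\lvert S_X\rvert$; every Eager-Assignment application strictly increases $\lvert S_X\rvert$ or at least the finished-count; every Decomposition application strictly increases some $\lvert\RA(e)\rvert$ (it adds $\exists\,r.C$ to $\RA(e)$, and this atom was previously unsolved hence absent) or creates a fresh equation.

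The main obstacle, and the place where the bound must be argued with care, is the interaction between the creation of new equations and the monotonicity of $\Phi$: creating a new equation \emph{increases} $\Phi$ by the size of its two atom-sets, so $\Phi$ is not monotone across equation-creation. The fix is to bound the two contributions separately: the total increase to $\Phi$ coming from equation creation is at most $(\text{number of equations})\times 2N$, which is polynomial by step two; and within this total budget every Decomposition, Extension, and Eager-Assignment step consumes at least one unit, so the number of such steps is polynomial. A subtlety to check is that expansion steps (triggered inside Eager-Assignment and Extension, and when a new equation is created) do not themselves count as rule applications but do increase the $\LA/\RA$ sizes; since each such increase is again capped by $N$ per side per equation, it is absorbed into the same polynomial budget. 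Putting these bounds together shows that the total number of rule applications in any run is polynomial in $\lvert\Gamma_0\rvert$, and since each application is polynomial-time, the whole run is polynomial, which is exactly what the lemma asserts.
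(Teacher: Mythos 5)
Your proof is correct and follows essentially the same route as the paper's: both rest on the observations that the number of equations ever present is polynomial (the generated equations $C\sqcap B\equiv^? B$ are determined by pairs of atoms of $\Gamma_0$ and are never regenerated) and that each rule application makes irreversible progress on a polynomially bounded monotone quantity. Your potential $\Phi$ merely packages into one measure the paper's three separate counts: finished variables for Eager-Assignment, and solved atoms per equation (equivalently, growth of the $\LA/\RA$ and $S_X$ sets) for Decomposition and Extension.
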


\begin{proof}
Each application of the Eager-Assignment rule 
finishes an unfinished variable. Thus, since finished variables never become unfinished
again, it can only be applied $k$ times, where $k$ is the number of variables
occurring in the input unification problem $\Gamma_0$. This number is clearly
linearly bounded by the size of $\Gamma_0$.

Every application of the Decomposition rule or the Extension rule turns an unsolved
atom in an equation into a solved one, and a solved atom in an equation never becomes
unsolved again in this equation. For a fixed equation, in the worst case every atom of $\Gamma_0$ may become
an unsolved atom of the equation that needs to be solved. There is, however,
only a linear number of atoms of $\Gamma_0$. Each equation considered during the
run of the algorithm is either descended from an original equation of $\Gamma_0$, or from an equation
of the form $C\sqcap B\equiv^? B$ for atoms $\exists\, r.B$ and $\exists\, r.C$ of $\Gamma_0$.
Thus, the number of equations is also polynomially bounded by the size of $\Gamma_0$.
Overall, this shows that the Decomposition rule and the Extension rule can only be
applied a polynomial number of times.
\end{proof}

Next, we show soundness of Algorithm~\ref{goal:oriented:alg}. 
We call a run of this algorithm \emph{non-failing} if it terminates with a unification
problem containing only solved equations.

\begin{lem}[Soundness]
Let $\Gamma_0$ be a flat $\mathcal{EL}$-unification problem.
The substitution $\sigma$ returned after a successful run of Algorithm~\ref{goal:oriented:alg} 
on input $\Gamma_0$ is an $\mathcal{EL}$-unifier of $\Gamma_0$.
\end{lem}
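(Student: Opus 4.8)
The plan is to exploit the invariant, maintained throughout the run, that every equation in $\Gamma$ is expanded w.r.t.\ the current assignment. Since the run is non-failing, it terminates with a $\Gamma$ in which every equation $e$ is solved, i.e.\ $\LA(e) = \RA(e)$. For the substitution $\sigma$ induced by the final assignment, Lemma~\ref{expanded:lem} then immediately yields $\sigma(C)\equiv\sigma(D)$ for every equation $C\equiv^? D$ present in the final $\Gamma$. The real work is therefore \emph{not} to solve the final equations, but to relate them back to the original problem $\Gamma_0$: the algorithm has modified the equations of $\Gamma_0$ (and introduced new ones), so I must show that solving the final equations entails solving the original ones.

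To this end I would track, for each original equation $e_0 : C_0\equiv^? D_0$ of $\Gamma_0$, its descendant $e_{\mathrm{fin}}$ in the final $\Gamma$, and prove the \emph{absorption} property $\sigma(C_0)\equiv\sigma(C_{\mathrm{fin}})$ and $\sigma(D_0)\equiv\sigma(D_{\mathrm{fin}})$, where $C_{\mathrm{fin}}, D_{\mathrm{fin}}$ are the two sides of $e_{\mathrm{fin}}$. Combined with $\sigma(C_{\mathrm{fin}})\equiv\sigma(D_{\mathrm{fin}})$ from the previous paragraph, this gives $\sigma(C_0)\equiv\sigma(D_0)$, i.e.\ $\sigma$ unifies $e_0$. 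Since every side of an equation only ever grows by the addition of atoms, it suffices to show that each atom added to a side by a rule application is \emph{redundant under $\sigma$}, meaning that $\sigma$ applied to that side is subsumed by $\sigma$ applied to the added atom; then, by monotonicity of $\sqcap$ w.r.t.\ $\sqsubseteq$, the added atom does not change the value of the side under $\sigma$.

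I would then verify redundancy rule by rule. Whenever an atom $A\in S_X$ is added to a side containing the variable $X$ (this happens in the expansion steps triggered by the Eager-Assignment and Extension rules), the construction of $\sigma$ from the acyclic assignment (step~(3) of Algorithm~\ref{algo:1}) gives $\sigma(X) = \bigsqcap_{D\in S_X}\sigma(D)\sqsubseteq\sigma(A)$, so that $\sigma$ of the side, which contains $\sigma(X)$ as a conjunct, is subsumed by $\sigma(A)$. For the Decomposition rule, the atom $\exists\, r.C$ is added to a side that already contains an atom $\exists\, r.B$, and simultaneously the equation $C\sqcap B\equiv^? B$ is created; since this new equation also belongs to the final $\Gamma$ and is hence solved by $\sigma$, we get $\sigma(B)\sqsubseteq\sigma(C)$ and therefore $\sigma(\exists\, r.B)\sqsubseteq\sigma(\exists\, r.C)$ by Corollary~\ref{cor2}, so the added atom $\exists\, r.C$ is again redundant.

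The main obstacle I expect is the Decomposition case, because there the redundancy of the newly added atom does not follow from the assignment alone but depends on the auxiliary equation $C\sqcap B\equiv^? B$ being solved at termination; I would make sure this is not circular by noting that all equations in the final $\Gamma$, including the generated ones, are solved by the same $\sigma$ via Lemma~\ref{expanded:lem}, so the subsumption $\sigma(B)\sqsubseteq\sigma(C)$ is available before it is used to justify absorption in the ancestor equation. The remaining bookkeeping---that each side indeed only grows and that absorption composes over successive additions---is routine and follows from the monotonicity facts collected in Lemma~\ref{1:new:lem}.
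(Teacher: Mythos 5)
Your overall strategy is the same as the paper's: use the ``expanded'' invariant and Lemma~\ref{expanded:lem} to get that $\sigma$ solves the final system, then argue backwards to $\Gamma_0$ by showing that every atom a rule adds to a side of an equation is redundant under $\sigma$ (the expansion additions because $\sigma(X)\sqsubseteq\sigma(A)$ for $A\in S_X$ in the final assignment, the Decomposition addition because $\sigma(B)\sqsubseteq\sigma(C)$). The expansion cases are fine as you state them.

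The genuine gap is in how you discharge the circularity you yourself flag in the Decomposition case. You claim that $\sigma(B)\sqsubseteq\sigma(C)$ is ``available'' because the generated equation $C\sqcap B\equiv^? B$ belongs to the final $\Gamma$ and is hence solved via Lemma~\ref{expanded:lem}. But that equation is generally \emph{not} present in the final system in its original form: it is itself expanded upon creation and may subsequently be modified by further rule applications (Extension, and in particular further Decompositions, since its sides can acquire unsolved existential restrictions). What Lemma~\ref{expanded:lem} gives you is that $\sigma$ solves its final \emph{descendant} $C\sqcap B\sqcap S_1\equiv^? B\sqcap S_2$, from which $\sigma(B)\sqsubseteq\sigma(C)$ only follows if the atoms in $S_2$ are themselves redundant under $\sigma$ --- i.e., you need the very absorption property, now for the auxiliary equation, whose Decomposition additions depend on yet further auxiliary equations. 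Without an extra argument this recursion could a priori be circular. The paper breaks it with a well-founded induction: it defines the role depth of an equation $e$ w.r.t.\ $\sigma$ as the role depth of $\sigma$ applied to its two sides, observes that this depth is invariant along the transformation relation $e\rightarrow e'$ (property $(**)$, which needs only $\sigma(X)\sqsubseteq\sigma(A)$ for $A\in S_X$ and the fact that Decomposition copies an atom already present on the other side), notes that the auxiliary equation $C\sqcap B\equiv^? B$ has strictly smaller depth than the equation containing $\exists\, r.C$ and $\exists\, r.B$, and then adds predecessors to the set of equations known to be solved in order of increasing depth. You need this (or an equivalent well-founded measure) to make your ``available before it is used'' claim correct; as written, the proposal assumes it.
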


\begin{proof}
First, note that the rules employed by Algorithm~\ref{goal:oriented:alg} indeed
preserve the two invariants mentioned before:
\begin{enumerate}[(1)]
\item
  the current assignment is always acyclic; 
\item
  all equations are expanded.
\end{enumerate}
In fact, whenever the current assignment is extended, the rules test acyclicity (and return
``fail,'' if it is not satisfied).  In addition, they expand all equations w.r.t.\ the
new assignment. 

Now, assume that the run of the algorithm has terminated with the $\mathcal{EL}$-unification
problem $\widehat{\Gamma}$, in which all equations are solved.
The first invariant ensures that the final assignment constructed by the run
is acyclic, and thus indeed induces a substitution $\sigma$. 
Because of the second invariant, Lemma~\ref{expanded:lem} applies, and thus
we know that $\sigma$ is a solution of $\widehat{\Gamma}$. 

It remains to show that the substitution $\sigma$ is also a solution of the input 
problem $\Gamma_0$. To this purpose, we take all the equations that were
considered during the run of the algorithm, i.e., present in $\Gamma_0$ or in any of the
other unification problems generated during the run.
Let $\mathcal{E}$ denote the set of these equations. We define the relation $\rightarrow$ on
$\mathcal{E}$ as follows: $e\rightarrow e'$ if $e$ was transformed into $e'$ using
one of the rules of Algorithm~\ref{goal:oriented:alg}. To be more precise, the 
Eager-Assignment rule transforms equations containing $X$ from the current unification
problem $\Gamma$ by expanding them w.r.t.\ the new assignment for $X$. 
The same is true for the Extension rule. 
The decomposition rule transforms an equation $e$ containing the unsolved atom 
$A = \exists\, r.C$ by adding this atom to the other side, which needs to contain an atom 
of the form $\exists\, r.B$. For this new equation $e'$, we have $e\rightarrow e'$. 
The decomposition rule may also generate a new equation $e''$ of the form $C\sqcap B\equiv^? B$
(if this equation was not generated before). However, we do not view this equation as a
successor of $e$ w.r.t.\ $\rightarrow$, i.e., we do \emph{not} have $e\rightarrow e''$. 
Equations $C\sqcap B\equiv^? B$ that are generated by an application of the decomposition 
rule are called \emph{D-equations}.
Equations that are elements of the input problem $\Gamma_0$ are called \emph{I-equations}.
Any equation $e'$ that is not an I-equation or a D-equation has a unique predecessor
w.r.t.\ $\rightarrow$, i.e., there is an equation $e \in \mathcal{E}$ such that
$e\rightarrow e'$.

Starting with the set $\mathcal{F} := \widehat{\Gamma}$ we will now step by step extend
$\mathcal{F}$ by a predecessor of an equation in $\mathcal{F}$ until no new predecessors
can be added. Since $\mathcal{E}$ is finite, this process terminates after a finite
number of steps. After termination we have $\mathcal{E} = \mathcal{F}$, and thus
in particular $\Gamma_0\subseteq \mathcal{F}$. 
This is due to the
fact that, for every element $e_0$ of $\mathcal{E}$, there are $n\geq 0$ elements
$e_1,\ldots,e_n\in \mathcal{E}$ such that $e_0\rightarrow e_1\rightarrow \ldots\rightarrow e_n$
and $e_n\in \widehat{\Gamma}$. Thus, it is enough to show that the set $\mathcal{F}$ satisfies
the following \emph{invariant}:
$$
(*)\ \ \ \mbox{\emph{the substitution $\sigma$ solves every equation in $\mathcal{F}$.}}
$$

Since $\sigma$ is a solution of $\widehat{\Gamma}$, this invariant is initially satisfied.
To prove that it is preserved under adding predecessors of equations in $\mathcal{F}$,
we start with the equations of minimal role depth. To be more precise, if the equation $e$ is of the
form $C\equiv^? D$, we define the \emph{role depth of $e$ w.r.t.\ $\sigma$} to be the
role depth\footnote{
see the proof of Proposition~\ref{well:founded1} for a definition.
}
of the concept term $\sigma(C)\sqcap\sigma(D)$. The strict order $\succ$ on $\mathcal{E}$ is
defined as follows: $e\succ_\sigma e'$ iff the role depth of $e$ w.r.t.\ $\sigma$ is 
larger than the role depth of $e'$ w.r.t.\ $\sigma$. We write $e\approx_\sigma e'$ if
$e$ and $e'$ have the same role depth w.r.t.\ $\sigma$.
The following is an easy consequence of the definition of $\sigma$ and of our rules:
$$
(**)\ \ \ e_1\rightarrow e_2 \rightarrow \ldots \rightarrow e_n\ \ \mbox{implies}\ \ 
         e_1\approx_\sigma e_2 \approx_\sigma\ldots \approx_\sigma e_n.
$$
Assume that we have already constructed a set $\mathcal{F}$ such that the invariant $(*)$
is satisfied. Let $e'$ be an equation in $\mathcal{F}$ such that
\begin{enumerate}[$\bullet$]
\item
  there is an $e\in \mathcal{E}\setminus\mathcal{F}$ with $e\rightarrow e'$;
\item
  $e'$ is of minimal role depth with this property, 
  i.e., if $f'\in \mathcal{F}$ is such that $e'\succ f'$ and $f'$ has a predecessor
  $f$ w.r.t.\  $\rightarrow$, then $f\in \mathcal{F}$.
\end{enumerate}
If no such equation $e'$ exists, then we are finished, and we have $\mathcal{E} = \mathcal{F}$.
Otherwise, let $e'$ be such an equation and $e$ its predecessor w.r.t.\  $\rightarrow$.
We add $e$ to $\mathcal{F}$. In order to show that the invariant $(*)$ is still satisfied,
we make a case distinction according to which rule was applied to $e$ to produce $e'$:
\begin{enumerate}[(1)]
 \item \emph{Eager-Assignment.} 
By an application of this rule, 
the assignment for $X$ is modified from $S_X = \emptyset$ to $S_X = \{A_1,\ldots,A_n\}$,
where $A_1, \ldots, A_n$ are non-variable atoms. In addition,
$X$ is labeled as finished. Since the assignment of a finished variable cannot be changed
anymore, we know that we also have $S_X = \{A_1,\ldots,A_n\}$ in the final assignment, and
thus $\sigma(X) = \sigma(A_1)\sqcap\ldots\sqcap\sigma(A_n)$. The rule modifies equations
as follows: all equations containing $X$ are expanded w.r.t.\ the assignment $S_X = \{A_1,\ldots,A_n\}$.
Since $e$ is transformed into $e'$ using this rule, it must contain $X$.
We assume for the sake of simplicity that $X$ is contained in the
left-hand side of $e$, but not in the right-hand side, i.e., $e$ is of the form
$C\sqcap X \equiv^? D$  and the new equation $e'\in \Gamma'$ obtained from $e$ is
$C\sqcap X \sqcap A_1\sqcap\ldots\sqcap A_n  \equiv^? D$. Since $\sigma$ solves $e'$, we have
$\sigma(D)\equiv \sigma(C\sqcap X \sqcap A_1\sqcap\ldots\sqcap A_n) \equiv
\sigma(C)\sqcap \sigma(A_1)\sqcap\ldots\sqcap\sigma(A_n)\sqcap \sigma(A_1)\sqcap\ldots\sqcap\sigma(A_n)\equiv
\sigma(C)\sqcap \sigma(A_1)\sqcap\ldots\sqcap\sigma(A_n)\equiv \sigma(C\sqcap X)$, which shows
that $\sigma$ also solves $e$.

\item \emph{Decomposition.} 
Without loss of generality, we consider the L-variant of this rule. Thus, the equation $e$
is of the form $D\sqcap\exists\, r.C\equiv^? E\sqcap\exists\, r.B$, and $e'$ is obtained from
$e$ by adding $\exists\, r.C$ to the right-hand side, i.e., $e'$ is of the form
$D\sqcap\exists\, r.C\equiv^? E\sqcap\exists\, r.B\sqcap\exists\, r.C$. We know that $\sigma$
solves $e'$. Thus, if we can show $\sigma(B)\sqsubseteq\sigma(C)$, then we have
$\sigma(D\sqcap\exists\, r.C) \equiv \sigma(E)\sqcap\sigma(\exists\, r.B)\sqcap\sigma(\exists\, r.C)\equiv
\sigma(E)\sqcap\sigma(\exists\, r.B)$, which shows that $\sigma$ solves $e$.

Consequently, it is sufficient to prove $\sigma(B)\sqsubseteq\sigma(C)$.
The Decomposition rule also generates
the equation $C\sqcap B\equiv^? B$ and expands it w.r.t.\ the assignments of all the variables
contained in this equation, unless this equation has already been generated before. Thus, either
this application or a previous one of the Decomposition rule has generated the equation
$C\sqcap B\equiv^? B$, and then expanded it (w.r.t.\ the current assignment at that time) to an 
equation $e_1$.  
Since atoms are never removed from an assignment, the atoms present in the
assignment at the time when the Decomposition rule generated the equation $C\sqcap B\equiv^? B$ are
also present in the final assignment used to define the substitution $\sigma$. Thus, if we
can show that $\sigma$ solves $e_1$, then we have also shown that $\sigma$ solves $C\sqcap B\equiv^? B$,
and thus satisfies $\sigma(B)\sqsubseteq\sigma(C)$.  
 
Since equations are never completely removed by our rules, but only modified,
there is a sequence of equations 
$e_1\rightarrow e_2 \rightarrow \ldots \rightarrow e_n$ 
such that $e_n\in \widehat{\Gamma}$. Property $(**)$ thus yields 
$e_1\approx_\sigma e_2 \approx_\sigma\ldots \approx_\sigma e_n$.
In addition, the role depth of
$C\sqcap B\equiv^? B$ w.r.t.\ $\sigma$ is the same as the role depth of $e_1$ w.r.t.\ $\sigma$.
Consequently, we have $e' \succ e_i$ for all $i, 1\leq i\leq n$. Now, assume that $e_1\not\in\mathcal{F}$.
Then there is an $i > 1$ such that $e_i\in \mathcal{F}$, but $e_{i-1}\in \mathcal{E}\setminus\mathcal{F}$.
This contradicts our assumption that $e'$ is minimal. Thus, we have shown that $e_1\in \mathcal{F}$,
and this implies that $\sigma$ solves $e_1$.

Overall, this finishes the proof that $\sigma$ solves $e$.

\item \emph{Extension.} 
By an application of this rule, the assignment for $X$ is modified by adding a non-variable 
atom $A$ to it. Since atoms are never removed from an assignment,
we know that we also have $A\in S_X$ in the final assignment, and
thus $\sigma(X) \sqsubseteq \sigma(A)$. The rule modifies equations
as follows: all equations containing $X$ are expanded w.r.t.\ the new assignment 
for $X$. Since $e$ is transformed into $e'$ using this rule, it must contain $X$.
We assume for the sake of simplicity that $X$ is contained in the
left-hand side of $e$, but not in the right-hand side, i.e., $e$ is of the form
$C\sqcap X \equiv^? D$  and the new equation $e'$ obtained from $e$ is
$C\sqcap X \sqcap A \equiv^? D$. Since $\sigma$ solves $e'$, we have
$\sigma(D)\equiv \sigma(C\sqcap X \sqcap A) \equiv
\sigma(C)\sqcap \sigma(X)\sqcap \sigma(A)\equiv
\sigma(C)\sqcap \sigma(X) \equiv \sigma(C\sqcap X)$, which shows
that $\sigma$ also solves $e$.
\end{enumerate}
To sum up, we have shown that the invariant $(*)$ is still satisfied after
adding $e$ to $\mathcal{F}$. This completes the proof of soundness of our procedure.
\end{proof}

It remains to show completeness of Algorithm~\ref{goal:oriented:alg}. Thus, assume that
the input unification problem $\Gamma_0$ is solvable. Proposition~\ref{minimal:unif:prop}
tells us that $\Gamma_0$ then has an is-minimal reduced ground unifier $\gamma$, 
and Proposition~\ref{main:prop} implies that, for every variable $X$ occurring in $\Gamma_0$,
there is a set $S_X^\gamma$ of non-variable atoms of $\Gamma_0$ such that
$$
\gamma(X) \equiv \gamma({\bigsqcap}S_X^\gamma),
$$
where, for a set of non-variable atoms $S$ of $\Gamma_0$, the expression ${\bigsqcap}S$
denotes the conjunction of the elements of $S$ (where the empty conjunction is $\top$).

\begin{lem}[Completeness]
Let $\Gamma_0$ be a flat $\mathcal{EL}$-unification problem, and assume that
$\gamma$ is an is-minimal reduced ground unifier of $\Gamma_0$.
Then there is a successful run of Algorithm~\ref{goal:oriented:alg} on input $\Gamma_0$
that returns a unifier $\sigma$ that is equivalent to $\gamma$, i.e., satisfies
$\sigma(X)\equiv\gamma(X)$ for all variables $X$ occurring in $\Gamma_0$.
\end{lem}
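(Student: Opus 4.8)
The plan is to use the is-minimal reduced ground unifier $\gamma$ as an oracle that resolves the don't-know non-determinism, and then to invoke is-minimality at the very end. By Proposition~\ref{minimal:unif:prop} such a $\gamma$ exists, and by Proposition~\ref{main:prop} we obtain for every variable $X$ a set $S_X^\gamma$ of non-variable atoms of $\Gamma_0$ with $\gamma(X)\equiv\gamma({\bigsqcap}S_X^\gamma)$. I would guide the run so as to maintain, throughout, the following bundle of invariants: (i) $\gamma$ solves every equation currently present; (ii) every atom $A$ ever placed into a set $S_X$ is $\gamma$-redundant for $X$, i.e.\ $\gamma(X)\sqsubseteq\gamma(A)$; and (iii) every variable currently labelled finished satisfies $\gamma(X)\equiv\gamma({\bigsqcap}S_X)$. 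All three hold initially, since $S_X=\emptyset$, no variable is finished, and $\gamma$ solves $\Gamma_0$.

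The core of the argument is to check that each rule, applied according to $\gamma$, preserves this bundle and is always available when an equation is unsolved. For an unsolved atom $A\in\LA(e)\setminus\RA(e)$ (the symmetric case is analogous), invariant~(i) tells us that the $\gamma$-image of the right-hand side of $e$ is subsumed by $\gamma(A)$, so by Corollary~\ref{cor2} and Lemma~\ref{1:new:lem} some top-level conjunct $P$ of that $\gamma$-image satisfies $P\sqsubseteq\gamma(A)$. If $P$ stems from a non-variable atom $\exists\,r.B\in\RA(e)$, I apply Decomposition with this $B$; the generated equation $C\sqcap B\equiv^? B$ is $\gamma$-solved because $\gamma(\exists\,r.B)\sqsubseteq\gamma(\exists\,r.C)$ yields $\gamma(B)\sqsubseteq\gamma(C)$ by Corollary~\ref{cor2}. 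If $P$ is a top-level atom of $\gamma(Y)$ for some $Y\in\RV(e)$, then $\gamma(Y)\sqsubseteq P\sqsubseteq\gamma(A)$; if $Y$ is unfinished I apply Extension, adding $A$ to $S_Y$ (which respects~(ii), since $\gamma(Y)\sqsubseteq\gamma(A)$), and if $Y$ is finished then invariant~(iii) together with the expansion invariant forces a matching atom already into $\RA(e)$, so Decomposition applies after all. Thus a guided rule is always applicable, the run never returns ``fail,'' and---since termination for every run was already established---it terminates with all equations solved.

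The most delicate rule is the forced, don't-care Eager-Assignment rule, and handling it is where I expect the main obstacle to lie. When it fires on $e$ (L-variant) it commits $S_X:=\RA(e)$ and finishes $X$, so I must show $\gamma(X)\equiv\gamma({\bigsqcap}\RA(e))$ in order to re-establish~(iii) (and hence~(ii)) for $X$. Here the preconditions are essential: since $\LA(e)=\emptyset$ and the equation is expanded, every other left variable has an empty assignment, and since the remaining variables of $e$ are finished, invariant~(iii) pins down their $\gamma$-images; combining this with the fact that the atoms of each finished right variable already lie in $\RA(e)$ (expansion invariant) collapses the $\gamma$-image of the right-hand side exactly to $\gamma({\bigsqcap}\RA(e))$, which must then equal $\gamma(X)$ because $\gamma$ solves $e$. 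Carrying out this collapse precisely, and checking that all cyclicity tests pass---which follows from~(ii) via the role-depth argument underlying Proposition~\ref{well:founded1}---is the technical heart of the proof.

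Finally, I would harvest the conclusion from is-minimality. Let $\sigma$ be the substitution induced by the final assignment; by soundness of Algorithm~\ref{goal:oriented:alg}, $\sigma$ is a unifier of $\Gamma_0$. Invariant~(ii), together with an induction along the (acyclic) dependency order of the final assignment and the monotonicity of the $\mathcal{EL}$ constructors, gives $\gamma(X)\sqsubseteq\sigma(X)$ for every variable $X$, i.e.\ $\gamma\succeq\sigma$. Since $\gamma$ is is-minimal and $\sigma$ is a unifier, $\gamma\succ\sigma$ is impossible, whence $\sigma\succeq\gamma$ as well, and therefore $\sigma(X)\equiv\gamma(X)$ for all variables $X$ occurring in $\Gamma_0$, as required.
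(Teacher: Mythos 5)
Your proposal is correct and follows essentially the same route as the paper's proof: it guides the don't-know choices by $\gamma$, maintains the same three invariants (your (ii) is an equivalent reformulation of the paper's $I_2$), derives acyclicity and $\gamma(X)\sqsubseteq\sigma(X)$ from them, and closes the argument by invoking is-minimality of $\gamma$ together with soundness. The rule-by-rule case analysis, including the treatment of finished variables and the collapse argument for Eager-Assignment, matches the paper's in all essentials.
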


\proof
The algorithm  starts with $\Gamma := \Gamma_0$ and the initial assignment
$S_X :=\emptyset$ for all variables $X$ occurring in $\Gamma_0$.
It then applies rules that change $\Gamma$ and the current assignment as
long as the problem $\Gamma$ contains an unsolved equation.

We use $\gamma$ to guide the (don't know) non-deterministic choices to be made
during the algorithm. We show that this ensures that the run of the algorithm 
generated this way \emph{does not fail} and that the following \emph{invariants} are satisfied
throughout this run:
\begin{enumerate}[(${\protect I}_1$)]
\item
   $\gamma$ is a unifier of $\Gamma$;
\item
   for all atoms $B \in S_X$ there exists an atom $A \in S_X^\gamma$ such that 
   $\gamma(A)\sqsubseteq\gamma(B)$;
\item
   for all finished variables $X$ we have $\gamma(X)\equiv\gamma({\bigsqcap}S_X)$.
\end{enumerate}
Before constructing a run that satisfies these invariants, let us point out two interesting
consequences that they have:
\begin{enumerate}[($C_1$)]
\item
  \emph{The current assignment is always acyclic.} 
  In fact, if $X$ directly depends on $Y$, then there is an atom $B\in S_X$ that has the form
  $B = \exists\, r.Y$ for some role name $r$. Invariant $I_2$ then implies that there is
  an $A \in S_X^\gamma$ such that $\gamma(X) \sqsubseteq\gamma(A)\sqsubseteq\gamma(B) = \exists\, r.\gamma(Y)$.
  Thus, if $X$ depends on $X$, then
  there are $k\geq 1$ role  names $r_1,\ldots,r_k$ such that
  $\gamma(X)\sqsubseteq \exists\, r_1.\cdots\exists\, r_k.\gamma(X)$, which is impossible.
\item
  \emph{For each variable $X$ occurring in $\Gamma_0$, we have $\gamma(X)\sqsubseteq\sigma(X)$,
  where $\sigma$ is the current substitution induced by the current assignment.}
  This is again a consequence of invariant $I_2$. Indeed, recall that the fact that the current
  assignment is acyclic implies that there is a strict linear order $>$ on the variables occurring 
  in $\Gamma$ such that $X > Y$ if $X$ depends on $Y$. The current substitution $\sigma$ is defined along
  this order. We prove $\gamma(X)\sqsubseteq\sigma(X)$ by induction on this order. 

  Consider the
  least variable $X$. If $S_X = \emptyset$, then $\sigma(X) = \top$, and thus 
  $\gamma(X)\sqsubseteq\sigma(X)$ is trivially satisfied. Otherwise, we know, for every $B\in S_X$,
  that it does not contain any variables, which implies that $\sigma(B) = B = \gamma(B) \sqsupseteq
  \gamma(A)$ for some atom $A\in S_X^\gamma$. Obviously, this yields
  $\sigma(X) = \sigma({\bigsqcap}S_X) \sqsupseteq \gamma({\bigsqcap}S_X^\gamma) = \gamma(X)$.

  Now, assume that $\gamma(Y)\sqsubseteq\sigma(Y)$ holds for all variables $Y < X$. Since the concept
  constructors of $\mathcal{EL}$ are monotone  w.r.t.\ subsumption, this implies
  $\gamma(C)\sqsubseteq\sigma(C)$ for all concept terms $C$ containing only variables smaller than $X$.
  If $S_X$ is empty, then $\sigma(X)= \top\sqsupseteq \gamma(X)$ is trivially satisfied.
  Otherwise, we know, for every $B\in S_X$, that it contains only variables smaller than $X$.
  This yields $\sigma(B)\sqsupseteq \gamma(B) \sqsupseteq \gamma(A)$ for some atom $A\in S_X^\gamma$.
  Again, this implies 
  $\sigma(X) = \sigma({\bigsqcap}S_X) \sqsupseteq \gamma({\bigsqcap}S_X^\gamma) = \gamma(X)$.
\end{enumerate}
Since $\gamma$ was assumed to be an is-minimal unifier of $\Gamma_0$, the consequence $C_2$
implies that $\sigma$ can only be a unifier of
$\Gamma_0$ if $\sigma$ is equivalent to $\gamma$. If the run has terminated successfully,
then the final substitution $\sigma$ obtained by the run is a unifier of $\Gamma_0$ (due
to soundness). Thus, in this case the computed unifier $\sigma$ is indeed equivalent
to $\gamma$.
Consequently, to prove the lemma, it is sufficient to \emph{construct a non-failing run of the algorithm that 
satisfies the above invariants.}

The invariants are initially satisfied since $\gamma$ is a unifier of $\Gamma_0$,
the initial assignment for all variables $X$ occurring in $\Gamma_0$ is $S_X = \emptyset$,
and there are no finished variables.
Now, assume that, by application of the rules of Algorithm~\ref{goal:oriented:alg}, we have 
constructed a unification problem $\Gamma$ and a current assignment such that the invariants
are satisfied. 
\begin{enumerate}[(1)]
\item
If all equations in $\Gamma$ are solved, then the run terminates
successfully, and we are done.
\item
If there is an unsolved equation to which the \emph{Eager-Assignment rule} applies, then the
algorithm picks such an equation $e$ and applies this rule to it. Without loss of
generality, we assume that the L-variant of the rule is applied.
The selected equation $e$ is of the form 
$$
X \sqcap Z_1 \sqcap \ldots \sqcap Z_k \equiv^? A_1 \sqcap \ldots \sqcap A_n \sqcap Y_1 \sqcap \ldots \sqcap Y_m,
$$
where $A_1, \dots, A_n$ are non-variable atoms, 
and $Y_1, \dots, Y_m, Z_1, \dots Z_k$ are finished variables.
Because the left-hand side of the equation does not contain any non-variable atoms, 
we know that $S_X = S_{Z_1} = \ldots = S_{Z_k} = \emptyset$ (since the algorithm keeps all equations expanded).
Since $Z_1, \dots, Z_k$ are finished, we thus have 
$\gamma(Z_1) = \ldots = \gamma(Z_k) = \top$ (by invariant $I_3$).
We also know that $S_{Y_i}\subseteq \{A_1,\ldots,A_n\}$ for all $i, 1\leq i\leq m$. Since
the variables $Y_i$ are finished, invariant $I_3$ implies that
$\gamma(Y_i)\sqsupseteq \gamma(A_1)\sqcap\ldots\sqcap\gamma(A_n)$.

The new assignment for $X$ is $S_X = \{ A_1, \ldots, A_n\}$, 
all equations containing $X$ are expanded w.r.t.\ this assignment,
and $X$ becomes a finished variable.
First, we show that $I_3$ is satisfied. Nothing has changed for the variables that
were already finished before the application of the rule. However, $X$ is now also finished.
Thus, we must show that $\gamma(X)\equiv\gamma({\bigsqcap}S_X)$. We know that $\gamma$
solves the equation $e$ (by $I_1$). This yields
$\gamma(X) \equiv \gamma(X) \sqcap \gamma(Z_1)\sqcap\ldots\sqcap\gamma(Z_k) 
\equiv \gamma(A_1)\sqcap\ldots\sqcap\gamma(A_n)\sqcap \gamma(Y_1)\sqcap\ldots\sqcap\gamma(Y_m)
\equiv \gamma(A_1)\sqcap\ldots\sqcap\gamma(A_n) = \gamma({\bigsqcap}S_X)$.
Regarding $I_2$, the only assignment that was changed is the one for $X$. 
Since the new assignment for $X$ is $S_X = \{ A_1, \ldots, A_n\}$, and we have already shown
that $\gamma(X) \equiv \gamma(A_1)\sqcap\ldots\sqcap\gamma(A_n)$, the invariant $I_2$ holds
by Corollary~\ref{cor2}. Note that this also implies that the new assignment is acyclic,
and thus the application of the Eager-Assignment rule does not fail.
Finally, consider the invariant $I_1$. The rule application modifies equations containing
$X$ by adding the atoms $A_1, \ldots, A_n$. Since $\gamma(X) \equiv \gamma(A_1)\sqcap\ldots\sqcap\gamma(A_n)$,
an equation that was solved by $\gamma$ before this modification, is also
solved by $\gamma$ after this modification. To sum up, we have shown that the application
of the Eager-Assignment rule does not fail and preserves the invariants.

\item
If there is \emph{no} unsolved equation to which the \emph{Eager-Assignment rule} applies, then the
algorithm picks an unsolved equation $e$ and an unsolved atom $A$ occurring in this equation.
We must show that we can apply either the Decomposition or the Extension rule to $A$ in $e$
such that the invariants stay satisfied. Without loss of generality, we assume that the unsolved
atom $A$ occurs on the left-hand side of the equation $e$.

\begin{enumerate}[(a)]
\item
First, assume that \emph{$A$ is an existential restriction $A = \exists\, r.C$}. 
The selected unsolved equation $e$ is thus of the form
$$
\exists\, r.C \sqcap A_1 \sqcap \ldots \sqcap A_m \equiv B_1 \sqcap \ldots \sqcap B_n,
$$
where $A_1, \dots, A_m$ and $B_1, \ldots, B_n$ are (variable or non-variable) atoms and
$\exists\, r.C\not\in \{B_1, \ldots, B_n\}$. Since $\gamma$ solves this equation (by invariant $I_1$),
Corollary~\ref{cor2} implies that there must be an $i, 1\leq i\leq n$, such that 
$\gamma(B_i) \sqsubseteq \exists\, r.\gamma(C)$.
\begin{enumerate}[(i)]
\item
  If $B_i$ is an existential restriction $B_i = \exists\, r.B$, then we have $\gamma(B)\sqsubseteq \gamma(C)$. 
  We apply the Decomposition rule to $A$ and $B_i$. The application of this rule modifies
  the equation $e$ to an equation $e'$ by adding the atom $A$ to the right-hand side.
  In addition, it generates the equation $C\sqcap B\equiv^? B$ and expands it w.r.t.\ the assignments
  of all variables contained in this equation (unless this equation has been generated before). 
  After the application of this rule, the invariants
  $I_2$ and $I_3$ are still satisfied since the current assignments and the set of finished
  variables remain unchanged. Regarding invariant $I_1$, since $\gamma$ solves $e$, it obviously
  also solves $e'$ due to the fact that $\gamma(B_i) \sqsubseteq \gamma(A)$ and $B_i$ is a conjunct
  on the right-hand side of $e$. In addition, $\gamma(B)\sqsubseteq \gamma(C)$ implies that
  $\gamma$ also solves the equation $C\sqcap B\equiv^? B$. Since invariant $I_2$ is satisfied, this 
  implies that $\gamma$ also solves the equation obtained from $C\sqcap B\equiv^? B$ by 
  expanding it w.r.t.\ the assignments of all variables contained in it.
\item
  Assume that there is no $i, 1\leq i\leq n$, such that $B_i$ is an existential restriction satisfying
  $\gamma(B_i) \sqsubseteq \gamma(A)$. Thus, if $B_i$ is such that $\gamma(B_i) \sqsubseteq \gamma(A)$,
  then we know that $B_i = X$ is a variable. We want to apply the Extension rule to $A$ and $X$.
  To be able to do this, we must first show that $X$ is not a finished variable. 

  Thus, assume that $X$ is finished, and let  
  $S_X  = \{C_1, \dots, C_\ell\}$. Invariant $I_3$ yields
  $\gamma(C_1)\sqcap\ldots\sqcap\gamma(C_\ell) = \gamma(X) = \gamma(B_i) 
   \sqsubseteq \gamma(A) = \exists\, r.\gamma(C)$, and thus there is a $j, 1\leq j\leq \ell$,
  such that $\gamma(C_j) \sqsubseteq \gamma(A)$. Since $A$ is an existential restriction, the
  non-variable atom $C_j$ must also be an existential restriction, and since the equation
  $e$ is expanded, $C_j\in S_X$ occurs on the right-hand side of this equation.
  This contradicts our assumption that there is no such existential restriction on the
  right-hand side. Thus, we have shown that $X$ is not finished, which means that we can
  apply the Extension rule to $A$ and $X$.

  The application of this rule adds the atom $A$ to the assignment for $X$, and it expands
  all equations containing $X$ w.r.t.\ this new assignment, i.e., it adds $A$ to the left-hand side
  and/or right-hand
  side of an equation whenever $X$ is contained in this side. Since we know that $\gamma(X)\sqsubseteq\gamma(A)$,
  it is easy to see that, if $\gamma$ solves an equation before this expansion, it also solves
  it after the expansion. Thus invariant $I_1$ is satisfied. Invariant $I_2$ also remains satisfied.
  In fact, if $S_{X}^\gamma = \{D_1, \dots, D_k\}$, then
  $\gamma(D_1)\sqcap\ldots\sqcap\gamma(D_k) = \gamma(X) \sqsubseteq \gamma(A)$ implies that there
  is a $j, 1\leq j\leq \ell$, such that $\gamma(D_j) \sqsubseteq \gamma(A)$. The fact that $I_2$
  is satisfied by the new assignment also implies that this new assignment is acyclic,
  and thus the application of the Extension rule does not fail. Invariant $I_3$ is still
  satisfied since $X$ is not finished, and the assignments of variables different from $X$
  were not changed.
\end{enumerate}

\item
Second, assume that \emph{$A$ is a concept name}.
The selected unsolved equation $e$ is thus of the form
$$
A \sqcap A_1 \sqcap \ldots \sqcap A_m \equiv B_1 \sqcap \ldots \sqcap B_n,
$$
where $A_1, \dots, A_m$ and $B_1, \ldots, B_n$ are (variable or non-variable) atoms, and
$A\not\in \{B_1, \ldots, B_n\}$. Since $\gamma$ solves this equation (by invariant $I_1$),
Corollary~\ref{cor2} implies that there must be an $i, 1\leq i\leq n$, such that
$\gamma(B_i) \sqsubseteq \gamma(A) = A$.
Since $A\not\in \{B_1, \ldots, B_n\}$, we know that $B_i = X$ is a variable.
We want to apply the Extension rule to $A$ and $X$.
To be able to do this, we must first show that $X$ is not a finished variable.

Thus, assume that $X$ is finished, and let $S_X  = \{C_1, \dots, C_\ell\}$. 
Invariant $I_3$ yields $\gamma(C_1)\sqcap\ldots\sqcap\gamma(C_\ell) = \gamma(X) = \gamma(B_i)
\sqsubseteq \gamma(A) = A$, and thus there is a $j, 1\leq j\leq \ell$,
such that $\gamma(C_j) \sqsubseteq A$. Since $A$ is a concept name, the
non-variable atom $C_j$ must actually be equal to $A$, and since the equation
$e$ is expanded, $C_j = A\in S_X$ occurs on the right-hand side of this equation.
This contradicts our assumption that $A$ is an unsolved atom.
Thus, we have shown that $X$ is not finished, which means that we can
apply the Extension rule to $A$ and $X$.
The application of this rule adds the atom $A$ to the assignment for $X$, and it expands
all equations containing $X$ w.r.t.\ this new assignment. The proof that this rule application
does not fail and preserves the invariants is identical to the one for the case where
$A$ was an existential restriction. \hfill \qed
\end{enumerate}
\end{enumerate}

To sum up, we have shown that Algorithm~\ref{goal:oriented:alg} always terminates
(in non-deterministic polynomial time) and that it is sound and complete. This
finishes the proof of Theorem~\ref{goal:oriented:thm}.

\section{Unification in semilattices with monotone operators}\label{eq:th:sect}

Unification problems and their types were originally not introduced for Description Logics,
but for equational theories \cite{BaSn01}.  
In this section, we show that the above
results for unification in $\mathcal{EL}$ can actually be viewed as results
for an equational theory. 
As shown in \cite{SoSt08}, the equivalence problem for
$\mathcal{EL}$-concept terms corresponds to the word problem for the equational theory
of semilattices with monotone operators. In order to define this theory, we consider
a signature $\Sigma_{\SLO}$ consisting of a binary function symbol $\wedge$, a constant symbol $1$,
and finitely many unary function symbols $f_1, \ldots, f_n$. Terms can then be built using
these symbols and additional variable symbols and free constant symbols.

\begin{defi}
The equational theory of \emph{semilattices with monotone operators} is defined
by the following identities:
$$
\begin{array}{l@{\ }l}
\SLO := &\{x\wedge(y\wedge z) = (x\wedge y)\wedge z,\ x\wedge y = y\wedge x,\ x\wedge x = x,\ x\wedge 1 = x\}\ \cup\\[.5em]
&\{f_i(x\wedge y)\wedge f_i(y) = f_i(x\wedge y) \mid  1\leq i\leq n\}
\end{array}
$$
\end{defi}

A given
$\mathcal{EL}$-concept term $C$ using only roles
$r_1,\ldots,r_n$ can be translated into a term $t_C$ over the signature $\Sigma_{\SLO}$
by replacing each concept constant $A$ by a corresponding free constant $a$, each concept
variable $X$ by a corresponding variable $x$, $\top$ by $1$, $\sqcap$ by $\wedge$,
and $\exists\, r_i$ by $f_i$. For example, the $\mathcal{EL}$-concept term
$C = A \sqcap \exists\, r_1.\top\sqcap\exists\, r_3.(X\sqcap B)$ is translated
into $t_C = a\wedge f_1(1)\wedge f_3(x\wedge b)$. Conversely, any term over the signature $\Sigma_{\SLO}$
can be translated back into an $\mathcal{EL}$-concept term.

\begin{lem}
Let $C, D$ be $\mathcal{EL}$-concept term using only roles $r_1,\ldots,r_n$.
Then $C\equiv D$ iff $t_C =_\SLO t_D$.
\end{lem}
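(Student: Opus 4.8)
The plan is to prove the two directions separately, using the standard characterization of $=_\SLO$ as the least congruence on $\Sigma_{\SLO}$-terms that contains every instance of the identities in $\SLO$, together with the fact that, by the very definition of the translation, $t_{C\sqcap D} = t_C\wedge t_D$, $t_{\exists\, r_i.C}=f_i(t_C)$, and $t_\top = 1$ (and symmetrically for the back-translation).

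For the direction $t_C =_\SLO t_D \Rightarrow C\equiv D$, I would define a relation $\sim$ on $\Sigma_{\SLO}$-terms by putting $t\sim t'$ iff the back-translations of $t$ and $t'$ are equivalent $\mathcal{EL}$-concept terms. Since the constructors $\sqcap$ and $\exists\, r$ are monotone w.r.t.\ $\sqsubseteq$, equivalence is a congruence on $\mathcal{EL}$-concept terms, and hence $\sim$ is a congruence on terms. It then suffices to check that each defining identity of $\SLO$, read back as an $\mathcal{EL}$-statement, is a valid equivalence: the semilattice axioms become associativity, commutativity and idempotence of $\sqcap$; the identity $x\wedge 1 = x$ becomes $C\sqcap\top\equiv C$; and the operator identity $f_i(x\wedge y)\wedge f_i(y)=f_i(x\wedge y)$ becomes $\exists\, r_i.(C\sqcap D)\sqcap\exists\, r_i.D\equiv\exists\, r_i.(C\sqcap D)$, which holds because $C\sqcap D\sqsubseteq D$. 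Thus $\sim$ contains every instance of every $\SLO$-identity, and since $=_\SLO$ is the least such congruence, $=_\SLO\ \subseteq\ \sim$, which is exactly the claim.

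The converse $C\equiv D\Rightarrow t_C=_\SLO t_D$ is the harder part. I would reduce it to the single claim, proved by induction on role depth, that $C\sqsubseteq D$ implies $t_C\wedge t_D =_\SLO t_C$. This suffices: from $C\equiv D$ we get $C\sqsubseteq D$ and $D\sqsubseteq C$, hence $t_C\wedge t_D=_\SLO t_C$ and $t_D\wedge t_C=_\SLO t_D$, and commutativity of $\wedge$ yields $t_C=_\SLO t_D$. To prove the claim, write $C$ and $D$ as conjunctions of concept names and existential restrictions and invoke the recursive characterization of subsumption in Corollary~\ref{cor2}. Every concept name of $D$ already occurs in $C$, and so its translated constant is absorbed into $t_C$ using idempotence and the unit axiom. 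For an existential restriction $\exists\, r_i.D_j$ of $D$, Corollary~\ref{cor2} provides a conjunct $\exists\, r_i.C_i$ of $C$ with $C_i\sqsubseteq D_j$; as $C_i,D_j$ have strictly smaller role depth, the induction hypothesis gives $t_{C_i}\wedge t_{D_j}=_\SLO t_{C_i}$, whence $f_i(t_{C_i})=_\SLO f_i(t_{C_i}\wedge t_{D_j})$, and the operator identity $f_i(x\wedge y)\wedge f_i(y)=f_i(x\wedge y)$ then yields $f_i(t_{C_i})\wedge f_i(t_{D_j})=_\SLO f_i(t_{C_i})$, so $f_i(t_{D_j})$ is absorbed into $t_C$. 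Absorbing all conjuncts of $t_D$ in this manner gives $t_C\wedge t_D=_\SLO t_C$; the base case of role depth zero reduces to pure idempotence and unit reasoning.

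The main obstacle is precisely this completeness direction: one must verify that the induction is well-founded and that the operator identity is exactly strong enough to simulate the subsumption-driven reduction rule $\exists\, r.C\sqcap\exists\, r.D\to\exists\, r.C$ (for $C\sqsubseteq D$) underlying Theorem~\ref{cor1}. Role depth is the correct induction measure, since the subterms $C_i,D_j$ appearing in the side condition $C_i\sqsubseteq D_j$ always have strictly smaller role depth than the existential restrictions containing them; this is what breaks the apparent circularity between the equivalence statement being proved and the subsumption facts used to discharge its side conditions.
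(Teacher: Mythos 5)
Your argument is correct, but there is nothing in the paper to compare it against: the paper states this lemma without proof, attributing the correspondence between $\mathcal{EL}$-equivalence and the word problem for $\SLO$ to \cite{SoSt08}. What you have written is a self-contained proof along the standard lines for such translation results, and both directions check out. The soundness direction is routine: the relation ``back-translations are equivalent'' is a congruence on $\Sigma_{\SLO}$-terms (equivalence of $\mathcal{EL}$-concept terms is preserved by $\sqcap$ and $\exists\, r_i$), it contains every substitution instance of every $\SLO$-axiom because each axiom reads back as a valid $\mathcal{EL}$-equivalence (in particular $\exists\, r_i.(C\sqcap D)\sqcap\exists\, r_i.D\equiv\exists\, r_i.(C\sqcap D)$ holds since $C\sqcap D\sqsubseteq D$), and $=_\SLO$ is the least such congruence. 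The completeness direction via the absorption claim ``$C\sqsubseteq D$ implies $t_C\wedge t_D=_\SLO t_C$'' is the right reduction, and your induction on role depth is well-founded and correctly discharged by Corollary~\ref{cor2}: the matching subterms $C_i, D_j$ have strictly smaller role depth, the induction hypothesis gives $t_{C_i}\wedge t_{D_j}=_\SLO t_{C_i}$, and instantiating the monotonicity axiom with $x:=t_{C_i}$, $y:=t_{D_j}$ yields $f_i(t_{C_i})\wedge f_i(t_{D_j})=_\SLO f_i(t_{C_i})$ exactly as you say; the ACUI axioms then absorb the remaining conjuncts. The only point worth making explicit in a final write-up is that rearranging $C$ and $D$ into the top-level-conjunction form required by Corollary~\ref{cor2} is itself justified on the term side by the associativity, commutativity, and unit axioms of $\SLO$, so no generality is lost there.
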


As an immediate consequence of this lemma, we have that unification in the
DL $\mathcal{EL}$ corresponds to unification modulo the equational theory
$\SLO$. Thus, Theorem~\ref{type:zero:thm} implies that $\SLO$ has unification
type zero, and Theorem~\ref{NP:complete:thm} implies that $\SLO$-unification
is NP-complete.

\begin{cor}
The equational theory $\SLO$ of semilattices with monotone operators
has unification type zero, and deciding solvability of an
$\SLO$-unification problem is an NP-complete problem.
\end{cor}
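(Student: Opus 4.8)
The plan is to transfer the two properties—unification type zero and NP-completeness of the decision problem—from $\mathcal{EL}$-unification to $\SLO$-unification by means of the translation $C \mapsto t_C$ introduced above, together with the back-translation of $\Sigma_{\SLO}$-terms into $\mathcal{EL}$-concept terms. The first thing I would establish is that this translation is compatible with the application of substitutions. Given a substitution $\sigma$ on concept variables, let $\widehat{\sigma}$ denote the corresponding substitution on term variables, obtained by translating each concept term $\sigma(X)$ into the term $t_{\sigma(X)}$. A straightforward structural induction on $C$ then yields the identity $t_{\sigma(C)} = \widehat{\sigma}(t_C)$, matching the clause-by-clause definition of the translation against the definition of substitution application. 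Combined with the preceding lemma (which gives $C \equiv D$ iff $t_C =_\SLO t_D$), this shows that $\sigma$ is an $\mathcal{EL}$-unifier of $\{C_i \equiv^? D_i\}$ iff $\widehat{\sigma}$ is an $\SLO$-unifier of the translated problem $\{t_{C_i} =^? t_{D_i}\}$, and conversely every $\SLO$-unification problem arises as such a translation.

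Next I would observe that this correspondence also respects the instantiation preorder. For unifiers $\sigma, \theta$, the defining condition $\theta(X) \equiv \lambda(\sigma(X))$ for $\sigma \INST \theta$ translates, via $t_{\sigma(C)} = \widehat{\sigma}(t_C)$ and equivalence-preservation, into the statement that $\widehat{\theta}(x) =_\SLO \widehat{\lambda}(\widehat{\sigma}(x))$ for all term variables $x$, i.e.\ into the assertion that $\widehat{\theta}$ is an instance of $\widehat{\sigma}$ modulo $\SLO$. Since complete sets of unifiers, minimal complete sets, and hence the unification type are defined purely in terms of unifiers and the instantiation preorder, this correspondence preserves all of them. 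In particular, the translation of the problem $\{X \sqcap \exists\, r.Y \equiv^? \exists\, r.Y\}$ from Theorem~\ref{type:zero:thm} is an $\SLO$-unification problem whose minimal complete set of unifiers would correspond to one for the original $\mathcal{EL}$-problem; as the latter does not exist, neither does the former, and thus $\SLO$ has unification type zero.

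For the complexity claim, I would note that both the forward translation $C \mapsto t_C$ and the inverse translation are computable in linear time, so solvability of an $\SLO$-unification problem is polynomially equivalent to solvability of the corresponding $\mathcal{EL}$-unification problem. Membership in NP and NP-hardness then follow directly from Theorem~\ref{NP:complete:thm}. The only genuinely delicate point in the whole argument is the substitution-compatibility identity $t_{\sigma(C)} = \widehat{\sigma}(t_C)$; once it is in place, the preservation of the instantiation preorder and of solvability is routine bookkeeping, and both corollary statements follow immediately from Theorems~\ref{type:zero:thm} and~\ref{NP:complete:thm}.
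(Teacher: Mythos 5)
Your proposal is correct and follows essentially the same route as the paper, which derives the corollary directly from the lemma stating $C\equiv D$ iff $t_C =_\SLO t_D$ together with Theorems~\ref{type:zero:thm} and~\ref{NP:complete:thm}. The paper treats the transfer as immediate and leaves implicit the details you spell out (the substitution-compatibility identity $t_{\sigma(C)} = \widehat{\sigma}(t_C)$ and the preservation of the instantiation preorder), so your write-up is simply a more explicit version of the same argument.
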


Since the unification problem introduced in Theorem~\ref{type:zero:thm}
contains only one role $r$, this is already true in the presence of
a single monotone operator.

\section{Conclusion}

In this paper, we have shown that unification in the DL $\mathcal{EL}$
is of type zero and NP-complete.
There are interesting differences between the behavior of $\mathcal{EL}$
and the closely related DL $\mathcal{FL}_0$ w.r.t.\ unification and matching. 
Though the unification types coincide for these two DLs, the complexities of the decision
problems differ: $\mathcal{FL}_0$-unification is ExpTime-complete, and thus considerably
harder than $\mathcal{EL}$-unification. In contrast, $\mathcal{FL}_0$-matching is polynomial,
and thus considerably easier than $\mathcal{EL}$-matching, which is NP-complete.
In addition to showing the complexity upper bound for $\mathcal{EL}$-unification
by a simple ``guess and then test'' NP-algorithm, we have also developed a more goal-oriented
NP-algorithm that makes (don't know) non-deterministic decisions (i.e., ones that require
backtracking) only if they are triggered by unsolved atoms in the
unification problem.

As future work, we will consider also unification of concept terms
for other members of the $\mathcal{EL}$-family of DLs \cite{BaBL05}. In addition, we
will investigate unification modulo more expressive terminological formalisms.
On the practical side, we will optimize and implement the goal-oriented  $\mathcal{EL}$-unification
algorithm developed in Section~\ref{goal:oriented:sect}. We intend to test the usefulness
of this algorithm for the purpose on finding redundancies in  $\mathcal{EL}$-based ontologies
by considering extensions of the medical ontology {\sc Snomed~ct}. 
For example, in \cite{COQ+07}, two different extensions of {\sc Snomed~ct} by so-called post-coordinated
concepts were considered. The authors used an (incomplete) equivalence test to find out how large
the overlap between the two extensions is (i.e., how many of the new concepts belonged to
both extensions). As pointed out in the introduction, the equivalence test cannot deal with
situations where different knowledge engineers use different names for concepts, 
or model on different levels of granularity. We want to find out whether using unifiability
rather than equivalence finds more cases of overlapping concepts. Of course, in the case of unification one
may also obtain false positives, i.e., pairs of concepts that are unifiable, but are not
meant to represent the same (intuitive) concept. It is also important to find out how
often this happens. Another problem to be dealt with in this application is
the development of heuristics for choosing the pairs of concepts to be tested for
unifiability and for deciding which concept names are turned into variables.


\end{document}